\newtheorem{thm}{Theorem}[section]
\newtheorem{mydef}[thm]{Definition}
\newtheorem{myprop}[thm]{Proposition}
\newtheorem{mythm}[thm]{Theorem}
\newtheorem{myassump}[thm]{Assumption}
\DeclareMathOperator*{\argmin}{arg\!\min}
\DeclareMathOperator*{\sgn}{sgn}
\DeclareMathOperator*{\diag}{diag}
\DeclareMathOperator*{\conv}{conv}
\newcommand{\R}{\ensuremath{\mathbb{R}}}
\newcommand{\N}{\ensuremath{\mathbb{N}}}
\newcommand{\norm}[1]{\lVert #1 \rVert}
\newcommand{\bignorm}[1]{\left\lVert #1 \right\rVert}
\newcommand{\ip}[2]{\ensuremath{\langle #1, #2 \rangle}}
\newcommand{\E}{\mathbb{E}}
\newcommand{\abs}[1]{\ensuremath{| #1 |}}
\newcommand{\bigabs}[1]{\ensuremath{\left| #1 \right|}}
\newcommand{\ind}{\mathbf{1}}
\renewcommand{\Pr}{\mathbb{P}}
\newcommand{\T}{\mathsf{T}}
\newcommand{\calC}{\mathcal{C}}
\newcommand{\calD}{\mathcal{D}}
\newcommand{\calE}{\mathcal{E}}
\newcommand{\calR}{\mathcal{R}}
\newcommand{\calS}{\mathcal{S}}
\newcommand{\calK}{\mathcal{K}}
\DeclareMathOperator*{\esssup}{ess\,sup}
\numberwithin{equation}{section}
\renewcommand{\geq}{\geqslant}
\renewcommand{\leq}{\leqslant}
\renewcommand{\preceq}{\preccurlyeq}
\newcommand{\opnorm}[1]{\norm{#1}_{\mathrm{op}}}
\newcommand{\fcl}{f_{\mathrm{cl}}}
\newcommand{\minimize}{\mathrm{minimize}}
\newcommand{\rmd}{\mathrm{d}}
\newcommand{\e}{\varepsilon}
\newcommand{\flow}{x}
\author[1]{Stephen Tu}
\author[2]{Alexander Robey}
\author[1]{Tingnan Zhang}
\author[1,2]{Nikolai Matni}
\affil[1]{Google Brain Robotics}
\affil[2]{Department of Electrical and Systems Engineering, University of Pennsylvania}
\date{June 8, 2021, Revised: \today}
\title{On the Sample Complexity of Stability Constrained Imitation Learning}
\begin{document}

\maketitle

\begin{abstract}
We study the following question in the context of imitation learning for continuous control: how are the underlying stability properties of an expert policy reflected in the sample-complexity of an imitation learning task?  We provide the first results showing that a surprisingly granular connection can be made between the underlying expert system's \emph{incremental gain stability}, a novel measure of robust convergence between pairs of system trajectories, and the dependency on the task horizon $T$ of the resulting generalization bounds.  In particular, we propose and analyze \emph{incremental gain stability constrained} versions of behavior cloning and a DAgger-like algorithm, and show that the resulting sample-complexity bounds naturally reflect the underlying stability properties of the expert system.  As a special case, we delineate a class of systems for which the number of trajectories needed to achieve $\varepsilon$-suboptimality is \emph{sublinear} in the task horizon $T$, and do so without requiring (strong) convexity of the loss function in the policy parameters.  Finally, we conduct numerical experiments demonstrating the validity of our insights on both a simple nonlinear system for which the underlying stability properties can be easily tuned, and on a high-dimensional quadrupedal robotic simulation.

\end{abstract}

\section{Introduction}
\label{sec:intro}

Imitation Learning (IL) techniques \citep{hussein2017imitation,osa2018algorithmic} use demonstrations of desired behavior, provided by an expert, to train a policy. IL offers many appealing advantages: it is often more sample-efficient than reinforcement learning~\citep{sun2017deeply,ross2011reduction}, and can lead to policies that are more computationally efficient to evaluate online~\citep{hertneck2018learning,yin2020imitation} than optimization-based experts.  Indeed, there is a rich body of work demonstrating the advantages of IL-based methods in a range of applications including video-game playing~\citep{ross2010efficient,ross2011reduction}, humanoid robotics~\citep{schaal1999imitation}, and self-driving cars~\citep{codevilla2018end}. 
Safe IL further seeks to provide guarantees on the stability or safety properties of policies produced by IL. 
Methods drawing on tools from Bayesian deep learning~\citep{lee2018safe,menda2017dropoutdagger,menda2018ensembledagger}, PAC-Bayes~\citep{ren2020generalization}, stability regularization~\citep{sindhwani2018learning}, or robust control~\citep{yin2020imitation,hertneck2018learning}, are able to provide varying levels of guarantees in the context of IL.

However, when applied to continuous control problems, little to no insight is given into how the underlying stability properties of the expert policy affect the sample-complexity of the resulting IL task.
In this paper, we address this gap and answer the question: what makes an expert policy easy to learn?  
Our main insight is that when an expert policy satisfies a suitable quantitative notion of robust \emph{incremental} stability, i.e., when pairs of system trajectories under the expert policy robustly converge towards each other, and when learned policies are also constrained to satisfy this property, then IL can be made provably efficient.  We formalize this insight through the notion of \emph{incremental gain stability} constrained IL algorithms, and in doing so, quantify and generalize previous observations of efficient and robust learning subject to contraction based stability constraints.

\paragraph{Related Work.} There exist a rich body of work examining the interplay between stability theory and learning dynamical systems/policies satisfying stability/safety properties from demonstrations.

\emph{Nonlinear stability and learning from demonstrations:} Our work applies tools from nonlinear stability theory to analyze the sample complexity of IL algorithms.  Concepts from nonlinear stability theory, such as Lyapunov stability or contraction theory~\citep{lohmiller1998contraction}, have also been successfully applied to learn autonomous nonlinear dynamical systems satisfying desirable properties such as (incremental) stability or controllability. 
As demonstrated empirically in~\citep{singh2020learning,lemme2014neural,ravichandar2017learning,sindhwani2018learning}, using such stability-based regularizers to trim the hypothesis space results in more data-efficient and robust learning algorithms. However, no quantitative sample-complexity bounds are provided. 

\emph{IL under covariate shift:} Vanilla IL (e.g., Behavior Cloning) is known to be sensitive to covariate shift: as soon as the learned policy deviates from the expert policy, errors begin to compound, leading the system to drift to new and possibly dangerous states \citep{pomerleau1989alvinn,ross2011reduction}.  Representative IL algorithms that address this issue include DAgger \citep{ross2011reduction} (an on-policy approach) and DART \citep{laskey2017dart} (an off-policy approach).  Both approaches seek to mitigate the effects of system drift at test-time by augmenting the data-set created by the expert: DAgger iteratively augments its data-set of trajectories with appropriately labeled and/or corrected data of the previous policy, whereas DART injects noise into the supervisor demonstrations and allows the supervisor to provide corrections as needed.
For loss functions that are strongly convex in the policy parameters, DAgger enjoys $\tilde O(T)$ sample-complexity in the task horizon $T$, and this bound degrades to $\tilde O(T^2)$ when loss functions are only convex. On the other hand, we are not aware of finite-data guarantees for DART. 
IL algorithms more explicitly focused on stability/safety leverage tools from Bayesian deep learning~\citep{menda2017dropoutdagger,menda2018ensembledagger},  model-predictive-control~\citep{hertneck2018learning}, robust control~\citep{yin2020imitation}, and PAC-Bayes~\citep{ren2020generalization}. 
While the approach, generality, and strength of guarantees provided by the aforementioned works vary, none provide insight as to how the stability properties of the expert affect the sample complexity of the corresponding IL task.

\paragraph{Contributions.} To provide fine-grained insights into the relationship between system stability and sample-complexity, we first define and analyze the notion of incremental gain stability (IGS) for a nonlinear dynamical system.  IGS provides a quantitative measure of robust convergence between system trajectories, that in our context strictly expands on the guarantees provided by contraction theory~\citep{lohmiller1998contraction} by allowing for a graceful degradation away from exponential convergence rates.

We then propose and analyze the sample-complexity properties of IGS-constrained imitation learning algorithms, and show that the graceful degradation in stability translates into a corresponding degradation of generalization bounds by linking nonlinear stability and statistical learning theory.  In particular, we show that when imitating an IGS expert policy, IGS-constrained behavior cloning requires $m \gtrsim  q \cdot T^{2a(1-1/a^2)} \cdot \e^{-2a}$
trajectories to achieve imitation loss bounded by $\varepsilon$, where $T$ is the task horizon, $q$ is the effective number of parameters of the function class for the learned policy, and $a \in [1,\infty)$ is an IGS parameter determined by the expert policy.  We show that $a=1$ for contracting systems, leading to \emph{task-horizon independent} bounds scaling as $m \gtrsim {q}/{\varepsilon^{2}}$.
Furthermore, we construct a simple family of systems 
where the IGS parameter $a$ satisfies
$a = 1+p$ for $p\in (0,\infty)$.
This yields sample-complexity that scales as $m \gtrsim q \cdot T^{\nicefrac{2p(2+p)}{1+p}} \cdot \varepsilon^{-2(1+p)}$, which makes clear that an increase/decrease in $p$ yields a corresponding increase/decrease in sample-complexity. 

Motivated by the empirical success and widespread adoption of DAgger and DAgger-like algorithms, we also extend our analysis to an IGS-constrained DAgger-like algorithm.  We show that this algorithm enjoys comparable stability dependent sample-complexity guarantees, requiring $m \gtrsim q \cdot T^{2a^2\left(1-1/a\right)\left(1+1/a+3/2a^2\right)} \cdot \e^{-2a^2}$ trajectories to achieve $\varepsilon$-bounded imitation loss, again recovering time-independent bounds for contracting systems that gracefully degrade when applied to our family of systems satisfying $a=1+p$.  

Together, our results are the first to delineating a class of systems where the sample-complexity bounds for imitation learning
scale sublinearly in the task-horizon $T$, and do so without requiring (strong) convexity of the loss function in the policy parameters.
We conclude by demonstrating the validity of our theoretical results on (a) our simple family of nonlinear systems for which the underlying IGS properties can be quantitatively tuned, and (b) a high-dimensional nonlinear quadrupedal robotic system. Empirically, we find that the sample-complexity scaling predicted by the underlying stability properties of the expert policy are indeed observed in practice.

\section{Problem Statement}
\label{sec:problem_statement}

We consider the following discrete-time  dynamical system:
\begin{align}
 x_{t+1} = f(x_t, u_t), \quad x_t \in \R^n, \:\: u_t \in \R^d. \label{eq:control_affine}
\end{align}
Let $\flow_t(\xi, \{u_t\}_{t \geq 0})$ denote the state $x_t$ of the 
dynamics \eqref{eq:control_affine} with input signal $\{u_t\}_{t \geq 0}$
and initial condition $x_0 = \xi$.
For a policy $\pi : \R^n \rightarrow \R^d$, let $\flow_t^\pi(\xi)$
denote $x_t$ when $u_t = \pi(x_t)$.
Let $X \subseteq \R^{n}$ be a compact set and let $T \in \N_+$ be the time-horizon over which 
we are interested in the behavior of \eqref{eq:control_affine}.  
We generate trajectories by drawing random initial conditions
from a distribution $\calD$ over $X$.

We fix an expert policy $\pi_\star : \R^n \rightarrow \R^d$
which we wish to imitate.
The quality of our imitation is measured through 
the following \emph{imitation loss}:
\begin{align}
    \ell_{\pi'}(\xi;\pi_1,\pi_2) := \sum_{t=0}^{T-1} \norm{\Delta_{\pi_1,\pi_2}(\flow_t^{\pi'}(\xi))}_2, \quad \Delta_{\pi_1,\pi_2}(x) := \pi_1(x) - \pi_2(x).
\end{align}
The imitation loss function $\ell_{\pi'}(\xi;\pi_1,\pi_2)$ keeps a running tally of the difference $\Delta_{\pi_1,\pi_2}(\flow_t^{\pi'}(\xi))$ of how actions taken by policies $\pi_1$ and $\pi_2$ enter the system \eqref{eq:control_affine} when the system is evolving under policy $\pi'$ starting from initial condition $x_0=\xi$.

We now formally state the problem considered in this paper. Fix a known
system \eqref{eq:control_affine}, and pick a
tolerance $\varepsilon >0 $ and failure probability $\delta\in (0,1)$. Our goal is to design and analyze imitation learning algorithms that produce a policy $\hat{\pi}$ using $m = m(\varepsilon, \delta)$ trajectories of length $T$ seeded from initial conditions $\{\xi_i\}_{i=1}^m \sim{} \mathcal{D}^m$, such that with probability at least $1-\delta$, the learned policy $\hat\pi$ induces a state/input trajectory distribution that satisfies
$\E_{\xi \sim \calD} \ell_{\hat\pi}(\xi;\hat\pi, \pi_\star) \leq \varepsilon$.
Crucially, we seek to understand how the underlying stability properties of the expert policy $\pi_\star$ manifest themselves in the number of 
required trajectories $m(\varepsilon,\delta)$.

We note that bounding the imitation loss 
has immediate implications on the e.g., safety, stability, and
performance of the learned policy $\hat{\pi}$.
Concretely, let 
$h : \R^{n \times T} \rightarrow \R^s$
denote an $L_h$-Lipschitz observable function of a trajectory: examples of valid observable functions include Lyapunov and barrier inequalities and semi-algebraic constraints on the state or state-feedback policy. 
Then,
\begin{align}
    \E_{\xi\sim\calD} \norm{ h_{\pi_\star}(\xi) - h_{\hat{\pi}}(\xi) }_2 \leq L_h\E_{\xi\sim\calD} {\sum_{t=1}^{T}} \norm{\flow_t^{\pi_\star}(\xi) - \flow_t^{\hat{\pi}}(\xi) }_2, \label{eq:observable_upper_bound}
\end{align}
where $h_\pi(\xi) := h( \{\flow_t^\pi(\xi)\}_{t=0}^{T} )$.
We will subsequently see how
the term $\sum_{t=1}^{T} \norm{\flow_t^{\pi_\star}(\xi) - \flow_t^{\hat{\pi}}(\xi) }_2$ can be upper bounded
by the imitation loss $\ell_{\hat{\pi}}(\xi;\hat{\pi},\pi_\star)$,
and thus bounds on the imitation loss imply bounds on the
deviations between the observables $h_{\pi_\star}$ and $h_{\hat{\pi}}$.

\section{Incremental Gain Stability}
\label{sec:inc_gain_stability}

The crux of our analysis relies on a property which we call 
\emph{incremental gain stability (IGS)}. Before formally defining IGS, we motivate the need for a quantitative characterization of convergence rates between system trajectories.
A key quantity that repeatedly appears in our analysis is the following
sum of trajectory discrepancy induced by policies $\pi_1$ and $\pi_2$:
\begin{align}
    \mathsf{disc}_T(\xi; \pi_1, \pi_2) := {\sum_{t=1}^{T}} \norm{\flow_t^{\pi_1}(\xi) - \flow_t^{\pi_2}(\xi)}_2. \label{eq:delta_traj}
\end{align}
We already saw this quantity appear naturally in \eqref{eq:observable_upper_bound}.
Furthermore, we will reduce analyzing the performance of behavior cloning and our DAgger-like algorithm to bounding the discrepancy \eqref{eq:delta_traj} between trajectories induced by the expert policy $\pi_\star$ and a learned policy $\hat\pi$.

The simplest way to bound \eqref{eq:delta_traj} is to use a discrete-time
version of Gr{\"{o}}nwall's inequality: if the map $f$ defining
\eqref{eq:control_affine} is $L_f$-Lipschitz, 
in addition to the policies $\pi_1$ and $\pi_2$ 
being $L_\pi$-Lipschitz, then (assuming $L_f L_\pi \gg 1$) we can upper bound the discrepancy \eqref{eq:delta_traj} by:
\begin{align}
    \mathsf{disc}_T(\xi; \pi_1,\pi_2) \lesssim (L_f L_\pi)^T \cdot \ell_{\pi_1}(\xi; \pi_1, \pi_2). \label{eq:gronwall_estimate}
\end{align}
This bound formalizes the intuition that the discrepancy \eqref{eq:delta_traj}
scales in proportion to the deviation between policies $\pi_1$ and $\pi_2$, summed
along the trajectory.
Unfortunately, this bound is undesirable due to its exponential dependence on
the horizon $T$. In order to improve the dependence on $T$, we need to 
assume some stability properties on the dynamics $f$.
We start by drawing inspiration from the definition of
\emph{incremental input-to-state stability}~\citep{tran16incrementaliss}.\footnote{
Definition~\ref{def:inc_iss} is more general than \citet[Definition 6]{tran16incrementaliss}
in that we only require a bound with respect to an input perturbation of one of the trajectories,
not both.}
This definition relies on standard comparison class definitions,
which we briefly review. A class $\mathcal{K}$ function
$\sigma : \R_{\geq 0} \rightarrow \R_{\geq 0}$
is continuous, increasing, and satisfies $\sigma(0) = 0$.
A class $\mathcal{K}_\infty$ function is class $\mathcal{K}$ and unbounded.
Finally, a class $\mathcal{KL}$ function $\beta : \R_{\geq 0} \times \R_{\geq 0} \rightarrow \R_{\geq 0}$ satisfies (i) $\beta(\cdot, t)$ is class $\mathcal{K}$ for every $t$ and (ii)
$\beta(s, \cdot)$ is continuous, decreasing, and tends to zero for every $s$.
\begin{mydef}[Incremental input-to-state-stability ($\delta$ISS)]
\label{def:inc_iss}
Consider the discrete-time dynamics $x_{t+1} = f(x_t, u_t)$, and
let $\flow_t(\xi, \{u_t\}_{t \geq 0})$ denote the state $x_t$
initialized from $x_0 = \xi$ with input signal $\{u_t\}_{t \geq 0}$.
The dynamics $f$ is said to be \emph{incremental input-to-state-stable} if
there exists a class $\mathcal{KL}$ function $\zeta$ and
class $\calK_\infty$ function $\gamma$ such that
for every $\xi_1, \xi_2 \in X$, $\{u_t\}_{t \geq 0} \subseteq U$, and $t \in \N$,
\begin{align*}
    \norm{\flow_t(\xi_1, \{u_t\}_{t \geq 0}) - \flow_t(\xi_2, \{0\}_{t \geq 0})}_X \leq \zeta(\norm{\xi_1-\xi_2}_X, t) + \gamma\left( \max_{0 \leq k \leq t-1} \norm{u_k}_U\right).
\end{align*}
\end{mydef}
Definition~\ref{def:inc_iss} improves the Gr{\"{o}}nwall-type estimate \eqref{eq:gronwall_estimate}
in the following way. Suppose the closed-loop system defined by
$\tilde{f}(x, u) := f(x, \pi_2(x) + u)$ is $\delta$ISS. Then the algebraic identity
\begin{align*}
    f(x, \pi_1(x)) = f(x, \pi_2(x) + \pi_1(x) - \pi_2(x)) = \tilde{f}(x, \Delta_{\pi_1,\pi_2}(x))
\end{align*}
allows us to treat $\{\Delta_{\pi_1,\pi_2}(\flow_t^{\pi_1}(\xi))\}_{t \geq 0}$ as an input signal, yielding
\begin{align*}
    \mathsf{disc}_T(\xi;\pi_1,\pi_2)\leq T \cdot \gamma\left( \ell_{\pi_1}(\xi; \pi_1, \pi_2) \right).
\end{align*}
This bound certainly improves the dependence on $T$, but is not sharp:
for stable linear systems,
it is not hard to show that
$\mathsf{disc}_T(\xi;\pi_1,\pi_2)\leq O(1) \cdot \ell_{\pi_1}(\xi; \pi_1, \pi_2)$.
In order to capture sharper rate dependence on $T$, we need to 
modify the definition to more explicitly quantify convergence rates.
\begin{mydef}[Incremental gain stability]
\label{def:inc_gain_stable_poly}
Consider the discrete time dynamics $x_{t+1} = f(x_t, u_t)$.
Let $a, b, \zeta, \gamma \in [1, \infty)$ and $\alpha_0$ be positive.
Put $\Psi := (\alpha_0, \zeta, \gamma)$.
We say that $f$ is $(a,b, \Psi)$\emph{-incrementally-gain-stable} (abbreviated as $(a,b,\Psi)$-IGS) if
for all horizon lengths $T \in \N$, initial conditions $\xi_1, \xi_2 \in X$,
and input sequences $\{u_t\}_{t \geq 0} \subseteq U$, we have:
\begin{align}\label{eq:inc_gain_stability_defn}
    {\sum_{t=0}^{T}}\min\{ \norm{\Delta_t}_X, \norm{\Delta_t}_X^a \} \leq \zeta \norm{\xi_1 - \xi_2}_X^{\alpha_0} + \gamma {\sum_{t=0}^{T-1}} \max\{\norm{u_t}_U, \norm{u_t}_U^b\}.
\end{align}
Here, 
$\Delta_t := \flow_t(\xi_1, \{u_t\}_{t \geq 0}) - \flow_t(\xi_2, \{0\}_{t \geq 0})$.
\end{mydef}

IGS quantitatively bounds the amplification of an input signal $\{u_t\}_{t\geq 0}$ (and differences in initial conditions $\xi_1,\xi_2$) on the corresponding system trajectory discrepancies  $\{\Delta_t\}_{t\geq 0}$.  
Note that a system that is incrementally gain stable is automatically
$\delta$ISS. 
IGS also captures the phase transition that occurs in non-contracting systems about the unit circle.  For example, when $\|\Delta_t\|\leq 1$ and $\|u_t\|\leq 1$ for all $t\geq 0$, inequality \eqref{eq:inc_gain_stability_defn} reduces to $\sum_{t=0}^{T} \norm{\Delta_t}_X^{a} \leq \zeta \norm{\xi_1 - \xi_2}_X^{\alpha_0} + \gamma \sum_{t=0}^{T-1} \norm{u_t}_U.$  Finally, as IGS measures signal-to-signal ($\{u_t\}_{t \geq 0} \to \{\Delta_t\}_{t \geq 0}$) amplification, it is well suited to analyzing learning algorithms operating on system trajectories.

For a policy $\pi$, we define $\fcl^\pi(x, u) := f(x, \pi(x) + u)$.
We show that (cf.~Proposition~\ref{prop:inc_gain_stab_compare_inputs}) 
if $\fcl^{\pi_2}$ is $(a,b,\Psi)$-IGS, then
\begin{align}\label{eq:inc_gain_disc}
    \mathsf{disc}_T(\xi; \pi_1, \pi_2) \leq 4 \gamma T^{1-1/a} \cdot \max\left\{
    \ell_{\pi_1}(\xi; \pi_1, \pi_2)^{1/a}, 
    \ell_{\pi_1}(\xi; \pi_1, \pi_2)^{b}
    \right\}.
\end{align}
With this bound, the dependence on $T$ is allowed to interpolate between $1$ and $T$,
and the dependence on $\ell_{\pi_1}(\xi;\pi_1,\pi_2)$ is made explicit.
Next, we state a Lyapunov based sufficient condition for
Definition~\ref{def:inc_gain_stable_poly} to hold. Unlike Definition~\ref{def:inc_gain_stable_poly},
the Lyapunov condition is checked pointwise in space rather than over entire trajectories.
\begin{myprop}[Incremental Lyapunov function implies stability]
\label{prop:lyap_characterization}
Let $a, b \in [1, \infty)$, $\alpha_0 \in [1, a]$, 
and $\underline{\alpha}, \overline{\alpha}, \mathfrak{a}, \mathfrak{b}$ be positive finite constants
satisfying $\underline{\alpha} \leq \overline{\alpha}$.
Suppose there exists a non-negative function $V : \R^{n} \times \R^{n} \rightarrow \R_+$ such that for all $x, y \in \R^n$ and $u \in U$,
\begin{align}
   \underline{\alpha} \norm{x-y}_X^{\alpha_0} &\leq V(x, y) \leq \overline{\alpha} \norm{x-y}_X^{\alpha_0}, \label{eq:inc_gain_stable_lyap_func} \\
    V(f(x, u), f(y, 0)) - V(x, y) &\leq - \mathfrak{a} \min\{\norm{x-y}_X, \norm{x-y}_X^{a}\} + \mathfrak{b} \max\{\norm{u}_U, \norm{u}_U^{b}\}. \label{eq:inc_gain_stable_lyap_cond}
\end{align}
Then, $f$ is $(a, b, \Psi)$-incrementally-gain-stable with
$\Psi = \left(\alpha_0, \frac{\overline{\alpha}}{\underline{\alpha} \wedge \mathfrak{a}}, \frac{\mathfrak{b}}{\underline{\alpha} \wedge \mathfrak{a}}\right)$.
\end{myprop}

\subsection{Examples of Incremental Gain Stability}
\label{sec:inc_gain_stability:examples}

Our first example of incremental gain stability is a
contracting system~\citep{lohmiller1998contraction}.
\begin{restatable}{myprop}{incgaincontraction}
\label{prop:inc_gain_contraction}
Consider the dynamics $x_{t+1} = f(x_t, u_t)$. Suppose that $f$ is \emph{autonomously contracting},
i.e., there exists a positive definite metric $M(x)$ and a scalar $\rho \in (0, 1)$ such that:
\begin{align*}
    \frac{\partial f}{\partial x}(x, 0)^\T M(f(x, 0)) \frac{\partial f}{\partial x}(x, 0) \preceq \rho M(x) \quad \forall x \in \R^n.
\end{align*}
Suppose also that the metric $M$ 
satisfies $\underline{\mu} I \preceq M(x) \preceq \overline{\mu} I$ for all $x \in \R^n$,
and that there exists a finite $L_u$ such that the dynamics satisfies $\norm{f(x, u) - f(x, 0)}_2 \leq L_u \norm{u}_2$ for all $x \in \R^n$, $u \in \R^d$.  
Then we have that $f$ is $(1, 1, \Psi)$-IGS,
with
$\Psi = \left(1, \sqrt{\frac{\overline{\mu}}{\underline{\mu}}} \frac{1}{1-\sqrt{\rho}}, L_u \sqrt{\frac{\overline{\mu}}{\underline{\mu}}} \frac{1}{1-\sqrt{\rho}} \right)$.
\end{restatable}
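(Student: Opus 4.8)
The plan is to reduce to the Lyapunov sufficient condition of Proposition~\ref{prop:lyap_characterization}, instantiated with $a = a_0 = a_1 = b_0 = b_1 = 1$ (so every $\min$/$\max$ and $\wedge$/$\vee$ in that proposition collapses), and to take as incremental Lyapunov function the Riemannian distance induced by the contraction metric:
\[
 V(x,y) := \inf_{c}\ \int_0^1 \sqrt{\dot c(s)^\T M(c(s))\,\dot c(s)}\ ds,
\]
the infimum over piecewise-$C^1$ curves $c$ with $c(0)=x$, $c(1)=y$. Everything then comes down to checking the two-sided norm comparison for $V$ and the one-step decrease inequality~\eqref{eq:inc_gain_stable_lyap_cond}.

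First I would establish the sandwich bound $\sqrt{\underline{\mu}}\,\norm{x-y}_2 \le V(x,y) \le \sqrt{\overline{\mu}}\,\norm{x-y}_2$. The lower bound holds because $M(x) \succeq \underline{\mu} I$ makes the Riemannian length of any admissible curve at least $\sqrt{\underline{\mu}}$ times its Euclidean length, which is at least $\sqrt{\underline{\mu}}\,\norm{x-y}_2$; the upper bound follows by evaluating the length functional on the straight segment $c(s) = x + s(y-x)$ and using $M(x)\preceq \overline{\mu} I$. This identifies $\underline{\alpha}=\sqrt{\underline{\mu}}$ and $\overline{\alpha}=\sqrt{\overline{\mu}}$.

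Second — and this is the crux — I would show the autonomous map $x \mapsto f(x,0)$ is a $\sqrt{\rho}$-contraction in $V$, i.e. $V(f(x,0),f(y,0)) \le \sqrt{\rho}\,V(x,y)$. Fix $\epsilon>0$ and pick a curve $c$ from $x$ to $y$ of Riemannian length at most $V(x,y)+\epsilon$. The image curve $s \mapsto f(c(s),0)$ connects $f(x,0)$ to $f(y,0)$, and by the chain rule its velocity is $\frac{\partial f}{\partial x}(c(s),0)\,\dot c(s)$; feeding this into the length functional and applying the pointwise inequality $\frac{\partial f}{\partial x}^\T M(f)\frac{\partial f}{\partial x} \preceq \rho\,M$ bounds the length of the image curve by $\sqrt{\rho}$ times the length of $c$, hence by $\sqrt{\rho}\,(V(x,y)+\epsilon)$; letting $\epsilon \downarrow 0$ gives the claim. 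Working with near-optimal rather than exactly minimizing curves is what lets one avoid any question of existence/regularity of geodesics and the fact that images of geodesics need not be geodesics — I expect this translation from the infinitesimal contraction inequality to a finite-distance contraction to be the only real obstacle.

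Third, the input-perturbation term is controlled by the upper sandwich bound together with the assumed Lipschitz estimate: $V(f(x,u),f(x,0)) \le \sqrt{\overline{\mu}}\,\norm{f(x,u)-f(x,0)}_2 \le \sqrt{\overline{\mu}}\,L_u\norm{u}_2$. Combining a triangle inequality for $V$ with the contraction estimate of the previous step,
\[
 V(f(x,u),f(y,0)) - V(x,y) \le -(1-\sqrt{\rho})\,V(x,y) + \sqrt{\overline{\mu}}\,L_u\,\norm{u}_2 \le -(1-\sqrt{\rho})\sqrt{\underline{\mu}}\,\norm{x-y}_2 + \sqrt{\overline{\mu}}\,L_u\,\norm{u}_2,
\]
which is exactly~\eqref{eq:inc_gain_stable_lyap_cond} with $\mathfrak{a} = (1-\sqrt{\rho})\sqrt{\underline{\mu}}$ and $\mathfrak{b} = \sqrt{\overline{\mu}}\,L_u$. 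Invoking Proposition~\ref{prop:lyap_characterization} and simplifying $\underline{\alpha}\wedge\mathfrak{a} = (1-\sqrt{\rho})\sqrt{\underline{\mu}}$ (valid since $1-\sqrt{\rho}<1$) then yields $\zeta = \sqrt{\overline{\mu}/\underline{\mu}}\cdot\tfrac{1}{1-\sqrt{\rho}}$ and $\gamma = L_u\sqrt{\overline{\mu}/\underline{\mu}}\cdot\tfrac{1}{1-\sqrt{\rho}}$, matching the claimed $\Psi$.
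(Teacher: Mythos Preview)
Your proof is correct and uses the same ingredients as the paper's argument---the geodesic distance $d_M$ induced by $M$, the sandwich bound $\sqrt{\underline{\mu}}\,\norm{x-y}_2 \le d_M(x,y) \le \sqrt{\overline{\mu}}\,\norm{x-y}_2$, the one-step contraction $d_M(f(x,0),f(y,0)) \le \sqrt{\rho}\,d_M(x,y)$, the triangle inequality, and the Lipschitz-in-$u$ bound---but the final packaging differs. The paper does \emph{not} invoke Proposition~\ref{prop:lyap_characterization}; instead, after deriving the one-step recursion $d_M(x_{t+1},y_{t+1}) \le \sqrt{\rho}\,d_M(x_t,y_t) + L_u\sqrt{\overline{\mu}}\,\norm{u_t}_2$, it unrolls this explicitly to $d_M(x_t,y_t) \le \rho^{t/2} d_M(x_0,y_0) + L_u\sqrt{\overline{\mu}}\sum_{k=0}^{t-1}\rho^{(t-1-k)/2}\norm{u_k}_2$, applies the sandwich bounds, and then sums over $t=0,\dots,T$ (summing the geometric series) to verify the IGS inequality directly. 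Your route is more modular---it reuses the Lyapunov sufficient condition already proved and lands exactly on the stated constants after the simplification $\underline{\alpha}\wedge\mathfrak{a} = (1-\sqrt{\rho})\sqrt{\underline{\mu}}$---while the paper's route is self-contained and makes the convolution structure explicit.
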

Three concrete examples of autonomously contracting systems include:
\begin{enumerate}[(a)]
    \item Piecewise linear systems $f(x, u) = \left( \sum_{i=1}^{K} A_i \ind\{ x \in \calC_i \} \right)x + B u$
where the $A_i$'s are stable, $\{\calC_i\}$ partitions $\R^n$, and there exists a common quadratic Lyapunov function $V(x) = x^\T P x$
which yields the metric $M(x) = P$,
    \item $f(x, u) = \log(1 + x^2) + u$ with the metric $M(x) = 2 [ 1 + \exp(-\abs{x})]^{-1}$, and
    \item $f(x, u) = x - \eta \left[ \nabla V(x) + u \right]$
where $V(x)$ is a twice differentiable potential function
satisfying $\mu I \preceq \nabla^2 V(x) \preceq L I$,
and $0 < \eta \leq 1/L$ \citep{wensing20beyondconvexity}.
\end{enumerate}
Importantly, contracting systems enjoy time-independent discrepancy measures since \eqref{eq:inc_gain_disc} reduces to $O(\ell_{\pi_1}(\xi;\pi_1,\pi_2))$, i.e., contracting nonlinear systems behave like stable linear systems, up to contraction metric dependent constants.

Our next example illustrates a family of systems that degrade away from
exponential rates.
\begin{myprop}
\label{prop:inc_gain_p}
Consider the scalar dynamics $x_{t+1} = x_t - \eta x_t \frac{\abs{x_t}^p}{1 + \abs{x_t}^p} + \eta u_t$ for $p \in (0, \infty)$. Then as long as $0 < \eta < \frac{4}{5+p}$, we have that $f$ is $(1+p, 1, \Psi)$-IGS, with
$\Psi = \left( 1, \frac{2^{2+p}}{\eta}, 2^{2+p} \right)$
\end{myprop}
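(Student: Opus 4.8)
The plan is to derive $\Psi$-IGS from the pointwise incremental-Lyapunov criterion of Proposition~\ref{prop:lyap_characterization}, applied with the trivial metric $V(x,y) = \abs{x-y}$ (so $a = 1$, $\underline{\alpha} = \overline{\alpha} = 1$) and with $a_0 = b_0 = b_1 = 1$, $a_1 = 1+p$. With these choices the tuple produced by Proposition~\ref{prop:lyap_characterization} is $\Psi = \bigl(1,1,1+p,1,1,\tfrac{1}{1\wedge\mathfrak{a}},\tfrac{\mathfrak{b}}{1\wedge\mathfrak{a}}\bigr)$, so it is enough to exhibit constants $\mathfrak{a},\mathfrak{b}$ for which the one-step decrease
\begin{align*}
\abs{f(x,u) - f(y,0)} - \abs{x - y} \;\le\; -\,\mathfrak{a}\,\min\{\abs{x-y},\abs{x-y}^{1+p}\} \;+\; \mathfrak{b}\,\abs{u}
\end{align*}
holds for all $x,y,u \in \R$, and then to read off $\mathfrak{a} = \eta\,2^{-(2+p)}$, $\mathfrak{b} = \eta$; since $\eta < \tfrac{4}{5+p} < 1$ gives $\mathfrak{a} < 1$, we get $1 \wedge \mathfrak{a} = \mathfrak{a}$ and $\Psi$ becomes exactly the advertised tuple.

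Write $\phi(s) := s\,\abs{s}^{p}/(1+\abs{s}^{p})$, so $f(x,u) = x - \eta\phi(x) + \eta u$. The triangle inequality immediately peels off the input term, $\abs{f(x,u) - f(y,0)} \le \abs{(x-\eta\phi(x)) - (y-\eta\phi(y))} + \eta\abs{u}$, which supplies the $\mathfrak{b}\abs{u} = \eta\abs{u}$ summand. For the remaining (autonomous) part, differentiating gives $\phi'(s) = \bigl((1+p)\abs{s}^{p} + \abs{s}^{2p}\bigr)/(1+\abs{s}^{p})^{2}$; using the elementary bounds $\abs{s}^{p}/(1+\abs{s}^{p})^{2} \le 1/4$ and $\abs{s}^{2p}/(1+\abs{s}^{p})^{2} < 1$ one gets $\phi'(s) < \tfrac{1+p}{4} + 1 = \tfrac{5+p}{4}$, hence $\eta\phi'(s) < \tfrac{4}{5+p}\cdot\tfrac{5+p}{4} = 1$ --- this is precisely where the hypothesis $\eta < \tfrac{4}{5+p}$ is used. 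It follows that $s \mapsto s - \eta\phi(s)$ is strictly increasing, so with $X := x\vee y$, $Y := x\wedge y$ one has $\abs{(x-\eta\phi(x)) - (y-\eta\phi(y))} = (X-Y) - \eta(\phi(X)-\phi(Y))$, and the whole proposition reduces to the scalar inequality
\begin{align*}
\phi(X) - \phi(Y) \;\ge\; 2^{-(2+p)}\,\min\{X - Y,\ (X-Y)^{1+p}\}, \qquad X \ge Y,
\end{align*}
which is exactly the statement that produces the constant $2^{2+p}$ in $\Psi$.

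To establish this scalar inequality, write $\phi(X) - \phi(Y) = \int_{Y}^{X}\phi'(s)\,ds$ and record two pointwise lower bounds on $\phi'$. For $\abs{s}\le 1$: the ratio $\phi'(s)/\abs{s}^{p} = \bigl((1+p)+\abs{s}^{p}\bigr)/(1+\abs{s}^{p})^{2}$ is decreasing in $\abs{s}$, so it is minimized at $\abs{s}=1$, giving $\phi'(s)\ge\tfrac{2+p}{4}\abs{s}^{p}$. For $\abs{s}\ge 1$: dropping part of the numerator, $\phi'(s) \ge (\abs{s}^{p}+\abs{s}^{2p})/(1+\abs{s}^{p})^{2} = \abs{s}^{p}/(1+\abs{s}^{p}) \ge \tfrac12$. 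Splitting the interval of integration at $\pm 1$ and using that, for a subinterval of $[-1,1]$ of prescribed length, $\int \abs{s}^{p}\,ds$ is minimized when the subinterval is centered at the origin (a one-line rearrangement argument, since $\abs{s}^p$ is even and increasing in $\abs{s}$), one gets, with $A := \bigl|[Y,X]\cap[-1,1]\bigr| \in [0,\min\{X-Y,2\}]$,
\begin{align*}
\phi(X) - \phi(Y) \;\ge\; \frac{2+p}{4\cdot 2^{p}(1+p)}\,A^{1+p} \;+\; \frac{1}{2}\,\bigl(X - Y - A\bigr).
\end{align*}
Minimizing the right-hand side over $A$ (the minimizer is either $A = X-Y$ or the interior critical point $A_{*} = (2^{1+p}/(2+p))^{1/p} \in [1,2]$) and comparing against $2^{-(2+p)}\min\{X-Y,(X-Y)^{1+p}\}$ is then a short calculus check whose only substantive input is the elementary inequality $2^{1+p}\ge 1+p$ for $p\ge 0$ (together with $\tfrac{2+p}{1+p}\ge 1$).

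I expect the one-variable minimization/comparison in the last step to be the only real work: the naive estimate $\phi'(s)\gtrsim\abs{s}^{p}$ integrated near the origin loses a factor $(1+p)^{-1}$, so one must keep the \emph{sharp} coefficient $\tfrac{2+p}{4}$ in the first bound (the cruder $\phi'(s)\ge\tfrac14\abs{s}^{p}$ fails for $p>1$), and because $\phi'$ is not monotone in $\abs{s}$ when $p>1$ (it overshoots $1$ before relaxing back to $1$ at infinity) one has to argue carefully that pushing $[Y,X]$ toward the origin is the worst case. The remaining pieces --- the triangle-inequality split, the monotonicity of $s\mapsto s-\eta\phi(s)$, and the bookkeeping through Proposition~\ref{prop:lyap_characterization} --- are routine.
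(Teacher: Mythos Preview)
Your proposal is correct, and the overall architecture --- take $V(x,y)=\abs{x-y}$, peel off the input by the triangle inequality, use the derivative bound $\phi' < (5+p)/4$ to get monotonicity of $s\mapsto s-\eta\phi(s)$, and feed the resulting one-step decrease into Proposition~\ref{prop:lyap_characterization} --- is exactly the paper's. The only genuine difference is how you establish the key scalar inequality $\phi(X)-\phi(Y)\ge 2^{-(2+p)}\min\{X-Y,(X-Y)^{1+p}\}$. The paper (Proposition~\ref{prop:deriv_sign_inequality}) does it by a sign-based case analysis ($0<x<y$; $x<0<y$; $x<y<0$ by symmetry), with each sub-case further split at $1$ and handled by short algebraic manipulations using only the monotonicity of $z\mapsto z/(1+z)$. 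Your route --- integrate $\phi'$, lower-bound it by $\tfrac{2+p}{4}\abs{s}^p$ on $[-1,1]$ and by $\tfrac12$ outside, rearrange, then minimize in $A$ --- also goes through: the interior-critical-point branch reduces exactly to $2^{1+p}\ge 1+p$ as you anticipated, and the boundary branch $A=L$ needs only $\tfrac{2+p}{1+p}\ge 1$ (when $L\le 1$) or $L^p\ge\tfrac{1+p}{2+p}$ (when $1<L\le A_*$), both trivial. One reassurance about your last paragraph: the non-monotonicity of $\phi'$ in $\abs{s}$ is irrelevant to your own argument, because your rearrangement step is applied to the integrand $\abs{s}^p$ (which \emph{is} even and increasing in $\abs{s}$), not to $\phi'$ itself. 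The paper's case analysis is shorter and entirely elementary; your integral-plus-minimization argument is more uniform and would port more readily to other nonlinearities of the same shape.
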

The system described in Proposition~\ref{prop:inc_gain_p} behaves like a stable linear system when $|x_t|\geq 1$, and like a polynomial system when $|x_t|< 1$ (hence $a = 1+p$).  This example 
highlights the need to be able to capture a phase-transition within our definitions and Lyapunov characterizations.
\section{Algorithms and Theoretical Results}
\label{sec:algorithms}

In this section we define and analyze IGS-constrained imitation learning algorithms.  We begin by 
introducing our main assumption of dynamics and policy class regularity.

\begin{myassump}[Regularity]
\label{assumption:main}
We assume that the dynamics $f$, policy class $\Pi$, expert $\pi_\star$,
and initial condition distribution $\calD$ satisfy:
\begin{enumerate}[(a), noitemsep]
    \item The dynamics map $f$ satisfies $f(0, 0) = 0$.
    \item The policy class $\Pi$ is convex
    and $\pi(0) = 0$ for all $\pi \in \Pi$.
    \item The distribution $\calD$ over initial conditions satisfies $\norm{\xi}_2 \leq B_0$ a.s.\ for $\xi \sim \calD$.
    \item The expert policy $\pi_\star \in \Pi$.
    \item $\Delta_{\pi_1,\pi_2}$ is $L_\Delta$-Lipschitz for all $\pi_1, \pi_2 \in \Pi$.
    \item The constants $B_0,L_\Delta \in [1, \infty)$.
\end{enumerate}
\end{myassump}

Before turning to our main stability assumption
we briefly remark on Assumption~\ref{assumption:main}(b), which
requires that the policy class $\Pi$ is convex.
This assumption is stronger than is actually necessary.
Instead, we could consider optimizing at epoch $k$
over a function class $\Pi_k$
defined recursively as $\Pi_{k} = \{ \alpha \pi + (1-\alpha) \pi_{k-1} \mid \alpha \in [0, 1], \, \pi \in \Pi, \, \pi_{k-1} \in \Pi_{k-1}  \}$,
with the base case $\Pi_0 = \Pi$. Instead, we choose to
make the assumption that $\Pi$ is convex to simplify the presentation, noting that using the recursive representation
would yield the same sample complexity bounds.\footnote{Our results are derived by bounding
the Rademacher complexity of a particular function class, 
which in our setting is preserved under convex hulls. See Proposition~\ref{prop:rademacher_convex_hull} in the appendix for more details.}

We now turn to our main
stability assumption.
Recall that $\fcl^\pi(x, u) =f(x, \pi(x) + u)$ denotes the closed-loop dynamics induced by a policy $\pi$. Our main stability assumption is that $\fcl^{\pi_\star}$ is $(a, 1, \Psi)$-IGS. 
By only allowing the parameter $a$ to vary,
we are able to capture the stability trade-offs
we are after, while streamline the results and simplifying the 
proofs. Generalizing to the case $b > 1$ is
straightforward, but involves more cumbersome expressions.

\begin{myassump}[Incremental Gain Stability]
\label{assumption:stability}
Let $a, \zeta, \gamma \in [1, \infty)$, and $\alpha_0 > 0$.
Put $\Psi = (\alpha_0, \zeta, \gamma)$, and 
let $\calS(a, b, \Psi)$ denote the set of policies $\pi$ such that $\fcl^{\pi}$
is $(a, b, \Psi)$-IGS.
We assume that $\pi_\star \in \calS(a, 1, \Psi)$.
\end{myassump}

We remark that we assume that the expert policy lies in our policy class, i.e., $\pi_\star\in\Pi$, to guarantee that zero imitation loss can be achieved
in the limit of infinite data; it is straightforward to relax this
assumption to $\Pi \cap \calS(a, 1, \Psi) \neq \emptyset$ and prove results with respect to the best stabilizing policy in class.

\begin{algorithm}[htb]
    \centering
    \begin{algorithmic}[1]
        \Statex \textbf{Input: } Total trajectory budget $m$, number of epochs $E$ that divides $m$, mixing rate $\alpha \in (0,1]$, initial conditions $\left\{\{\xi_i^{k}\}_{i=1}^{m/E}\right\}_{k=0}^{E-1}\sim{}\calD^n$, expert policy $\pi_\star$, stability parameters $(a,b,\Psi)$, and non-negative scalars $\{c_i\}_{i=1}^{E-2}$.
        \State $\pi_0 \gets \pi_\star$, $c_0 \gets 0$.
        \For{$k=0,\dots,E-2$}
            \State Collect trajectories $\mathcal{T}_k = \left\{\{\flow_t^{\pi_{k}}(\xi_i^k)\}_{t=0}^{T-1}\right\}_{i=1}^{m/E}.$ \label{line:collect_traj}
            \State $\hat{\pi}_k {\gets \texttt{cERM}\left( \mathcal{T}_k, \pi_k, c_k, 0\right)}$.
            \State $\pi_{k+1} \gets (1-\alpha) \pi_k + \alpha \hat{\pi}_k$. \label{line:update}
        \EndFor
        \State Collect trajectories $\mathcal{T}_{E-1} = \left\{\{\flow_t^{\pi_{E-1}}(\xi_i)\}_{t=0}^{T-1}\right\}_{i=1}^{m/E}.$ \label{line:collect_traj_em1}
        \State $c_{E-1} \gets \frac{(1-\alpha)^E}{\alpha} \frac{1}{m/E} \sum_{i=1}^{m/E} \ell_{\pi_{E-1}}(\xi_i^{E-1}; \pi_{E-1}, \pi_\star)$.
        \State $\hat\pi_{E-1} {\gets \texttt{cERM}}\left( \mathcal{T}_{E-1}, \pi_{E-1}, c_{E-1},  (1-\alpha)^E\right)$.
        \State $\pi_E \gets \frac{1}{1-(1-\alpha)^E} [(1-\alpha) \pi_{E-1} + \alpha \hat{\pi}_{E-1} - (1-\alpha)^E \pi_\star]$.
        \State \textbf{return} $\pi_E$.
    \end{algorithmic}
    \caption{Constrained Mixing Iterative Learning (CMILe)}
    \label{alg:csmile}
\end{algorithm}
\begin{algorithm}[htb]
    \centering
    \begin{algorithmic}[1]
        \Statex \textbf{Input:} trajectories $\left\{\{\flow_t^{\pi_{\mathrm{roll}}}(\xi_i)\}_{t=0}^{T-1}\right\}_{i=1}^{m}$, 
        policy $\pi_{\mathrm{roll}} \in \Pi$, constraint $c \geq 0$, weight $w \in [0, 1)$.
        \State \textbf{return} the solution to:
        \begin{subequations}
        \label{eq:csmile_opt}
        \begin{align}
                &\minimize_{\bar\pi\in\Pi} \:\: \textstyle\frac{1}{m} \sum_{i=1}^{m} \ell_{\pi_{\mathrm{roll}}}(\xi_i; \bar{\pi}, \pi_\star) \label{eq:opt:cost} \\
                &\mathrm{subject\text{ }to} \:\: \textstyle \frac{1}{m} \sum_{i=1}^{m}\ell_{\pi_{\mathrm{roll}}}(\xi_i; \bar{\pi}, \pi_{\mathrm{roll}}) \leq c, \label{eq:opt:trust_region} \\
                &\tfrac{1}{1-w} [(1-\alpha)\pi_{\mathrm{roll}} + \alpha \bar\pi - w \pi_\star ] \in \calS(a, b, \Psi). \label{eq:inc_gain_stability_constraint}
        \end{align} 
        \end{subequations}
    \end{algorithmic}
    \caption{Constrained Empirical Risk Minimization \texttt{cERM}\bigg{(}$\left\{\{\flow_t^{\pi_{\mathrm{roll}}}(\xi_i)\}_{t=0}^{T-1}\right\}_{i=1}^{m}$, $\pi_{\mathrm{roll}}$, $c$, $w$\bigg{)}}
    \label{alg:constrained_erm}
\end{algorithm}

With these definitions and assumptions in place, we introduce IGS-Constrained Mixing Iterative Learning (CMILe) in Algorithm~\ref{alg:csmile} and state our main
theoretical results. CMILe draws upon and integrates ideas from Stochastic Mixing Iterative Learning (SMILe)~\citep{ross2010efficient}, constrained policy optimization~\citep{schulman2015trust,luo2018algorithmic}, and the IGS tools developed in Section~\ref{sec:inc_gain_stability}.  As in SMILe and DAgger, CMILe proceeds in epochs, beginning with data generated by the expert policy, and iteratively shifts towards a learned policy via updates of the form $\pi_{k+1} = (1-\alpha)\pi_k + \alpha\hat\pi_k$, where $\pi_k$ is the current data-generating policy, $\hat\pi_k$ is the policy learned using the most recently generated data, and $\alpha\in(0,1]$ is a mixing parameter.   However, CMILe contains two key departures from traditional IL algorithms: (i) it constrains the learned policy at each epoch to remain appropriately close to the previous epoch's data-generating policy (constraint \eqref{eq:opt:trust_region}), and (ii) all data-generating policies $\{\pi_k\}$ are constrained to induce IGS closed-loop systems (constraint~\eqref{eq:inc_gain_stability_constraint}).  The latter constraint allows us to leverage the IGS machinery of Section~\ref{sec:inc_gain_stability} to analyze Algorithm~\ref{alg:csmile}.

In presenting our results, we specialize the policy class $\Pi$ to have the parametric form:
\begin{align}\label{eq:smooth_pi}
    \Pi = \{ \pi(x, \theta) \mid \theta \in \R^q, \, \norm{\theta}_2 \leq B_\theta \},
\end{align}
with $B_\theta \geq 1,$ and $\pi$ a fixed twice continuously differentiable map.  As an example, neural networks with $q$ weights and twice continuously differentiable activation functions are captured by the policy class \eqref{eq:smooth_pi}.
We note that our results do not actually require a parameteric representation: as long as a particular
policy class Rademacher complexity (defined in Appendix~\ref{sec:app:main_proof}) can be bounded, then our results apply.
In what follows, we define the following constants:
\begin{align*}
    L_{\partial^2 \pi} &:= 1 \vee \sup_{\norm{x} \leq \zeta B_0^{\alpha_0}, \norm{\theta} \leq B_\theta} \bignorm{\frac{\partial^2 \pi}{\partial \theta \partial x}}_{\ell^2(\R^q) \rightarrow M( \R^{d \times n})}, \\
    \bar{L} &:= \max\{B_\theta L_{\partial^2 \pi}, L_\Delta \} .
\end{align*}
Here, $M(\R^{d \times n})$ is the Banach space of $d \times n$ real-valued matrices
equipped with the operator norm.

\paragraph{IGS-Constrained Behavior Cloning.} We first analyze a single epoch version of Algorithm 1, which reduces to Behavior Cloning (BC) subject to interpolating the expert policy on the training data (constraint \eqref{eq:opt:trust_region}) and inducing an IGS closed-loop system (constraint \eqref{eq:inc_gain_stability_constraint}).

\begin{restatable}[IGS-BC]{mythm}{mainbc}
\label{thm:main_bc}
Suppose that Assumption~\ref{assumption:main}
and Assumption~\ref{assumption:stability} hold. Set $\alpha = E = 1$ in Algorithm~\ref{alg:csmile}. 
Suppose that $m$ satisfies:
\begin{align*}
    m \gtrsim (\zeta B_0^{\alpha_0} \bar{L})^2 \cdot  T^{2(1-1/a)} \cdot q.
\end{align*}
With probability at least $1-e^{-q}$ over the randomness of Algorithm~\ref{alg:csmile}, we have that:
\begin{align*}
    \E_{\xi\sim\calD} \ell_{\pi_1}(\xi;\pi_1,\pi_\star) \lesssim \gamma L_\Delta (\zeta B_0^{\alpha_0} \bar{L})^{1/a} \cdot  T^{1-1/a^2} \cdot  \left(\frac{q}{m}\right)^{\tfrac{1}{2a}}.
\end{align*}
\end{restatable}

Theorem~\ref{thm:main_bc} shows that the imitation loss for IGS-constrained BC decays as $T^{1-1/a^2} \cdot (\tfrac{q}{m})^{\frac{1}{2a}}$  We discuss implications on sample-complexity after analyzing the general setting.

\paragraph{IGS-CMILe.} Next we analyze Algorithm 1 as stated, and show that if the mixing parameter $\alpha$ and number of episodes $E$ are chosen appropriately with respect to the IGS parameters of the underlying expert system, sample-complexity guarantees similar to those in the IGS-constrained BC setting can be obtained.  As described above, the key to ensuring that guarantees can be bootstrapped across epochs is the combination of a trust-region constraint \eqref{eq:opt:trust_region} and IGS-stability constraints \eqref{eq:inc_gain_stability_constraint} on the intermediate data-generating policies.

\begin{restatable}[IGS-CMILe]{mythm}{mainshift}
\label{thm:main_shift}
Suppose that Assumption~\ref{assumption:main} and Assumption~\ref{assumption:stability}
hold, and that:
\begin{align}
    m \gtrsim (\zeta B_0^{\alpha_0} \bar{L})^2 \cdot  T^{2(1-1/a)} \cdot E^{2a+1} (q \vee \log{E})  \label{eq:main_shift_m_req}
\end{align}
Suppose further that for $k\in\{1, \dots, E-2\}$, we have:
\begin{align}
    c_k \lesssim \zeta B_0^{\alpha_0} \bar{L} \cdot T^{1-1/a} \cdot \sqrt{\frac{E (q \vee \log{E})}{m}}, \label{eq:main_shift_c_req}
\end{align}
that $E$ divides $m$, $E \geq \frac{1}{\alpha} \log\left(\frac{1}{\alpha}\right)$,
and $\alpha \leq \min\left\{ \frac{1}{2}, \frac{1}{L_\Delta \gamma T^{1-1/a}} \right\}$.
Then with probability at least $1-e^{-q}$ over the randomness of Algorithm~\ref{alg:csmile}, 
Algorithm~\ref{alg:csmile} is feasible for all epochs, and:
\begin{align*}
    \E_{\xi\sim\calD} \ell_{\pi_E}(\xi;\pi_E,\pi_\star) &\lesssim L_\Delta \gamma (\zeta B_0^{\alpha_0} \bar{L})^{1/a^2} \cdot T^{(1-1/a)(1+1/a^2)} \cdot \left( \frac{E^{2a+1} (q \vee \log{E})}{m} \right)^{\frac{1}{2a^2}}.
\end{align*}
\end{restatable}

Theorem~\ref{thm:main_shift} states that if the mixing parameter $\alpha$ and number of episodes $E$ are set according to the underlying IGS-stability parameters of the expert system then the imitation loss of the final policy $\pi_E$ scales as $T^{\left(1-\frac{1}{a}\right)\left(1+\frac{1}{a}+\frac{3}{2a^2}\right)} \cdot \left(\frac{q}{m}\right)^{\tfrac{1}{2a^2}}$.

\paragraph{Comparing IGS-BC and IGS-CMILe.}
By comparing the sample complexity bound for IGS-BS (Theorem~\ref{thm:main_bc}) to the bound for IGS-CMILe (Theorem~\ref{thm:main_shift}), we see that for fixed IGS parameters
and number of trajectories $m$, the imitation error
for IGS-BC is order-wise dominated by the IGS-CMILe imitation error.
That is, while our current analysis does show the benefits of expert
robustness via explicit dependence on the IGS parameters, it
does not show the relative
benefits of IGS-CMILe over IGS-BC, despite our experimental evidence
suggesting otherwise (cf.~Section~\ref{sec:experiments}).
We leave a theoretical analysis showing the benefit of IGS-CMILe
over IGS-BC to future work.

\paragraph{Sublinear rates.} From the above discussion, we can delineate classes of systems for which imitation loss sample-complexity bounds are sublinear in the task horizon $T$. Specifically, we bound the
number of trajectories $m$ needed to achieve $\e$-bounded imitation loss,
ignoring logarithmic factors and 
problem constants except the horizon length
$T$ and the IGS parameter $a$:
\begin{itemize}
    \item \textbf{IGS-BS} (Theorem~\ref{thm:main_bc}) requires $m \gtrsim \e^{-2a} \cdot T^{2a(1-1/a^2)}$ trajectories; this is sublinear in $T$ when $a \in [1, (1+\sqrt{17})/4) \approx [1, 1.281)$.
    \item \textbf{IGS-CMILe} (Theorem~\ref{thm:main_shift}) requires $m \gtrsim \e^{-2a^2} \cdot T^{2a^2\left(1-\frac{1}{a}\right)\left(1+\frac{1}{a}+\frac{3}{2a^2}\right)}$ trajectories; this is sublinear in $T$ when $a \in [1, (3/2)^{1/3}) \approx [1, 1.144)$.
\end{itemize}
Finally, when a system is contracting, 
we have $a=1$ by Proposition~\ref{prop:inc_gain_contraction},
and hence the number of required trajectories $m$ for
both IGS-BC and IGS-CMILe simplifies to $m \gtrsim \e^{-2}$.
 
\paragraph{Constraints.}
The requirement on the constraint slack $c_k$ in \eqref{eq:main_shift_c_req}
allows the constrained ERM problem (Algorithm~\ref{alg:constrained_erm})
non-zero slack in matching the behavior of the
previous policy (cf.~\eqref{eq:opt:trust_region}).
This is compatible with practical implementations of 
first order trust region policy optimization, 
where constraints are enforced via soft losses
instead of as hard constraints.

\subsection{Necessity of Stability Constraints}
\label{sec:results:necessity_of_stability}
The empirical risk minimization algorithm (Algorithm~\ref{alg:constrained_erm}) we consider in this work
requires an IGS constraint on the learned policy
(cf.~\eqref{eq:inc_gain_stability_constraint}).
Here, we show the necessity of imposing this stability constraint in order
to derive high probability sub-exponential in $T$ bounds on the imitation error.
Consider the linear time-invariant system:
\begin{align*}
    x_{t+1} = f(x_t, u_t) = A x_t + u_t, \quad A = \diag(2, 2, 0, \dots, 0).
\end{align*}
Let the expert policy be $\pi_\star(x) = -A x_t$.
Observe that $\flow_t^{\pi_\star}(\xi) = 0$
for all $t \in \N_+$ and $\xi \in \R^n$.
Hence, for any $m$
initial conditions $\xi_1, \dots, \xi_m$, 
the only non-zero state/action pairs provided by the expert are
$\{(\xi_i, y_i := \pi_\star(\xi_i))\}_{i=1}^{m}$.

Let $m_0$ be large enough so that $(1-1/m)^m \geq 1/(2e)$
for all $m \geq m_0$
(such an $m_0$ exists since $\lim_{m \rightarrow \infty} (1-1/m)^m = 1/e$), and fix any $m \geq m_0$.
Consider $\calD$ defined
as $\Pr_{\xi \sim \calD}(\xi = e_1) = 1-1/m$
and $\Pr_{\xi \sim \calD}(\xi = e_2) = 1/m$,
where $e_i \in \R^n$ is the $i$-th standard
basis vector.
Let $\calE_m := \bigcap_{i=1}^{m} \{ \xi_i = e_1 \}$, and observe that
$\Pr(\calE_m) = (1-1/m)^m \geq 1/(2e)$.
We will consider optimization over the compact, convex policy class 
$$\Pi = \{ x \mapsto K x \mid K \in \R^{n \times n}, \,\norm{K}_F \leq 2 \sqrt{2} \},$$ 
which contains $\pi_\star$.
Note that Assumptions \ref{assumption:main} and \ref{assumption:stability} hold for the closed-loop expert
dynamics and policy class.
The behavior cloning ERM problem (without stability constraints on $K$) is:
\begin{align*}
    \argmin_{\pi \in \Pi} \left[\frac{1}{m} \sum_{i=1}^{m} \norm{ \pi(\xi_i) - y_i }_2 \right] = \argmin_{\substack{K \in \R^{n \times n} \textrm{ s.t.} \\ \norm{K}_F \leq 2\sqrt{2}}} \left[\frac{1}{m} \sum_{i=1}^{m} \norm{K \xi_i - y_i}_2\right].
\end{align*}
It is easy to check that on $\calE_m$
(a constant probability event), $\hat{K} = - 2 e_1e_1^\T$
is an optimal solution of this ERM problem (that achieves zero training loss).
Let $\hat{\pi}(x) = \hat{K} x$.
Since $\flow_t^{\hat{\pi}}(e_2) = 2^t e_2$,
\begin{align*}
    \E_{\xi \sim \calD} \ell_{\hat{\pi}}(\xi; \hat{\pi}, \pi_\star) \geq \frac{1}{m} \sum_{t=1}^{T} \norm{\hat{\pi}(\flow_t^{\hat{\pi}}(e_2)) - \pi_\star(\flow_t^{\hat{\pi}}(e_2))}_2 
    = \frac{1}{m} \sum_{t=1}^{T} 2^t \norm{ A e_2 }_2 = \frac{4 (2^T - 1)}{m}.
\end{align*}
Therefore,
if one removes the stability constraint \eqref{eq:inc_gain_stability_constraint}, then 
sub-exponential in $T$ imitation error bounds are impossible without
more problem assumptions or other algorithmic modifications.
\section{Experiments}
\label{sec:experiments}

In our experiments, we implement neural network training by combining the
\verb|haiku| NN library~\citep{haiku2020github}
with \verb|optax|~\citep{optax2020github}
in \verb|jax|~\citep{jax2018github}. 

\subsection{Practical Algorithm Implementation}
In order to implement Algorithm~\ref{alg:csmile}, a constrained ERM subproblem (Algorithm~\ref{alg:constrained_erm}) over the policy class must be solved.  Two elements make this subproblem practically challenging: (i) the trust-region constraint \eqref{eq:opt:trust_region}, and (ii) the IGS-stability constraint \eqref{eq:inc_gain_stability_constraint}. 

\paragraph{Trust-regions constraints.} 
We implement the trust-region constraint \eqref{eq:opt:trust_region} by
initializing the weights $\hat\theta_k$ 
parameterizing the policy $\hat\pi_k$ 
at those of the previous epoch's parameters $\hat\theta_{k-1}$, and using a small learning rate during training. Alternative viable approaches include imposing trust-region constraints on the parameters of the form $\|\hat\theta_k-\hat\theta_{k-1}\|_2\leq\kappa$, or explicitly enforcing the trust-region constrain \eqref{eq:opt:trust_region}.  These latter options would be implemented through either a suitable Lagrangian relaxation to soft-penalties in the objective, or by drawing on recent results in constrained empirical risk minimization~\citep{chamon2020probably}.

\paragraph{Enforcing IGS.} Enforcing the IGS constraint \eqref{eq:inc_gain_stability_constraint} via an incremental Lyapunov function (cf. Proposition~\ref{prop:lyap_characterization}) is more challenging, as it must be enforced for all $x$ within a desired region of attraction. If such a Lyapunov function is known for the expert, then it can be used to only enforce stability constraints on trajectory data, an approximation/heuristic that is common in the constrained policy optimization literature (see for example~\citep{schulman2015trust,luo2018algorithmic}).  However, if a Lyapunov certificate of IGS stability for the expert is not known, then options include (a) learning such a certificate for the expert from data, see for example~\citep{boffi2020learning,kenanian2019data}, or (b) jointly optimizing over an IGS certificate and learned policy.  Although this may be computationally challenging, alternating optimization schemes have been proposed and successfully applied in other contexts, see for example~\citep{singh2020learning,manek2020learning,chang2020neural,giesl2016approximation}.

Fortunately, we note that empirically, explicit stability constraints seem not to be required.
In the next subsection, we study the quantitative effects of enforcing the stability constraint \eqref{eq:inc_gain_stability_constraint} for a linear system, for which a stability certificate is available, and for which the level of stability of the expert system can be quantitatively tuned.  We observe that only when (a) the expert is nearly unstable, and (b) we are in a low-data regime,
that a small difference in performance between stability-constrained and unconstrained algorithms occurs, 
suggesting that optimal policies are naturally stabilizing.  
Therefore, we simply omit constraint \eqref{eq:inc_gain_stability_constraint} from our implementation and take care to ensure that sufficient data is provided to the IL algorithms to yield stabilizing policies.

\subsection{Tuneable IGS System} 

We consider the dynamical system in $\R^{10}$:
\begin{align}
    x_{t+1} = x_t - \eta x_t \frac{\abs{x_t}^p}{1 + \abs{x_t}^p} + \frac{\eta}{1 + \abs{x_t}^p}(h(x_t) + u_t), \quad \eta=0.3. \label{eq:toy_p_system}
\end{align}
All arithmetic operations in \eqref{eq:toy_p_system} are element-wise.
We set $h : \R^{10} \rightarrow \R^{10}$ to be a randomly initialized two layer MLP with zero biases, hidden width $32$, and $\tanh$ activations.
The expert policy is set to be $\pi_\star = -h$, so that the expert's closed-loop
dynamics are given by
$x_{t+1} = x_t - \eta x_t \frac{\abs{x_t}^p}{1 + \abs{x_t}^p}$.
From Proposition~\ref{prop:inc_gain_p}, the incremental stability
of the closed-loop system degrades as $p$ increases.

\begin{table}[ht]
\small
\centering 
\begin{tabular}{ccccccc} 
\hline\hline 
$p$ & \textbf{BC+IGS} & \textbf{BC} & \textbf{CMILe+IGS} & \textbf{CMILe} & \textbf{DAgger+IGS} & \textbf{DAgger} \\
\hline
$1$ & $0.149 \pm 0.020$ & $0.335 \pm 0.073$ & $0.167 \pm 0.013$ & $0.199 \pm 0.047$ & $0.195 \pm 0.036$ & $0.318 \pm 0.081$ \\
$2$ & $0.454 \pm 0.032$ & $0.782 \pm 0.158$ & $0.510 \pm 0.018$ & $0.692 \pm 0.026$ & $0.419 \pm 0.020$ & $0.624 \pm 0.127$ \\
$3$ & $0.829 \pm 0.131$ & $1.128 \pm 0.118$ & $0.852 \pm 0.057$ & $1.099 \pm 0.046$ & $0.654 \pm 0.020$ & $0.764 \pm 0.134$ \\
$4$ & $1.220 \pm 0.176$ & $1.737 \pm 0.126$ & $1.041 \pm 0.045$ & $1.412 \pm 0.052$ & $0.834 \pm 0.027$ & $0.924 \pm 0.107$ \\
$5$ & $1.899 \pm 0.160$ & $2.067 \pm 0.214$ & $1.236 \pm 0.035$ & $1.535 \pm 0.042$ & $0.992 \pm 0.018$ & $0.948 \pm 0.049$ \\
\hline 
\end{tabular}
\caption{Final $\norm{x_T^{\mathrm{expert}} - x_T^{\mathrm{IL}}}_2$ of imitation learning algorithms on \eqref{eq:toy_p_system}.}
\label{table:toy_p_delta_goal_err}
\end{table}

\begin{table}[ht]
\small
\centering 
\begin{tabular}{ccccccc} 
\hline\hline 
$p$ & \textbf{BC+IGS} & \textbf{BC} & \textbf{CMILe+IGS} & \textbf{CMILe} & \textbf{DAgger+IGS} & \textbf{DAgger} \\
\hline
$1$ & $13.800 \pm 1.359$ & $13.160 \pm 2.869$ & $17.592 \pm 0.658$ & $19.174 \pm 1.388$ & $3.186 \pm 0.535$ & $4.727 \pm 0.883$ \\
$2$ & $13.317 \pm 1.644$ & $16.523 \pm 3.874$ & $20.215 \pm 1.284$ & $23.968 \pm 1.692$ & $4.309 \pm 0.277$ & $5.443 \pm 1.416$ \\
$3$ & $19.559 \pm 4.886$ & $20.294 \pm 6.720$ & $24.235 \pm 2.221$ & $28.302 \pm 1.176$ & $5.179 \pm 0.172$ & $5.521 \pm 1.618$ \\
$4$ & $33.476 \pm 16.543$ & $56.552 \pm 16.436$ & $25.244 \pm 2.104$ & $33.181 \pm 2.769$ & $5.574 \pm 0.315$ & $5.958 \pm 0.857$ \\
$5$ & $85.239 \pm 16.620$ & $89.692 \pm 37.347$ & $28.639 \pm 2.050$ & $34.137 \pm 3.056$ & $6.325 \pm 0.268$ & $5.777 \pm 0.319$ \\
\hline 
\end{tabular}
\caption{Final average closed-loop imitation loss error of imitation learning algorithms
on \eqref{eq:toy_p_system}.}
\label{table:toy_p_imitation_err}
\end{table}

In this experiment, we vary $p \in \{1, \dots, 5\}$ to see the effect of $p$ on
the final task goal error and imitation loss.
We compare three different algorithms. \textbf{BC} is standard behavior cloning.
\textbf{CMILe} is Algorithm~\ref{alg:csmile}
with the practical modifications as described above.
\textbf{DAgger} is the imitation learning algorithm from \citet{ross2011reduction}.
For each algorithm, we also 
consider a modification (indicated by the \textbf{+IGS} label)
where policy imitation is augmented with a soft loss
encoding the IGS constraint \eqref{eq:inc_gain_stability_constraint}.
For all algorithms, we fix the number of trajectories $m$
from \eqref{eq:toy_p_system} to be $m=250$.
The horizon length is $T=100$.
The distribution $\calD$ over initial condition is set as $N(0, I)$.
We set the policy class $\Pi$ to be two layer MLPs with hidden width $64$
and $\tanh$ activations.
Each algorithm minimizes the imitation loss using $300$ epochs of Adam 
with learning rate $0.01$ and batch size $512$.
For all algorithms except \textbf{BC}, we use $E=25$ epochs 
with $\alpha = 0.15$ (in DAgger's notation, we set $\beta_k = 0.85^k$),
resulting in $10$ trajectories per epoch.

In Table~\ref{table:toy_p_delta_goal_err},
we track the difference in norm $\norm{x_T^{\mathrm{expert}}-x_T^{\mathrm{IL}}}_2$ between the expert's final state
($x_T^{\mathrm{expert}}$) and the IL algorithm's final state ($x_T^{\mathrm{IL}}$), both seeded
seeded from the same initial conditions.
In Table~\ref{table:toy_p_imitation_err}, we track the
final average closed-loop imitation loss error $\frac{1}{T}\E_{\xi\sim\calD} \ell_{\pi_E}(\xi;\pi_E,\pi_\star)$.
The entries in the tables are computed
by rolling out $500$ test trajectories and computing the median
quantity over the test trajectories. Each algorithm is repeated for $50$ trials, and 
the median quantity $\pm \max(\text{80th percentile} - \text{median}, \text{median} - \text{20th percentile})$ (over the $50$ trials) is shown.
In Table~\ref{table:toy_p_delta_goal_err}, we see that as $p$ decreases, the
goal deviation error decreases, showing that the task becomes fundamentally
easier. This trend is also reflected in all the imitation learning algorithms. 
Table~\ref{table:toy_p_imitation_err} provides insight into why
the goal deviation error decreases with $p$, and also shows
that our main theorems are indeed predictive:
as $p$ decreases,
the closed-loop average imitation loss $\frac{1}{T} \E_{\xi\sim\calD} \ell_{\pi_E}(\xi;\pi_E,\pi_\star)$ generally decreases for all algorithms.
Finally, comparing a baseline algorithm with its \textbf{+IGS} variant,
we see generally that for a fixed $p$, the 
IGS constrained variant has both lower final goal deviation error
and lower imitation loss, showing that 
stability constraints reduce sample-complexity by trimming the hypothesis space.

\subsection{Unitree Laikago}
We now study IL on
the Unitree Laikago robot, an 18-dof quadruped with 3-dof of actuation per leg.
We use PyBullet~\citep{coumans2021} for our simulations.
The goal of this experiment is to demonstrate, much like for the previous tuneable family of IGS systems, that increasing the stability
of the underlying closed-loop expert decreases the sample-complexity of imitation learning.
We do this qualitatively by studying a sideways walking task 
where the robot tracks a constant sideways linear velocity.
By increasing the desired linear velocity, the resulting expert closed-loop becomes more unstable.

Our expert controller is a model-based predictive controller
using a simplified center-of-mass dynamics as described in~\citet{dicarlo18mitcheetah}.
The stance and swing legs are controlled separately. The swing leg controller is based
on a proportional-derivative (PD) controller.
The stance leg controller solves for the desired contact forces to be applied at the foot
using a finite-horizon constrained linear-quadratic optimal
control problem; the linear model is computed from linearizing the center-of-mass dynamics.
The desired contact forces are then converted to hip motor torques using the body Jacobian. 
More details about the expert controller can be found in the appendix.

We restrict our imitation learning to the stance leg controller, as it is
significantly more complex than the swing leg controller. 
Furthermore, instead of randomizing over initial conditions, we inject randomization
into the environment by subjecting the Laikago to a sequence of random push forces throughout
the entire trajectory. We
compare the performance of \textbf{BC}, \textbf{CMILe}, and \textbf{CMILe+Agg}; the \textbf{CMILe+Agg} algorithm is identical to \textbf{CMILe}, except that at epoch $k$,
the data from previous epochs $j \in \{0, \dots, k-1\}$ is also used
in training. 
\textbf{DAgger} is omitted
for space reasons as its performance is comparable to \textbf{CMILe+Agg}.

We set the horizon length to $T=1000$, and 
featurized the robot state into a $14$-dimensional feature vector; the exact features are given in Appendix~\ref{sec:appendix:laikago:features}.
The output of the policy is a $12$-dimensional vector ($x,y,z$ contact forces for each of the $4$ legs).
We used a policy class of two layer MLPs of hidden width $64$ with ReLU activations.
For training, we ran $500$ epochs of Adam with a batch size of $512$ and step size of $0.001$. 
Furthermore, we tried to overcome the effect of overfitting in \textbf{BC} by using
the following heuristic: we used $5\%$ of the training data as a holdout set, and we stopped training
when either the holdout risk increased $h=50$ times or $500$ epochs were completed, whichever came first.
To assess the effect of the number of samples on imitation learning, we vary the number
of rollouts per epoch $S \in \{1, \dots, 5\}$. 
For \textbf{CMILe} and \textbf{CMILe+Agg},
we fix $\alpha=0.3$ and $E=12$.
We provide \textbf{BC} with $S \times E$ total trajectories.

\begin{figure}[htb]
\centering
\includegraphics[width=0.92\textwidth]{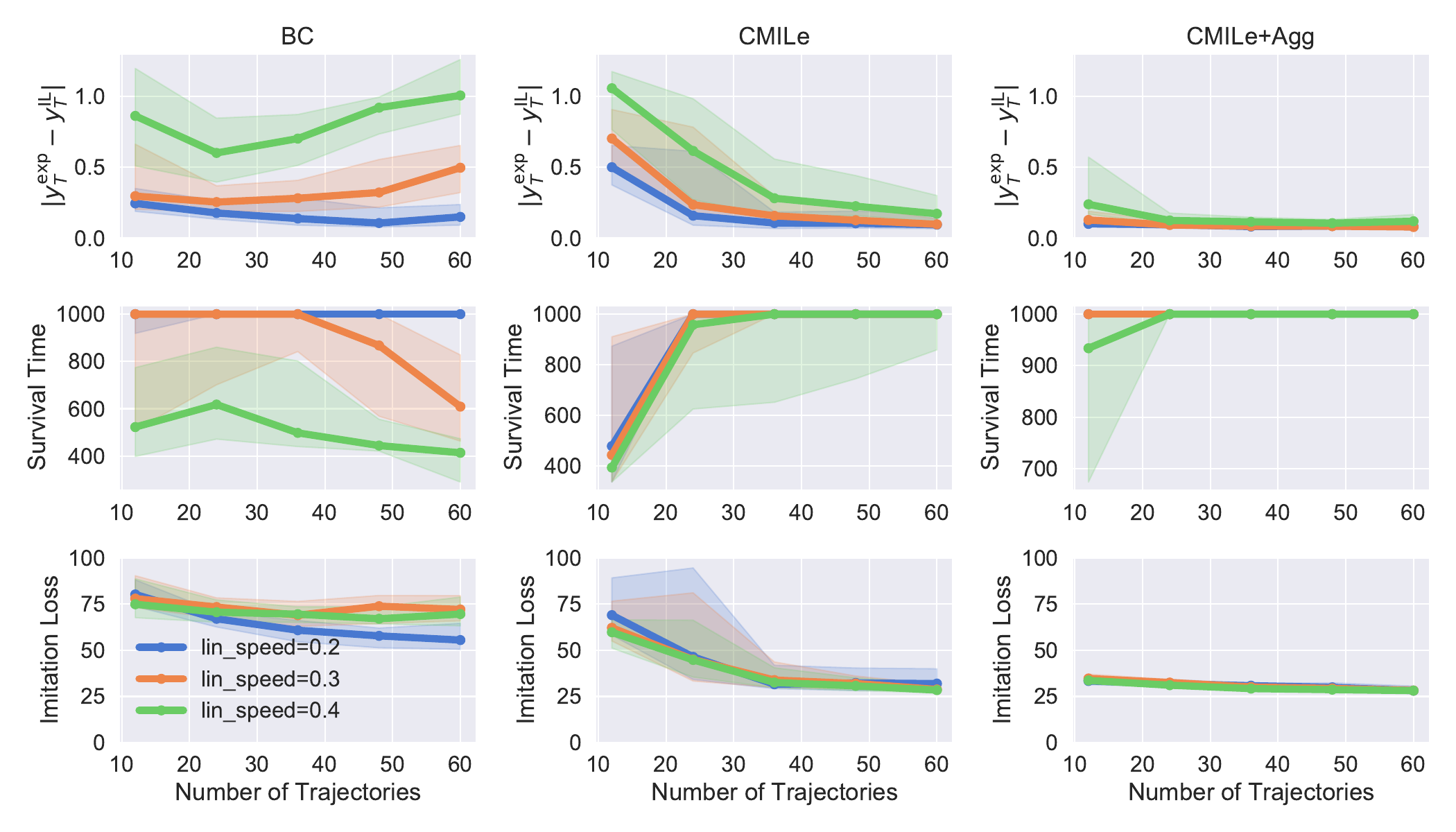}
\caption{Imitation learning on a sideways walking task.
The top row shows the deviation error $\abs{y_T^{\mathrm{exp}} - y_T^{\mathrm{IL}}}$
of the various algorithms,
the middle row shows the survival times,
and the bottom row plots the average closed-loop imitation
loss $\frac{1}{T} \E_{\xi\sim\calD} \ell_{\pi_E}(\xi;\pi_E,\pi_\star)$.}
\label{fig:laikago_walk_sideways}
\end{figure}

Figure~\ref{fig:laikago_walk_sideways} shows the result of our experiments.
In the top row, we plot
the deviation error $\abs{y_T^{\mathrm{exp}} - y_T^{\mathrm{IL}}}$
between the expert's final $y$ position ($y_T^{\mathrm{exp}}$)
and the imitation learning algorithm's final $y$ position ($y_T^{\mathrm{IL}}$). Both positions are measured in meters.
We observe that as the target linear speed (measured in $m/s$) decreases, 
the deviation between the expert and IL algorithms also decreases;
this qualitative trend is consistent with Theorem~\ref{thm:main_bc}
and Theorem~\ref{thm:main_shift}.
Note that the $y$ positions are computed by subjecting the Laikago
to the same sequence of random force pushes for both the expert and IL algorithm.
For the IL algorithm, if the rollout terminates early,
then the last $y$ position (before termination) is used in place
of $y_T^{\mathrm{IL}}$.
In the middle row, we plot the survival times for each of the algorithms, which
is the number of simulation steps (out of $1000$) that the robot successfully executes
before a termination criterion triggers which indicates the robot is about to fall.
We see that for all algorithms, by decreasing the sideways linear velocity,
the resulting learned policy is able to avoid falling more.
Note that for \textbf{CMILe+Agg}, the learned policy does not fall for
linear speeds $0.2$ and $0.3$.
In the bottom row, we plot the average closed-loop imitation loss
$\frac{1}{T} \E_{\xi\sim\calD} \ell_{\pi_E}(\xi;\pi_E,\pi_\star)$.
We see that for \textbf{BC}, the imitation loss shows improvement with increased samples for linear speed of $0.2$, but no improvements occur for the more difficult linear speeds of $0.3$ and $0.4$. This trend is less apparent for
\textbf{CMILe} and \textbf{CMILe+Agg}, but is most prominently seen in the deviation error $|y_T^{\mathrm{exp}}-y_T^{\mathrm{IL}}|$.

\section{Conclusions and Future Work}
\label{sec:conclusion}
We showed that IGS-constrained IL algorithms allow for a granular connection between the stability properties of an underlying expert system and the resulting sample-complexity of an IL task.  Our future work will focus on two complementary directions.  First, CMILe and DAgger significantly outperform BC in our experiments, but our bounds are not yet able to capture this: future work will look to close this gap.  Second, although our focus in this paper has been on imitation learning, we have developed a general framework for reasoning about learning over trajectories in continuous state and action spaces. We will look to apply our framework in other settings, such as safe exploration and model-based reinforcement learning.

\section*{Acknowledgements}
The authors would like to thank Vikas Sindhwani, Sumeet Singh,
Andy Zeng, and Lisa Zhao for their valuable comments and suggestions.
NM is generously supported by NSF award CPS-2038873, NSF CAREER award ECCS-2045834, and a Google Research Scholar Award.

\bibliographystyle{unsrtnat}
\bibliography{paper}

\begin{thebibliography}{41}
\providecommand{\natexlab}[1]{#1}
\providecommand{\url}[1]{\texttt{#1}}
\expandafter\ifx\csname urlstyle\endcsname\relax
  \providecommand{\doi}[1]{doi: #1}\else
  \providecommand{\doi}{doi: \begingroup \urlstyle{rm}\Url}\fi

\bibitem[Hussein et~al.(2017)Hussein, Gaber, Elyan, and
  Jayne]{hussein2017imitation}
Ahmed Hussein, Mohamed~M. Gaber, Eyad Elyan, and Chrisina Jayne.
\newblock Imitation learning: A survey of learning methods.
\newblock \emph{ACM Computing Surveys (CSUR)}, 50\penalty0 (2):\penalty0 1--35,
  2017.

\bibitem[Osa et~al.(2018)Osa, Pajarinen, Neumann, Bagnell, Abbeel, and
  Peters]{osa2018algorithmic}
Takayuki Osa, Joni Pajarinen, Gerhard Neumann, J.~Andrew Bagnell, Pieter
  Abbeel, and Jan Peters.
\newblock An algorithmic perspective on imitation learning.
\newblock \emph{Foundations and Trends® in Robotics}, 7\penalty0
  (1--2):\penalty0 1--179, 2018.

\bibitem[Sun et~al.(2017)Sun, Venkatraman, Gordon, Boots, and
  Bagnell]{sun2017deeply}
Wen Sun, Arun Venkatraman, Geoffrey~J. Gordon, Byron Boots, and J.~Andrew
  Bagnell.
\newblock Deeply aggrevated: Differentiable imitation learning for sequential
  prediction.
\newblock In \emph{International Conference on Machine Learning}, 2017.

\bibitem[Ross et~al.(2011)Ross, Gordon, and Bagnell]{ross2011reduction}
St{\'e}phane Ross, Geoffrey~J. Gordon, and J.~Andrew Bagnell.
\newblock A reduction of imitation learning and structured prediction to
  no-regret online learning.
\newblock In \emph{International Conference on Artificial Intelligence and
  Statistics}, 2011.

\bibitem[Hertneck et~al.(2018)Hertneck, K{\"o}hler, Trimpe, and
  Allg{\"o}wer]{hertneck2018learning}
Michael Hertneck, Johannes K{\"o}hler, Sebastian Trimpe, and Frank
  Allg{\"o}wer.
\newblock Learning an approximate model predictive controller with guarantees.
\newblock \emph{IEEE Control Systems Letters}, 2\penalty0 (3):\penalty0
  543--548, 2018.

\bibitem[Yin et~al.(2020)Yin, Seiler, Jin, and Arcak]{yin2020imitation}
He~Yin, Peter Seiler, Ming Jin, and Murat Arcak.
\newblock Imitation learning with stability and safety guarantees.
\newblock \emph{arXiv preprint arXiv:2012.09293}, 2020.

\bibitem[Ross and Bagnell(2010)]{ross2010efficient}
St{\'e}phane Ross and J.~Andrew Bagnell.
\newblock Efficient reductions for imitation learning.
\newblock In \emph{International Conference on Artificial Intelligence and
  Statistics}, 2010.

\bibitem[Schaal(1999)]{schaal1999imitation}
Stefan Schaal.
\newblock Is imitation learning the route to humanoid robots?
\newblock \emph{Trends in cognitive sciences}, 3\penalty0 (6):\penalty0
  233--242, 1999.

\bibitem[Codevilla et~al.(2018)Codevilla, Miiller, L{\'o}pez, Koltun, and
  Dosovitskiy]{codevilla2018end}
Felipe Codevilla, Matthias Miiller, Antonio L{\'o}pez, Vladlen Koltun, and
  Alexey Dosovitskiy.
\newblock End-to-end driving via conditional imitation learning.
\newblock In \emph{2018 IEEE International Conference on Robotics and
  Automation (ICRA)}, 2018.

\bibitem[Lee et~al.(2018)Lee, Saigol, and Theodorou]{lee2018safe}
Keuntaek Lee, Kamil Saigol, and Evangelos~A. Theodorou.
\newblock Safe end-to-end imitation learning for model predictive control.
\newblock \emph{arXiv preprint arXiv:1803.10231}, 2018.

\bibitem[Menda et~al.(2017)Menda, Driggs-Campbell, and
  Kochenderfer]{menda2017dropoutdagger}
Kunal Menda, Katherine Driggs-Campbell, and Mykel~J. Kochenderfer.
\newblock Dropoutdagger: A bayesian approach to safe imitation learning.
\newblock \emph{arXiv preprint arXiv:1709.06166}, 2017.

\bibitem[Menda et~al.(2019)Menda, Driggs-Campbell, and
  Kochenderfer]{menda2018ensembledagger}
Kunal Menda, Katherine Driggs-Campbell, and Mykel~J. Kochenderfer.
\newblock Ensembledagger: A bayesian approach to safe imitation learning.
\newblock In \emph{2019 IEEE/RSJ International Conference on Intelligent Robots
  and Systems (IROS)}, 2019.

\bibitem[Ren et~al.(2020)Ren, Veer, and Majumdar]{ren2020generalization}
Allen~Z. Ren, Sushant Veer, and Anirudha Majumdar.
\newblock Generalization guarantees for imitation learning.
\newblock In \emph{Conference on Robot Learning}, 2020.

\bibitem[Sindhwani et~al.(2018)Sindhwani, Tu, and
  Khansari]{sindhwani2018learning}
Vikas Sindhwani, Stephen Tu, and Mohi Khansari.
\newblock Learning contracting vector fields for stable imitation learning.
\newblock \emph{arXiv preprint arXiv:1804.04878}, 2018.

\bibitem[Lohmiller and Slotine(1998)]{lohmiller1998contraction}
Winfried Lohmiller and Jean-Jacques~E. Slotine.
\newblock On contraction analysis for non-linear systems.
\newblock \emph{Automatica}, 34\penalty0 (6):\penalty0 683--696, 1998.

\bibitem[Singh et~al.(2020)Singh, Richards, Sindhwani, Slotine, and
  Pavone]{singh2020learning}
Sumeet Singh, Spencer~M. Richards, Vikas Sindhwani, Jean-Jacques~E. Slotine,
  and Marco Pavone.
\newblock Learning stabilizable nonlinear dynamics with contraction-based
  regularization.
\newblock \emph{The International Journal of Robotics Research}, 2020.

\bibitem[Lemme et~al.(2014)Lemme, Neumann, Reinhart, and
  Steil]{lemme2014neural}
Andre Lemme, Klaus Neumann, R.~Felix Reinhart, and Jochen~J. Steil.
\newblock Neural learning of vector fields for encoding stable dynamical
  systems.
\newblock \emph{Neurocomputing}, 141:\penalty0 3--14, 2014.

\bibitem[Ravichandar et~al.(2017)Ravichandar, Salehi, and
  Dani]{ravichandar2017learning}
Harish Ravichandar, Iman Salehi, and Ashwin Dani.
\newblock Learning partially contracting dynamical systems from demonstrations.
\newblock In \emph{Conference on Robot Learning}, 2017.

\bibitem[Pomerleau(1989)]{pomerleau1989alvinn}
Dean~A. Pomerleau.
\newblock Alvinn: An autonomous land vehicle in a neural network.
\newblock In \emph{Neural Information Processing Systems}, 1989.

\bibitem[Laskey et~al.(2017)Laskey, Lee, Fox, Dragan, and
  Goldberg]{laskey2017dart}
Michael Laskey, Jonathan Lee, Roy Fox, Anca Dragan, and Ken Goldberg.
\newblock Dart: Noise injection for robust imitation learning.
\newblock In \emph{Conference on Robot Learning}, 2017.

\bibitem[Tran et~al.(2016)Tran, R{\"{u}}ffer, and
  Kellett]{tran16incrementaliss}
Duc~N. Tran, Bj{\"{o}}rn~S. R{\"{u}}ffer, and Christopher~M. Kellett.
\newblock Incremental stability properties for discrete-time systems.
\newblock In \emph{2016 IEEE 55th Conference on Decision and Control (CDC)},
  2016.

\bibitem[Wensing and Slotine(2020)]{wensing20beyondconvexity}
Patrick~M. Wensing and Jean-Jacques~E. Slotine.
\newblock Beyond convexity—contraction and global convergence of gradient
  descent.
\newblock \emph{PLOS ONE}, 15\penalty0 (8):\penalty0 1--29, 2020.

\bibitem[Schulman et~al.(2015)Schulman, Levine, Abbeel, Jordan, and
  Moritz]{schulman2015trust}
John Schulman, Sergey Levine, Pieter Abbeel, Michael Jordan, and Philipp
  Moritz.
\newblock Trust region policy optimization.
\newblock In \emph{International Conference on Machine Learning}, 2015.

\bibitem[Luo et~al.(2019)Luo, Xu, Li, Tian, Darrell, and
  Ma]{luo2018algorithmic}
Yuping Luo, Huazhe Xu, Yuanzhi Li, Yuandong Tian, Trevor Darrell, and Tengyu
  Ma.
\newblock Algorithmic framework for model-based deep reinforcement learning
  with theoretical guarantees.
\newblock In \emph{International Conference on Learning Representations}, 2019.

\bibitem[Hennigan et~al.(2020)Hennigan, Cai, Norman, and
  Babuschkin]{haiku2020github}
Tom Hennigan, Trevor Cai, Tamara Norman, and Igor Babuschkin.
\newblock {H}aiku: {S}onnet for {JAX}, 2020.
\newblock URL \url{http://github.com/deepmind/dm-haiku}.

\bibitem[Hessel et~al.(2020)Hessel, Budden, Viola, Rosca, Sezener, and
  Hennigan]{optax2020github}
Matteo Hessel, David Budden, Fabio Viola, Mihaela Rosca, Eren Sezener, and Tom
  Hennigan.
\newblock Optax: composable gradient transformation and optimisation, in jax!,
  2020.
\newblock URL \url{http://github.com/deepmind/optax}.

\bibitem[Bradbury et~al.(2018)Bradbury, Frostig, Hawkins, Johnson, Leary,
  Maclaurin, Necula, Paszke, Vander{P}las, Wanderman-{M}ilne, and
  Zhang]{jax2018github}
James Bradbury, Roy Frostig, Peter Hawkins, Matthew~James Johnson, Chris Leary,
  Dougal Maclaurin, George Necula, Adam Paszke, Jake Vander{P}las, Skye
  Wanderman-{M}ilne, and Qiao Zhang.
\newblock {JAX}: composable transformations of {P}ython+{N}um{P}y programs,
  2018.
\newblock URL \url{http://github.com/google/jax}.

\bibitem[Chamon and Ribeiro(2020)]{chamon2020probably}
Luiz Chamon and Alejandro Ribeiro.
\newblock Probably approximately correct constrained learning.
\newblock In \emph{Neural Information Processing Systems}, 2020.

\bibitem[Boffi et~al.(2020)Boffi, Tu, Matni, Slotine, and
  Sindhwani]{boffi2020learning}
Nicholas~M. Boffi, Stephen Tu, Nikolai Matni, Jean-Jacques~E. Slotine, and
  Vikas Sindhwani.
\newblock Learning stability certificates from data.
\newblock In \emph{Conference on Robot Learning}, 2020.

\bibitem[Kenanian et~al.(2019)Kenanian, Balkan, Jungers, and
  Tabuada]{kenanian2019data}
Joris Kenanian, Ayca Balkan, Raphael~M. Jungers, and Paulo Tabuada.
\newblock Data driven stability analysis of black-box switched linear systems.
\newblock \emph{Automatica}, 109:\penalty0 108533, 2019.

\bibitem[Manek and Kolter(2019)]{manek2020learning}
Gaurav Manek and J.~Zico Kolter.
\newblock Learning stable deep dynamics models.
\newblock In \emph{Neural Information Processing Systems}, 2019.

\bibitem[Chang et~al.(2019)Chang, Roohi, and Gao]{chang2020neural}
Ya-Chien Chang, Nima Roohi, and Sicun Gao.
\newblock Neural lyapunov control.
\newblock In \emph{Neural Information Processing Systems}, 2019.

\bibitem[Giesl et~al.(2020)Giesl, Hamzi, Rasmussen, and
  Webster]{giesl2016approximation}
Peter Giesl, Boumediene Hamzi, Martin Rasmussen, and Kevin Webster.
\newblock Approximation of lyapunov functions from noisy data.
\newblock \emph{Journal of Computational Dynamics}, 7\penalty0 (1):\penalty0
  57--81, 2020.

\bibitem[Coumans and Bai(2016--2021)]{coumans2021}
Erwin Coumans and Yunfei Bai.
\newblock Pybullet, a python module for physics simulation for games, robotics
  and machine learning.
\newblock \url{http://pybullet.org}, 2016--2021.

\bibitem[Di~Carlo et~al.(2018)Di~Carlo, Wensing, Katz, Bledt, and
  Kim]{dicarlo18mitcheetah}
Jared Di~Carlo, Patrick~M. Wensing, Benjamin Katz, Gerardo Bledt, and Sangbae
  Kim.
\newblock Dynamic locomotion in the mit cheetah 3 through convex
  model-predictive control.
\newblock In \emph{2018 IEEE/RSJ International Conference on Intelligent Robots
  and Systems (IROS)}, 2018.

\bibitem[Zhou et~al.(1996)Zhou, Doyle, and Glover]{zhou1996robust}
Kemin Zhou, John~C. Doyle, and Keith Glover.
\newblock \emph{Robust and optimal control}.
\newblock Prentice Hall, 1996.

\bibitem[Dullerud and Paganini(2013)]{dullerud2013course}
Geir~E. Dullerud and Fernando Paganini.
\newblock \emph{A course in robust control theory: a convex approach},
  volume~36.
\newblock Springer Science \& Business Media, 2013.

\bibitem[Boffi et~al.(2021)Boffi, Tu, and Slotine]{boffi2020regret}
Nicholas~M. Boffi, Stephen Tu, and Jean-Jacques~E. Slotine.
\newblock Regret bounds for adaptive nonlinear control.
\newblock In \emph{3rd Annual Learning for Dynamics \& Control Conference},
  2021.

\bibitem[Pham(2008)]{pham08discrete}
Quang-Cuong Pham.
\newblock Analysis of discrete and hybrid stochastic systems by nonlinear
  contraction theory.
\newblock In \emph{2008 10th International Conference on Control, Automation,
  Robotics and Vision}, 2008.

\bibitem[Wainwright(2019)]{wainwright_book}
Martin~J. Wainwright.
\newblock \emph{High-Dimensional Statistics: A Non-Asymptotic Viewpoint}.
\newblock Cambridge University Press, 2019.

\bibitem[Bartlett and Mendelson(2002)]{bartlett2002rademacher}
Peter~L. Bartlett and Shahar Mendelson.
\newblock Rademacher and gaussian complexities: Risk bounds and structural
  results.
\newblock 3:\penalty0 463--482, 2002.

\end{thebibliography}

\appendix

\newpage
\tableofcontents
\newpage

\section{Stability Study}
\label{sec:appendix:stability}

\subsection{Constructing a Robust Lyapunov Function}
 In order to compute a robust Lyapunov function for use in the stability experiments of Section~\ref{sec:stability}, we use the following approach. Let $A_{\mathrm{lqr}}:= A + BK_{\mathrm{lqr}}$ and consider the following Lyapunov equation:
\begin{align}
A_{\mathrm{lqr}}^\T X A_{\mathrm{lqr}} - \gamma^2 X + \varepsilon I = 0, \label{eq:lqr_lyap}
\end{align}
for $\gamma\in(0,1)$ and $\varepsilon>0$.  Note that by rewriting this equation as \[
\left(\frac{A_{\mathrm{lqr}}}{\gamma}\right)^\T X \left(\frac{A_{\mathrm{lqr}}}{\gamma}\right) - X + \frac{\varepsilon}{\gamma^2}I = 0,
\] we see that this equation has a unique positive define solution so long as $A_{\mathrm{lqr}}/\gamma$ is stable, i.e., so long as $\gamma\geq \rho(A_{\mathrm{lqr}})$ \citep{zhou1996robust}.

Then note that we can rewrite the Lyapunov equation \eqref{eq:lqr_lyap} as
\[
A_{\mathrm{lqr}}^\T X A_{\mathrm{lqr}} - X + Q = 0,
\]
for $Q=(1-\gamma^2)X + \varepsilon I$.  To that end, we suggest solving the Lyapunov equation
\[
A_{\mathrm{lqr}}^\T X A_{\mathrm{lqr}} - X + Q = 0,
\]
with $Q = (1-\gamma^2)P_\star + \varepsilon I$, for a small $\varepsilon > 0$ and $P_\star$ the solution to the DARE, so as to obtain a a Lyapunov certificate with similar convergence properties to that of the expert, where $\varepsilon$ trades off between how small $\gamma$ can be and how robust the resulting Lyapunov function is to mismatches between the learned policy and the expert policy.  We note that the existence of solutions to this Lyapunov equation are guaranteed by continuity of the solution of the Lyapunov equation and that the solution to the DARE $P_\star$ is the maximizing solution among symmetric solutions \citep{dullerud2013course}.

\subsection{Experimental Results}

We study the effects of explicitly constraining the played policies $\{\pi_i\}_{i=1}^E$ to be IGS through the use of Lyapunov certificates.
The use of Lyapunov constraints to enforce incremental
stability was studied in Section G.1 of \citet{boffi2020regret},
with the main takeaway being that systems satisfying suitable exponential Lyapunov conditions, in particular those certifying exponential input-to-state-stability, are also
exponentially IGS (i.e., satisfy $a=b=1$) on a compact set of initial conditions and
bounded inputs.

In order to easily tune the underyling stability properties of the resulting expert system, we consider linear quadratic (LQ) control of a linear system $x_{t+1} = Ax_t + Bu_t$, where $A\in\R^{n\times n}$ and $B\in\R^{n\times d}$.
The LQ control problem can be expressed as
\begin{alignat}{2}
    &\minimize_{\{x_t\}, \{u_t\}}\ &&  \textstyle\sum_{t=0}^{\infty} x_t^\T Qx_t + u_t^\T Ru_t \label{eq:lqr-1} \\
    &\mathrm{subject\text{ }to} && x_{t+1} = Ax_t + Bu_t, \, x_0 = \xi. \nonumber
\end{alignat}
where $Q\in\R^{n \times n}$ and $R\in\R^{d \times d}$ are fixed cost matrices.  The optimal policy is {linear} in the state $x_t$, i.e., $u_t = K_{\mathrm{lqr}} x_t$, where $K_{\mathrm{lqr}}\in\R^{d \times n}$ can be computed by solving a discrete-time algebraic Riccati equation \citep{zhou1996robust}.  For our study, we set the task horizon $T=25$, the state $x_t\in\R^{10}$, the control input $u_t\in\R^4$, and we fix a randomly generated but unstable set of dynamics $(A,B)$.
Specifically, our realization satisfied
$\opnorm{A} = 5.893$, $\opnorm{B} = 4.964$,
and the open loop system was unstable with spectral radius $\rho(A) = 3.638$. 
We also set $R=I_4$, and $Q=\nu I_{10}$, and vary $\nu$ across three orders of magnitude: $\nu\in\{0.0001,0.001,0.01\}$.  In the limit of $\nu\to 0$, the optimal LQ controller is the minimum energy stabilizing controller, whereas for larger $\nu$, the optimal LQ controller balances between state-deviations and control effort.  We synthesize the optimal state-feedback LQR controller for the dynamics $(A,B)$ and prescribed cost matrices $(Q,R)$ by solving the Discrete Algebraic Riccati Equation (DARE) using \verb|scipy.linalg.solve_discrete_are|. 
The resulting closed loop LQR norms of the resulting systems for $\nu=0.01$, $0.001$, and $0.0001$ were $10.909$, $10.784$, and $10.739$ respectively.

We drew initial conditions according to the distribution $N(0, 4)$.  We used a policy parameterized by a two-hidden-layer feed-forward neural network with ReLU activations.  Each hidden layer in this network had a width of 64 neurons.  For both BC and CMILe without stability constraints, we train the policy for 500 epochs; for CMILe with stability constraints, we trained policies for 1000 epochs.  All neural networks were optimized with the Adam optimizer with a learning rate of $0.01$.

\begin{figure}[t]
    \centering
    \includegraphics[width=0.8\columnwidth]{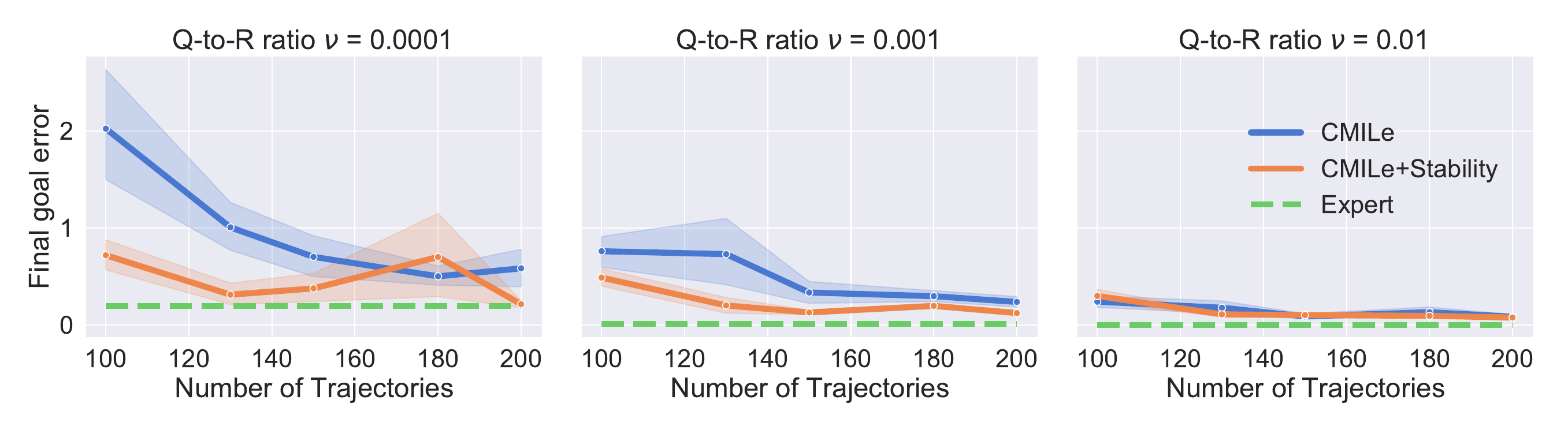}
    \caption{For fixed system matrices $(A,B)$ and cost matrices $R=I_q$ and $Q = \nu I_q$ for $\nu\in\{0.0001, 0.001, 0.01\}$, we show the median goal error obtained by CMILe with and without stability constraints over 100 i.i.d.\ test trajectories. }
    \label{fig:linear-system-errors}
\end{figure}

In Figure~\ref{fig:linear-system-errors}, we plot the median goal error $\norm{x_T}_2$ achieved by policies learned via CMILe for different values of $\nu$, both with and without Lyapunov stability constraints, over $100$ test rollouts.
The error bars represent the $20$th/$80$th percentiles of the median across ten independent trials.   We construct a valid robust Lyapunov certificate from the solution to the DARE (see Appendix \ref{sec:appendix:stability} for details), and use it to enforce 
the IGS constraint \eqref{eq:inc_gain_stability_constraint}
on the resulting closed loop dynamics $f_{cl}^{\pi_k}(x)= Ax + B\pi_k(x)$.  Two important trends can be observed in Figure~\ref{fig:linear-system-errors}. First, the smaller the weight $\nu$, the more dramatic the effect of the stability constraint, i.e., when the underlying expert is itself fragile (closed-loop spectral radius $\rho(A+BK_{\mathrm{lqr}})\approx 1$), stability constraints have a measurable effect. Second, the effect of stability constraints is more dramatic in low-data regimes, and by restricting $\pi_k\in \Pi_\Psi$, we reduce over-fitting.  We also evaluated the performance of standard behavior cloning (BC), but even with 200 training trajectories and $\nu=0.01$, the median final goal error was $504$.
\section{Laikago Experimental Details}
\label{sec:appendix:laikago}

\subsection{More Details on Expert Controller}
The expert controller contains multiple components: the swing controller, the stance controller, and the gait generator. The gait generator uses the clock source to generate a desired gait pattern, where a pair of diagonal legs are synchronized and are out of phase with the other pair. Throughout our experiments, we fixed the gait to be a trotting gait. The swing controller generates the aerial trajectories of the feet when they lift up and controls the landing positions based on the desired moving speed. The stance leg controller is based on model predictive control (MPC) using centroidal dynamics~\citep{dicarlo18mitcheetah}. 
Recall that in our experiments, we only perform imitation learning for the stance leg controller.

\begin{figure}[ht]
\centering
\includegraphics[width=0.5\textwidth]{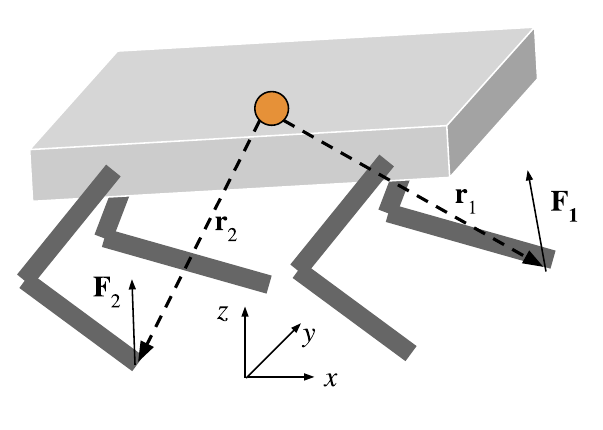}
\caption{The centroidal dynamics model used to formulate the model predictive control problem.}
\label{fig:mpc}
\end{figure}

We now describe the centroidal dynamics model. We treat the whole robot as a single rigid body, and assume that the inertia contribution from the leg movements is negligible (cf. Figure~\ref{fig:mpc}). With these assumptions, the system dynamics can be simply written using the Newton-Euler equations:
\begin{align*}
    m\mathbf{\ddot{x}} &= \sum_{i=1}^4 \mathbf{F}_i - \mathbf{g}, \\
    \frac{d}{dt}(\mathbf{I}\mathcal{\omega}) &= \sum_{i=1}^4 \mathbf{r}_i \times \mathbf{F}_i,
\end{align*}

where $\mathbf{x}=(x, y, z, \Phi,\Theta, \Psi)$ denotes the center of mass (CoM) translation and rotation, $\mathbf{F}_i = (f_{x}, f_{y}, f_{z})_i$ is the contact force applied on the $i$-th foot (set to zero if the $i$-th foot is not in contact with the ground), and $\mathbf{r}_i$ is the displacement from the CoM to the contact point. We used the same $Z-Y-X$ Euler angle conventions in~\citep{dicarlo18mitcheetah} to represent the CoM rotation. Since the robot operates in a regime where its base is close to flat, singularity from the Euler angle representation is not an concern. 

The MPC module solves an optimization problem over a finite horizon $H$ to track a desired pose and velocity $\mathbf{q} = (\mathbf{x}, \mathbf{\dot{x}})$. The system dynamics can be linearized around the desired state and discretized:
\begin{align*}
    \mathbf{q}_{t+1} = A(\mathbf{q}^d) \mathbf{q}_t + B(\mathbf{q}^d)\mathbf{u}_t,
\end{align*}
where $\mathbf{u}_t = (\mathbf{F}_{1, t}, \mathbf{F}_{2, t}, \mathbf{F}_{3, t}, \mathbf{F}_{4, t})$, is the concatenated force vectors from all feet. We then formally write the optimization target:
\begin{align*}
    \min_{\mathbf{u}_t} &\quad \sum_{t=1}^H [(\mathbf{q}_t - \mathbf{q}^d_t)^T\mathbf{Q}(\mathbf{q}_t - \mathbf{q}^d_t) + \mathbf{u}_t^T\mathbf{R}\mathbf{u}_t], \\
    \textrm{s.t.} &\quad \mathbf{q}_{t+1} = A \mathbf{q}_t + B\mathbf{u}_t, \\
                  &\quad \mathbf{F}_{i,t} = 0 \:\:\textrm{if $i$-th foot is not in contact}, \\
                  &\quad 0 \leq f_{z,t} \leq f_{max}, \quad\textrm{contact normal force for each foot}, \\
                  &\quad -\mu f_{z, t} \leq f_{x, t} \leq \mu f_{z, t}, \\
                  &\quad -\mu f_{z, t} \leq f_{y, t} \leq \mu f_{z, t}, \quad\textrm{friction cone},
\end{align*}
where we used a diagonal $\mathbf{Q}$ and $\mathbf{R}$ matrix with the weights detailed in~\citep{dicarlo18mitcheetah}. At runtime, we apply the feet contact forces from the first step by converting them to motor torques using the Jacobian matrix.

\subsection{Featurization}
\label{sec:appendix:laikago:features}
The inputs to the MPC algorithm is a 28 dimension vector $(\mathbf{q}, \mathbf{q}^d, \mathbf{c}, \mathbf{r})$, where $\mathbf{c}$ is a binary vector indicating feet contact states, and $\mathbf{r} = (\mathbf{r}_1, \mathbf{r}_2, \mathbf{r}_3, \mathbf{r}_4)$ represent the relative displacement from the CoM to each feet. This representation contains redundant information, since one can infer the body height $z$ from the contact state, and local feet displacements when the quadruped is walking on flat ground. Also, since in our experiments the desired pose and speed of the robot are fixed (moving along $y$ direction at constant speed without body rotation), they are not passed as inputs to the imitation policy. Furthermore, the current body linear velocities of the robot are omitted from the inputs as well, since they are not directly measurable on a legged robots without motion capture systems or state estimators. As a result, the inputs to the imitation policy are condensed to a 14 dimensional vector $(\Phi,\Theta, \mathbf{r\cdot c})$, i.e., the roll, pitch angle of the CoM, and the contact state masked feet positions.

\section{Incremental Gain Stability Proofs}
\label{sec:app:inc_gain_stability}

\subsection{Preliminaries}

We first prove a simple proposition which we use repeatedly.
\begin{myprop}
\label{prop:min_max_many_to_two}
Let $a_1, \dots, a_k \in \R$.
Then for any $x \in \R$, we have:
\begin{align*}
    \min\{ \abs{x}^{a_1}, \dots, \abs{x}^{a_k} \} &= \min\{ \abs{x}^{\min\{a_1, \dots, a_k\}}, \abs{x}^{\max\{a_1, \dots, a_k\}} \}, \\
    \max\{ \abs{x}^{a_1}, \dots, \abs{x}^{a_k} \} &= \max\{ \abs{x}^{\min\{a_1, \dots, a_k\}}, \abs{x}^{\max\{a_1, \dots, a_k\}} \}.
\end{align*}
\end{myprop}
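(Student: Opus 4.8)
The statement is elementary and the plan is to reduce to the single-variable behavior of the map $a \mapsto |x|^a$ on $[1,\infty)$, splitting into the cases $|x| \leq 1$ and $|x| > 1$ (the case $|x| = 1$ being trivial since every power equals $1$).

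\textbf{Case $|x| \leq 1$.} Here the map $a \mapsto |x|^a$ is non-increasing on $[1,\infty)$: if $a \leq a'$ then $|x|^{a'} \leq |x|^a$. Hence among $|x|^{a_1}, \dots, |x|^{a_k}$ the smallest is obtained at the largest exponent and the largest at the smallest exponent, i.e.\ $\min_j |x|^{a_j} = |x|^{\max_j a_j}$ and $\max_j |x|^{a_j} = |x|^{\min_j a_j}$. On the other hand, for the two-element set $\{\min_j a_j, \max_j a_j\}$ the same monotonicity gives $\min\{|x|^{\min_j a_j}, |x|^{\max_j a_j}\} = |x|^{\max_j a_j}$ and $\max\{|x|^{\min_j a_j}, |x|^{\max_j a_j}\} = |x|^{\min_j a_j}$. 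Comparing, both identities hold.

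\textbf{Case $|x| > 1$.} Now $a \mapsto |x|^a$ is increasing, so the roles reverse: $\min_j |x|^{a_j} = |x|^{\min_j a_j}$, $\max_j |x|^{a_j} = |x|^{\max_j a_j}$, and likewise for the two-element set the min is at the smaller exponent and the max at the larger. Again both identities hold. (One can also phrase both cases uniformly by noting that the finite set $\{|x|^{a_j}\}$ always has its extremes attained at the extreme exponents, regardless of the direction of monotonicity, since $a \mapsto |x|^a$ is monotone on $[1,\infty)$.)

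\textbf{Main obstacle.} There is essentially none; the only thing to be careful about is not to assume a fixed direction of monotonicity — the whole point of the statement is that whichever of $\min_j a_j$ or $\max_j a_j$ realizes the overall minimum depends on whether $|x|$ is below or above $1$, but in all cases the extreme \emph{values} are realized at the extreme \emph{exponents}, which is exactly what makes the right-hand sides (which only involve those two extreme exponents) agree with the left-hand sides. I would therefore write the proof in the uniform monotonicity phrasing to avoid duplicating the argument, and simply remark that $|x| = 1$ is immediate.
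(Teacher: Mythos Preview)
Your proof is correct and follows essentially the same approach as the paper: split into the cases $|x| \leq 1$ and $|x| > 1$, use the monotonicity of $a \mapsto |x|^a$ in each case to identify where the extreme values are attained, and conclude. The paper's version is terser (it only writes out the $\min$ case and leaves $\max$ as ``nearly identical''), but the underlying argument is the same.
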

\begin{proof}
We only prove the result for $\min$ since the proof for $\max$ is nearly identical.
First, suppose that $\abs{x} \leq 1$. Then, the map $a \mapsto \abs{x}^a$ is decreasing on $\R$, and hence:
\begin{align*}
    \min\{ \abs{x}^{a_1}, \dots, \abs{x}^{a_k} \} = \abs{x}^{\max\{a_1, \dots, a_k\}} = \min\{ \abs{x}^{\min\{a_1, \dots, a_k\}}, \abs{x}^{\max\{a_1, \dots, a_k\}} \}.
\end{align*}
Now, suppose that $\abs{x} > 1$. The map $a \mapsto \abs{x}^a$ is increasing on $\R$, and hence:
\begin{align*}
    \min\{ \abs{x}^{a_1}, \dots, \abs{x}^{a_k} \} = \abs{x}^{\min\{a_1, \dots, a_k\}} = \min\{ \abs{x}^{\min\{a_1, \dots, a_k\}}, \abs{x}^{\max\{a_1, \dots, a_k\}} \}.
\end{align*}
The claim now follows.
\end{proof}

We now derive some basic consequences of the definition of incremental gain stability.
The following helper proposition will be useful for what follows.
\begin{myprop}
\label{prop:signal_holder}
Fix any initial conditions $\xi_1, \xi_2 \in \R^n$,
and any signal $\{u_t\}_{t \geq 0}$. Let
\begin{align*}
    \Delta_t := \flow_t(\xi_1, \{u_t\}_{t \geq 0}) - \flow_t(\xi_2, \{0\}_{t \geq 0}).
\end{align*}
For any $a \in [1, \infty)$ and 
integers $T_1, T_2$ satisfying $0 \leq T_1 \leq T_2$, we have:
\begin{align*}
    \sum_{t=T_1}^{T_2} \norm{\Delta_t}_X \leq \left( \sum_{t=T_1}^{T_2} \min\{\norm{\Delta_t}_X, \norm{\Delta_t}_X^{a}\} \right)^{1/a} (T_2-T_1+1)^{1 - 1/a} + \sum_{t=T_1}^{T_2} \min\{\norm{\Delta_t}_X, \norm{\Delta_t}_X^{a}\}.
\end{align*}
\end{myprop}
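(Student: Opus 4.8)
The plan is to split the index set $\{T_1,\dots,T_2\}$ into two parts according to the magnitude of $\norm{\Delta_t}_X$ relative to $1$, since this is exactly where the $\min$ in the definition of incremental gain stability switches between the exponent $a\wedge a_0$ and the exponent $a\vee a_1$. Concretely, let $S_{\mathrm{small}} := \{t : T_1\leq t\leq T_2,\ \norm{\Delta_t}_X \leq 1\}$ and $S_{\mathrm{big}} := \{t : T_1\leq t\leq T_2,\ \norm{\Delta_t}_X > 1\}$. On $S_{\mathrm{big}}$ we have $\norm{\Delta_t}_X \geq 1$ and $a\wedge a_0 \geq 1$, so $\min\{\norm{\Delta_t}_X^{a\wedge a_0},\norm{\Delta_t}_X^{a\vee a_1}\} = \norm{\Delta_t}_X^{a\wedge a_0}$; on $S_{\mathrm{small}}$ we similarly get $\min\{\cdot,\cdot\} = \norm{\Delta_t}_X^{a\vee a_1}$. (Alternatively one can invoke Proposition~\ref{prop:min_max_many_to_two} to identify the active branch, but the direct case split is cleaner here.)

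Next I would apply a power-mean / Hölder inequality on each part separately. For $S_{\mathrm{big}}$, write $\norm{\Delta_t}_X = \bigl(\norm{\Delta_t}_X^{a\wedge a_0}\bigr)^{1/(a\wedge a_0)}$ and apply the inequality $\sum_{t\in S} y_t^{1/r} \leq \bigl(\sum_{t\in S} y_t\bigr)^{1/r} |S|^{1-1/r}$ with $r = a\wedge a_0 \geq 1$ and $y_t = \norm{\Delta_t}_X^{a\wedge a_0}$; this is just concavity of $x\mapsto x^{1/r}$ (Jensen) applied to the uniform average over $S$, then rescaling. Since $|S_{\mathrm{big}}| \leq T_2-T_1+1$ and $\sum_{t\in S_{\mathrm{big}}} y_t \leq \sum_{t=T_1}^{T_2}\min\{\norm{\Delta_t}_X^{a\wedge a_0},\norm{\Delta_t}_X^{a\vee a_1}\}$ (all summands nonnegative), this bounds $\sum_{t\in S_{\mathrm{big}}}\norm{\Delta_t}_X$ by the first term on the right-hand side. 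The same argument with $r = a\vee a_1$ handles $S_{\mathrm{small}}$, giving the second term. Adding the two bounds and using $\sum_{t=T_1}^{T_2}\norm{\Delta_t}_X = \sum_{S_{\mathrm{small}}} + \sum_{S_{\mathrm{big}}}$ yields the claim.

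There is essentially no hard step here — the only thing to be slightly careful about is the edge cases where $S_{\mathrm{small}}$ or $S_{\mathrm{big}}$ is empty (then the corresponding Hölder bound is $0\leq$ something nonnegative, fine, because $0^{1-1/r}$ should be read as making the whole product zero, or one simply drops that part) and the trivial case $r=1$ where the exponent $1-1/r$ is $0$ and the inequality is an equality. I would also note that the two exponents $1-\tfrac{1}{a\wedge a_0}$ and $1-\tfrac{1}{a\vee a_1}$ lie in $[0,1)$, so the bound is meaningful; no appeal to the IGS inequality \eqref{eq:inc_gain_stability_defn} itself is needed at this stage — the proposition is a purely algebraic consequence of the definition of $\Delta_t$ and the structure of the left-hand side, so it holds for any trajectory, which is presumably why it is stated as a standalone helper. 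If one prefers to avoid the case split entirely, an alternative is: for every $t$, $\norm{\Delta_t}_X \leq \min\{\dots\}^{1/(a\wedge a_0)} + \min\{\dots\}^{1/(a\vee a_1)}$ (since one of the two terms equals $\norm{\Delta_t}_X$ and the other is $\geq 0$), then apply Hölder to each of the two resulting sums over the full range $\{T_1,\dots,T_2\}$; this is marginally lossier in the constants but gives exactly the stated inequality and avoids bookkeeping about $|S_{\mathrm{small}}|, |S_{\mathrm{big}}|$.
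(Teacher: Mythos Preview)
The proposal is correct and essentially identical to the paper's proof: the paper splits the index set into $I := \{t : \norm{\Delta_t}_X^{a\wedge a_0} \leq \norm{\Delta_t}_X^{a\vee a_1}\}$ and $I^c$, which (since $a\wedge a_0 \leq a\vee a_1$) is exactly your split into $S_{\mathrm{big}}$ and $S_{\mathrm{small}}$ up to the boundary $\norm{\Delta_t}_X = 1$, and then applies H\"older on each piece with the corresponding exponent before enlarging the partial sums and set sizes to the full range. Your handling of edge cases and the alternative ``pointwise'' argument at the end are fine but unnecessary embellishments.
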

\begin{proof}
Let the index set $I \subseteq \{ T_1, \dots, T_2 \}$ be defined as:
\begin{align*}
    I := \{ t \in \{T_1, \dots, T_2\} \mid \norm{\Delta_t}_X \leq 1 \}.
\end{align*}
By H{\"{o}}lder's inequality, since $a \in [1, \infty)$,
\begin{align*}
    \sum_{t=T_1}^{T_2} \norm{\Delta_t}_X &= \sum_{t \in I} \norm{\Delta_t}_X + \sum_{t \in I^c} \norm{\Delta_t}_X \\
    &\leq \left( \sum_{t\in I} \norm{\Delta_t}_X^{a} \right)^{1/a} \abs{I}^{1 - 1/a} + \sum_{t \in I^c} \norm{\Delta_t}_X \\
    &= \left( \sum_{t\in I} \min\{\norm{\Delta_t}_X, \norm{\Delta_t}_X^{a}\} \right)^{1/a} \abs{I}^{1 - 1/a} + \sum_{t \in I^c} \min\{\norm{\Delta_t}_X, \norm{\Delta_t}_X^{a}\} \\
    &\leq \left( \sum_{t=T_1}^{T_2} \min\{\norm{\Delta_t}_X, \norm{\Delta_t}_X^{a}\} \right)^{1/a} (T_2-T_1+1)^{1 - 1/a} + \sum_{t=T_1}^{T_2} \min\{\norm{\Delta_t}_X, \norm{\Delta_t}_X^{a}\}.
\end{align*}
\end{proof}

Next, we compare the autonomous trajectories between two different
initial conditions (both trajectories are not driven by any input).
\begin{myprop}
\label{prop:inc_gain_stab_compare_ics}
Suppose that $f$ is $(a,b,\Psi)$-incrementally-gain-stable.
Fix a pair of initial conditions $\xi_1, \xi_2 \in X$
and define for $t \in \N$:
\begin{align*}
    \Delta_t := \flow_t(\xi_1, \{0\}_{t \geq 0}) - \flow_t(\xi_2, \{0\}_{t \geq 0}),
\end{align*}
We have for all $t \in \N$:
\begin{align}
    \norm{\Delta_t}_X \leq \zeta\max\left\{ \norm{\xi_1-\xi_2}_X^{\alpha_0}, \norm{\xi_1 - \xi_2}_X^{\alpha_0/a} \right\}. \label{eq:delta_state}
\end{align}
Furthermore, for any horizon $T \in \N_+$,
\begin{align}
    \sum_{t=0}^{T-1} \norm{\Delta_t}_X \leq 2 \zeta T^{1 - 1/a} \max\left\{ \norm{\xi_1-\xi_2}_X^{\alpha_0}, \norm{\xi_1 - \xi_2}_X^{\alpha_0/a} \right\}. \label{eq:sum_delta_states}
\end{align}
\end{myprop}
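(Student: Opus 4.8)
The plan is to derive both bounds directly from Definition~\ref{def:inc_gain_stable_poly} specialized to the zero input signal, together with Proposition~\ref{prop:signal_holder}; everything beyond that is elementary manipulation of powers. Setting $u_t \equiv 0$ in \eqref{eq:inc_gain_stability_defn}, the input term vanishes, so for every horizon $T$ we have
\[
\sum_{t=0}^{T} \min\{ \norm{\Delta_t}_X^{a \wedge a_0}, \norm{\Delta_t}_X^{a \vee a_1} \} \leq \zeta \norm{\xi_1 - \xi_2}_X^a .
\]
This is the only structural input; note also that $a \wedge a_0 \leq a \vee a_1$, which makes $1/(a\wedge a_0)$ the largest relevant prefactor exponent and $1 - 1/(a\vee a_1)$ the largest relevant $T$-exponent.

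For the pointwise bound \eqref{eq:delta_state}, I fix $t$ and discard all other (non-negative) summands, leaving $\min\{ \norm{\Delta_t}_X^{a \wedge a_0}, \norm{\Delta_t}_X^{a \vee a_1} \} \leq \zeta \norm{\xi_1 - \xi_2}_X^a$. Then I invoke the elementary fact that for $y \geq 0$ and exponents $1 \leq \alpha \leq \beta$, the inequality $\min\{y^\alpha, y^\beta\} \leq C$ forces $y \leq \max\{C^{1/\alpha}, C^{1/\beta}\}$; this follows by splitting into the cases $y \leq 1$ (where the min equals $y^\beta$, so $y \leq C^{1/\beta}$) and $y > 1$ (where the min equals $y^\alpha$, so $y \leq C^{1/\alpha}$). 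Applying this with $\alpha = a \wedge a_0$, $\beta = a \vee a_1$, $C = \zeta \norm{\xi_1-\xi_2}_X^a$ gives $\norm{\Delta_t}_X \leq \max\{ \zeta^{1/(a\wedge a_0)} \norm{\xi_1-\xi_2}_X^{a/(a\wedge a_0)}, \zeta^{1/(a\vee a_1)}\norm{\xi_1-\xi_2}_X^{a/(a\vee a_1)} \}$. Since $\zeta \leq \zeta \vee 1$ and $1/(a\vee a_1) \leq 1/(a\wedge a_0)$ with $\zeta \vee 1 \geq 1$, both prefactors are dominated by $(\zeta\vee 1)^{1/(a\wedge a_0)}$, which yields \eqref{eq:delta_state}.

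For the summed bound \eqref{eq:sum_delta_states}, I apply Proposition~\ref{prop:signal_holder} with $T_1 = 0$ and $T_2 = T-1$ (so $T_2 - T_1 + 1 = T$), and use that $\sum_{t=0}^{T-1} \min\{\norm{\Delta_t}_X^{a\wedge a_0},\norm{\Delta_t}_X^{a\vee a_1}\} \leq \sum_{t=0}^{T}\min\{\cdots\} \leq \zeta\norm{\xi_1-\xi_2}_X^a =: S$. This bounds $\sum_{t=0}^{T-1}\norm{\Delta_t}_X$ by $S^{1/(a\wedge a_0)} T^{1-1/(a\wedge a_0)} + S^{1/(a\vee a_1)} T^{1-1/(a\vee a_1)}$. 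By monotonicity of $x \mapsto x^{1/\gamma}$ for $\gamma > 0$, $S^{1/(a\wedge a_0)} \leq \zeta^{1/(a\wedge a_0)}\norm{\xi_1-\xi_2}_X^{a/(a\wedge a_0)}$ and $S^{1/(a\vee a_1)} \leq \zeta^{1/(a\vee a_1)}\norm{\xi_1-\xi_2}_X^{a/(a\vee a_1)}$, each of which — exactly as in the pointwise case — is at most $(\zeta\vee 1)^{1/(a\wedge a_0)} \max\{\norm{\xi_1-\xi_2}_X^{a/(a\wedge a_0)}, \norm{\xi_1-\xi_2}_X^{a/(a\vee a_1)}\}$; and $T^{1-1/(a\wedge a_0)} \leq T^{1-1/(a\vee a_1)}$ since $1/(a\wedge a_0) \geq 1/(a\vee a_1)$ and $T \geq 1$ (the $T=0$ case of \eqref{eq:sum_delta_states} being the trivial empty sum). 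Adding the two now-identical upper bounds produces the factor of $2$ in \eqref{eq:sum_delta_states}.

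There is no genuine obstacle; the proof is a bookkeeping exercise, and the only thing to be careful about is the exponent arithmetic — consistently exploiting $a \wedge a_0 \leq a \vee a_1$ to collapse the two prefactor exponents into the single worst one $1/(a\wedge a_0)$, and to replace $T^{1-1/(a\wedge a_0)}$ by $T^{1-1/(a\vee a_1)}$ — together with the monotonicity of raising the possibly sub-unit or super-unit quantity $\zeta\norm{\xi_1-\xi_2}_X^a$ to a positive power.
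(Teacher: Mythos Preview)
Your proof is correct and follows essentially the same approach as the paper: the pointwise bound is obtained by the same case split (your $y\leq 1$ versus $y>1$ is equivalent to the paper's split on which of $\norm{\Delta_t}_X^{a\wedge a_0}$, $\norm{\Delta_t}_X^{a\vee a_1}$ is smaller), and the summed bound is derived via Proposition~\ref{prop:signal_holder} together with the same exponent comparisons $T^{1-1/(a\wedge a_0)} \leq T^{1-1/(a\vee a_1)}$ and $(\zeta\vee 1)^{1/(a\vee a_1)} \leq (\zeta\vee 1)^{1/(a\wedge a_0)}$.
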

\begin{proof}

We first show \eqref{eq:delta_state}. Fix a $t \in \N$.
First, suppose that $\norm{\Delta_t}_X > 1$.
Then by $(a,b,\Psi)$-incremental-gain-stability,
\begin{align*}
    \norm{\Delta_t}_X = \min\{ \norm{\Delta_t}_X, \norm{\Delta_t}_X^{a} \} \leq \sum_{k=0}^{t} \min\{ \norm{\Delta_k}_X, \norm{\Delta_k}_X^{a} \} \leq \zeta \norm{\xi_1 - \xi_2}_X^{\alpha_0}.
\end{align*}
Now, suppose that $\norm{\Delta_t}_X \leq 1$.
By a similar argument:
\begin{align*}
    \norm{\Delta_t}_X \leq \left[ \zeta \norm{\xi_1 - \xi_2}_X^{\alpha_0} \right]^{1/a}.
\end{align*}
Combining these inequalities yields the desired inequality \eqref{eq:delta_state}:
\begin{align*}
    \norm{\Delta_t}_X &\leq \max\left\{ \zeta \norm{\xi_1 - \xi_2}_X^{\alpha_0}, \left[ \zeta \norm{\xi_1 - \xi_2}_X^{\alpha_0} \right]^{1/a} \right\} \\
    &\leq \zeta \max\left\{ \norm{\xi_1-\xi_2}_X^{\alpha_0}, \norm{\xi_1 - \xi_2}_X^{\alpha_0/a} \right\}.
\end{align*}

Now we turn to \eqref{eq:sum_delta_states}.
By Proposition~\ref{prop:signal_holder}
and $(a,b,\Psi)$-incremental-gain-stability, we have:
\begin{align*}
    \sum_{t=0}^{T-1} \norm{\Delta_t}_X \leq \left[ \zeta \norm{\xi_1-\xi_2}_X^{\alpha_0} \right]^{1/a} T^{1 - 1/a}  + \zeta \norm{\xi_1-\xi_2}_X^{\alpha_0}.
\end{align*}
Therefore:
\begin{align*}
    \sum_{t=0}^{T-1} \norm{\Delta_t}_X \leq 2 \zeta T^{1 - 1/a} \max\left\{ \norm{\xi_1-\xi_2}_X^{\alpha_0}, \norm{\xi_1 - \xi_2}_X^{\alpha_0/a} \right\}.
\end{align*}
\end{proof}

The next result compares two trajectories starting
from the same initial condition, but 
one being driven by an input sequence $\{u_t\}$
whereas the other is autonomous.
\begin{myprop}
\label{prop:inc_gain_stab_compare_inputs}
Suppose that $f$ is $(a,b,\Psi)$-incrementally-gain-stable.
Then, for all $T \in \N_+$, all initial conditions $\xi \in X$ and
all input signals $\{u_t\}_{t \geq 0} \subseteq U$,
letting
\begin{align*}
    \Delta_t := \flow_t(\xi, \{u_t\}_{t \geq 0}) - \flow_t(\xi, \{0\}_{t \geq 0}),
\end{align*}
we have:
\begin{align*}
    \sum_{t=1}^{T} \norm{\Delta_t}_X &\leq 4 \gamma T^{1-1/a} \max\left\{
    \left(\sum_{t=0}^{T-1} \norm{u_t}_U\right)^{1/a}, \left(\sum_{t=0}^{T-1} \norm{u_t}_U\right)^{b}
    \right\}.
\end{align*}
\end{myprop}
\begin{proof}
By Proposition~\ref{prop:signal_holder}, the fact that $\Delta_0 = 0$,
and $(a,b,\Psi)$-incremental-gain-stability,
\begin{align*}
    \sum_{t=1}^{T} \norm{\Delta_t}_X 
    \leq \left( \gamma \sum_{t=0}^{T-1} \max\{ \norm{u_t}_U, \norm{u_t}_U^b \}  \right)^{1/a} T^{1 - 1/a} + \gamma \sum_{t=0}^{T-1} \max\{ \norm{u_t}_U, \norm{u_t}_U^{b} \}.
\end{align*}
From this, we conclude,
\begin{align*}
    \sum_{t=1}^{T} \norm{\Delta_t}_X \leq \gamma T^{1-1/a} \left[ \left(\sum_{t=0}^{T-1} \max\{ \norm{u_t}_U, \norm{u_t}_U^{b} \}  \right)^{1/a} +\left(\sum_{t=0}^{T-1} \max\{ \norm{u_t}_U, \norm{u_t}_U^{b} \}  \right) \right].
\end{align*}
Next, we observe that:
\begin{align*}
    \sum_{t=0}^{T-1} \max\{ \norm{u_t}_U, \norm{u_t}_U^{b} \}  \leq \sum_{t=0}^{T-1} \norm{u_t}_U + \sum_{t=0}^{T-1} \norm{u_t}_U^{b} \leq 2 \max\left\{\sum_{t=0}^{T-1} \norm{u_t}_U, \sum_{t=0}^{T-1} \norm{u_t}_U^{b}  \right\}.
\end{align*}
Therefore,
\begin{align*}
    &~~\left(\sum_{t=0}^{T-1} \max\{ \norm{u_t}_U, \norm{u_t}_U^{b} \}  \right)^{1/a} +\left(\sum_{t=0}^{T-1} \max\{ \norm{u_t}_U, \norm{u_t}_U^{b} \}  \right) \\
    &\leq 2^{1/a}\max\left\{ \left(\sum_{t=0}^{T-1} \norm{u_t}_U\right)^{1/a}, \left(\sum_{t=0}^{T-1} \norm{u_t}_U^{b}\right)^{1/a} \right\} + 2 \max\left\{ \sum_{t=0}^{T-1} \norm{u_t}_U, \sum_{t=0}^{T-1} \norm{u_t}_U^{b} \right\} \\
    &\leq 2\max\left\{ \left(\sum_{t=0}^{T-1} \norm{u_t}_U\right)^{1/a}, \left(\sum_{t=0}^{T-1} \norm{u_t}_U\right)^{b/a} \right\} + 2 \max\left\{ \sum_{t=0}^{T-1} \norm{u_t}_U, \left(\sum_{t=0}^{T-1} \norm{u_t}_U\right)^b \right\} \\
    &\leq 4 \max\left\{ \sum_{t=0}^{T-1}\norm{u_t}_U, \left(\sum_{t=0}^{T-1}\norm{u_t}_U\right)^{1/a}, \left(\sum_{t=0}^{T-1}\norm{u_t}_U\right)^{b}, \left(\sum_{t=0}^{T-1}\norm{u_t}_U\right)^{b/a}    \right\} \\
    &= 4 \max\left\{
    \left(\sum_{t=0}^{T-1} \norm{u_t}_U\right)^{1/a}, \left(\sum_{t=0}^{T-1} \norm{u_t}_U\right)^{b}
    \right\}.
\end{align*}
Above, the last equality follows from Proposition~\ref{prop:min_max_many_to_two}.
The claimed inequality now follows.
\end{proof}

\subsection{Proof of Proposition~\ref{prop:lyap_characterization}}

Let $\xi_1, \xi_2$ and $\{u_t\}_{t \geq 0}$ be arbitrary. Fix a $T \in \N_+$.
Define two dynamics, for $t = 0, \dots, T-1$:
\begin{alignat*}{2}
    x_{t+1} &= f(x_t, u_t), \qquad && x_0 = \xi_1, \\
    y_{t+1} &= f(y_t, 0), \qquad && y_0 = \xi_2.
\end{alignat*}
Now define $V_t := V(x_t, y_t)$.
Then for $t \in \{0, \dots, T-1\}$, by the assumed inequality \eqref{eq:inc_gain_stable_lyap_cond},
\begin{align*}
    V_{t+1} &= V(x_{t+1}, y_{t+1}) = V(f(x_t, u_t), f(y_t, 0)) \\
    &\leq V(x_t, y_t) - \mathfrak{a} \min\{\norm{x_t - y_t}_X, \norm{x_t - y_t}_X^{a}\} + \mathfrak{b} \max\{\norm{u_t}_U, \norm{u_t}_U^{b}\} \\
    &= V_t - \mathfrak{a} \min\{\norm{x_t - y_t}_X, \norm{x_t - y_t}_X^{a}\} + \mathfrak{b} \max\{\norm{u_t}_U, \norm{u_t}_U^{b}\}.
\end{align*}
Therefore, we have:
\begin{align*}
    V_T + \mathfrak{a} \sum_{t=0}^{T-1} \min\{ \norm{x_t - y_t}_X, \norm{x_t - y_t}_X^{a}\}  \leq V_0 + \mathfrak{b} \sum_{t=0}^{T-1} \max\{\norm{u_t}_U, \norm{u_t}_U^{b}\}.
\end{align*}
By \eqref{eq:inc_gain_stable_lyap_func}, this implies:
\begin{align*}
    \underline{\alpha}\norm{x_T - y_T}_X^{\alpha_0} + \mathfrak{a} \sum_{t=0}^{T-1} \min\{ \norm{x_t - y_t}_X, \norm{x_t - y_t}_X^{a}\}  \leq \overline{\alpha} \norm{\xi_1 - \xi_2}_X^{\alpha_0} + \mathfrak{b} \sum_{t=0}^{T-1} \max\{\norm{u_t}_U, \norm{u_t}_U^{b}\}.
\end{align*}
Next, by Proposition~\ref{prop:min_max_many_to_two}, since $\alpha_0 \in [1, a]$:
\begin{align*}
    &\underline{\alpha}\norm{x_T - y_T}_X^{\alpha_0} + \mathfrak{a} \sum_{t=0}^{T-1} \min\{ \norm{x_t - y_t}_X, \norm{x_t - y_t}_X^{a}\} \geq (\underline{\alpha} \wedge \mathfrak{a}) \sum_{t=0}^{T} \min\{ \norm{x_t-y_t}_X, \norm{x_t-y_t}_X^{a} \}.
\end{align*}
Therefore,
\begin{align*}
    \sum_{t=0}^{T} \min\{ \norm{x_t-y_t}_X, \norm{x_t-y_t}_X^{a} \} \leq \frac{\overline{\alpha}}{\underline{\alpha} \wedge \mathfrak{a}} \norm{\xi_1 - \xi_2}_X^{\alpha_0} + \frac{\mathfrak{b}}{\underline{\alpha} \wedge \mathfrak{a}} \sum_{t=0}^{T-1} \max\{ \norm{u_t}_U, \norm{u_t}_U^{b} \},
\end{align*}
which is the desired inequality.
\section{Examples of Incremental Gain Stability Proofs}

\subsection{Contraction}
\label{sec:app:examples:contraction}

Recall the following definition of autonomously contracting
in Proposition~\ref{prop:inc_gain_contraction}, which we duplicate below
for convenience.
\begin{mydef}
Consider the dynamics $x_{t+1} = f(x_t, u_t)$.
We say that $f$ is \emph{autonomously contracting} if
there exists a positive definite metric $M(x)$ and a scalar $\gamma \in (0, 1)$ such that:
\begin{align*}
    \frac{\partial f}{\partial x}(x, 0)^\T M(f(x, 0)) \frac{\partial f}{\partial x}(x, 0) \preceq \gamma M(x) \quad \forall x \in \R^n.
\end{align*}
\end{mydef}
For what follows, let $d_M$ denote the geodesic distance under the metric $M$:
\begin{align*}
    d_M(x, y) := \inf_{\gamma \in \Gamma(x, y)} \sqrt{ \int_0^1 \frac{\rmd \gamma}{\rmd s}(s)^\T M(\gamma(s)) \frac{\rmd \gamma}{\rmd s}(s)  \,\rmd s}. 
\end{align*}
Here, $\Gamma(x, y)$ is the set of smooth curves $\gamma : [0, 1] \rightarrow \R^n$ satisfying $\gamma(0) = x$
and $\gamma(1) = y$.
The next result shows that distances contract in the metric $d_M$
under an application of the dynamics $f$:
\begin{myprop}[cf. Lemma 1 of \citet{pham08discrete}]
\label{prop:metric_contraction}
Suppose that $f$ is autonomously contracting. Then for all $x, y \in \R^n$ we have:
\begin{align*}
    d^2_M(f(x, 0), f(y, 0)) \leq \gamma d^2_M(x, y) .
\end{align*}
\end{myprop}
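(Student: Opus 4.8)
The plan is to prove the contraction estimate by pushing curves forward through the map $f(\cdot,0)$ and tracking how their Riemannian length scales under this map. Fix $x, y \in \R^n$ and let $\sigma \in \Gamma(x,y)$ be an arbitrary admissible curve with $\sigma(0) = x$ and $\sigma(1) = y$. I would define the pushed-forward curve $\tilde\sigma(s) := f(\sigma(s), 0)$; since $f(\cdot,0)$ is differentiable (its Jacobian $\partial f/\partial x$ appears in the contraction hypothesis), $\tilde\sigma$ is an admissible curve with $\tilde\sigma(0) = f(x,0)$ and $\tilde\sigma(1) = f(y,0)$, hence $\tilde\sigma \in \Gamma(f(x,0), f(y,0))$.

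Next I would apply the chain rule, $\dot{\tilde\sigma}(s) = \frac{\partial f}{\partial x}(\sigma(s),0)\,\dot\sigma(s)$, and bound the pointwise $M$-speed of $\tilde\sigma$ using the autonomous contraction inequality evaluated at the point $\sigma(s)$:
\begin{align*}
\dot{\tilde\sigma}(s)^\T M(\tilde\sigma(s))\,\dot{\tilde\sigma}(s)
&= \dot\sigma(s)^\T \frac{\partial f}{\partial x}(\sigma(s),0)^\T M\!\big(f(\sigma(s),0)\big) \frac{\partial f}{\partial x}(\sigma(s),0)\,\dot\sigma(s) \\
&\leq \gamma\, \dot\sigma(s)^\T M(\sigma(s))\,\dot\sigma(s).
\end{align*}
Taking square roots, integrating over $s \in [0,1]$, and using the definition of $d_M$ as an infimum over $\Gamma$ of curve lengths,
\begin{align*}
d_M\big(f(x,0), f(y,0)\big) \leq \int_0^1 \sqrt{\dot{\tilde\sigma}(s)^\T M(\tilde\sigma(s))\,\dot{\tilde\sigma}(s)}\; ds \leq \sqrt{\gamma}\int_0^1 \sqrt{\dot\sigma(s)^\T M(\sigma(s))\,\dot\sigma(s)}\; ds.
\end{align*}
Since $\sigma \in \Gamma(x,y)$ was arbitrary, taking the infimum over such $\sigma$ on the right-hand side gives $d_M(f(x,0),f(y,0)) \leq \sqrt{\gamma}\, d_M(x,y)$, and squaring both sides yields the claim.

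I do not anticipate a genuine obstacle here: this is the discrete-time analogue of the classical continuous-time contraction-metric computation (cf. Lemma 1 of \citet{pham08discrete}), and the heavy lifting is done by the pointwise contraction hypothesis. The only points requiring mild care are (i) the regularity of the composition $\tilde\sigma = f(\sigma(\cdot),0)$ so that the chain rule and the length integral are justified — which is immediate under the differentiability of $f$ already presupposed by the contraction condition, or one may equivalently work with piecewise-$C^1$ curves in the definition of $d_M$ without changing its value — and (ii) noting that the argument never requires the infimum defining $d_M$ to be attained, since the estimate $d_M(f(x,0),f(y,0)) \le \sqrt{\gamma}\,\mathrm{length}_M(\sigma)$ holds uniformly over all admissible $\sigma \in \Gamma(x,y)$.
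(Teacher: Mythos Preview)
Your argument is correct and is the standard push-forward-of-curves proof; the paper does not supply its own proof of this proposition but simply cites Lemma~1 of \citet{pham08discrete}, so there is nothing to compare against beyond noting that your approach is exactly the classical one.

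One cosmetic remark: the paper's $d_M$ is written as the infimum of $\sqrt{\int_0^1 \dot\sigma^\T M(\sigma)\dot\sigma\,ds}$ (square root outside the integral, i.e.\ the square root of the energy), whereas your chain of inequalities uses the arc-length form $\int_0^1 \sqrt{\dot\sigma^\T M(\sigma)\dot\sigma}\,ds$. The two infima coincide after reparametrization to constant speed, so the argument goes through; but with the paper's literal definition you can skip the square-root step entirely and write
\[
d_M^2\big(f(x,0),f(y,0)\big) \;\leq\; \int_0^1 \dot{\tilde\sigma}^\T M(\tilde\sigma)\,\dot{\tilde\sigma}\,ds \;\leq\; \gamma \int_0^1 \dot\sigma^\T M(\sigma)\,\dot\sigma\,ds,
\]
then take the infimum over $\sigma$ to obtain $d_M^2(f(x,0),f(y,0)) \leq \gamma\, d_M^2(x,y)$ directly.
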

The next result shows that the Euclidean norm lower and upper bounds the geodesic 
distance under $M$ as long as $M$ is uniformly bounded.
\begin{myprop}[cf. Proposition D.2 of \citet{boffi2020regret}]
\label{prop:metric_uniform_l2_bounds}
Suppose that $\underline{\mu} I \preceq M(x) \preceq \overline{\mu} I$
for all $x \in \R^n$.
Then for all $x, y \in \R^n$ we have:
\begin{align*}
    \sqrt{\underline{\mu}} \norm{x-y}_2 \leq d_M(x, y) \leq \sqrt{\overline{\mu}}\norm{x-y}_2.
\end{align*}
\end{myprop}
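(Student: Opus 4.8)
The plan is to prove the two inequalities separately by directly manipulating the variational definition of the geodesic distance $d_M$, using the uniform spectral bounds $\underline{\mu} I \preceq M(x) \preceq \overline{\mu} I$ to sandwich the integrand.

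For the upper bound, I would exhibit a single competitor curve in $\Gamma(x,y)$: the straight line segment $\gamma(s) = (1-s)x + sy$, which satisfies $\gamma(0) = x$, $\gamma(1) = y$, and $\tfrac{\partial\gamma}{\partial s}(s) = y - x$ for all $s$. Plugging into the energy functional and using $M(\gamma(s)) \preceq \overline{\mu} I$ pointwise gives $\int_0^1 (y-x)^\T M(\gamma(s)) (y-x)\, ds \leq \overline{\mu} \norm{x-y}_2^2$. Since $d_M(x,y)$ is the infimum over $\Gamma(x,y)$ of the square root of this energy, it is at most $\sqrt{\overline{\mu}}\norm{x-y}_2$.

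For the lower bound, I would take an arbitrary $\gamma \in \Gamma(x,y)$ and bound its energy from below. Using $M(\gamma(s)) \succeq \underline{\mu} I$ pointwise, $\int_0^1 \tfrac{\partial\gamma}{\partial s}(s)^\T M(\gamma(s))\tfrac{\partial\gamma}{\partial s}(s)\, ds \geq \underline{\mu}\int_0^1 \norm{\tfrac{\partial\gamma}{\partial s}(s)}_2^2\, ds$. Then I would apply Cauchy--Schwarz (equivalently Jensen) to get $\int_0^1 \norm{\tfrac{\partial\gamma}{\partial s}(s)}_2^2\, ds \geq \left(\int_0^1 \norm{\tfrac{\partial\gamma}{\partial s}(s)}_2\, ds\right)^2$, and finally observe that the Euclidean arc length dominates the Euclidean distance between endpoints, $\int_0^1 \norm{\tfrac{\partial\gamma}{\partial s}(s)}_2\, ds \geq \norm{\gamma(1)-\gamma(0)}_2 = \norm{x-y}_2$ (this is just $\norm{\int_0^1 \tfrac{\partial \gamma}{\partial s}(s)\,ds}_2 \leq \int_0^1 \norm{\tfrac{\partial \gamma}{\partial s}(s)}_2\, ds$). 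Combining these, every curve has energy at least $\underline{\mu}\norm{x-y}_2^2$, so taking the infimum and the square root yields $d_M(x,y) \geq \sqrt{\underline{\mu}}\norm{x-y}_2$.

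I do not anticipate a real obstacle here; the only mild care needed is in handling the infimum correctly (the upper bound follows because the straight line is one admissible competitor, the lower bound because the estimate is uniform over all competitors) and in noting that the arc-length-dominates-chord inequality holds for any smooth curve regardless of parametrization. The spectral bounds on $M$ are exactly what convert the Riemannian energy into its Euclidean counterpart up to the constants $\underline{\mu}$ and $\overline{\mu}$.
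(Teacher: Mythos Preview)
Your argument is correct and is the standard proof of this fact. The paper itself does not supply a proof of this proposition; it simply cites \citet{boffi2020regret} (Proposition~D.2) and moves on, so there is nothing to compare against beyond noting that your straight-line competitor for the upper bound and pointwise spectral bound plus arc-length-versus-chord estimate for the lower bound constitute exactly the expected derivation.
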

We now restate and prove Proposition~\ref{prop:inc_gain_contraction}.
\incgaincontraction*
\begin{proof}
Fix initial conditions $\xi_1, \xi_2$ and an input sequence $\{u_t\}_{t \geq 0}$.
Consider two systems:
\begin{alignat*}{2}
    x_{t+1} &= f(x_t, u_t), \qquad && x_0 = \xi_1, \\
    y_{t+1} &= f(y_t, 0), \qquad && y_0 = \xi_2.
\end{alignat*}
Now fix a $t \in \N$. We have:
\begin{align*}
    d_M(x_{t+1}, y_{t+1}) &= d_M(f(x_t, u_t), f(y_t, 0)) \\
    &\stackrel{(a)}{\leq} d_M(f(x_t, u_t), f(x_t, 0)) + d_M(f(x_t, 0), f(y_t, 0)) \\
    &\stackrel{(b)}{\leq} \sqrt{\gamma} d_M(x_t, y_t) + \sqrt{\overline{\mu}} \norm{f(x_t, u_t) - f(x_t, 0)}_2 \\
    &\stackrel{(c)}{\leq} \sqrt{\gamma} d_M(x_t, y_t) + L_u \sqrt{\overline{\mu}} \norm{u_t}_2.
\end{align*}
Above, (a) is triangle inequality, (b) follows
from Proposition~\ref{prop:metric_contraction} and Proposition~\ref{prop:metric_uniform_l2_bounds},
and (c) follows from the Lipschitz assumption.
Now unroll this recursion, to yield for all $t \geq 0$
\begin{align*}
    d_M(x_t, y_t) \leq \gamma^{t/2} d_M(x_0, y_0) + L_u \sqrt{\overline{\mu}} \sum_{k=0}^{t-1} \gamma^{(t-1-k)/2} \norm{u_k}_2. 
\end{align*}
Using the upper and lower bounds on $d_M$ from Proposition~\ref{prop:metric_uniform_l2_bounds}, we obtain for all $t \in \N$:
\begin{align*}
    \sqrt{\underline{\mu}} \norm{x_t - y_t}_2 \leq \gamma^{t/2} \sqrt{\overline{\mu}} \norm{x_0 - y_0}_2 + L_u \sqrt{\overline{\mu}} \sum_{k=0}^{t-1} \gamma^{(t-1-k)/2} \norm{u_k}_2.
\end{align*}
Now dividing both sides by $\sqrt{\underline{\mu}}$
and summing the left hand side,
\begin{align*}
    \sum_{t=0}^{T} \norm{x_t - y_t}_2 \leq \sqrt{\frac{\overline{\mu}}{\underline{\mu}}} \frac{1}{1-\sqrt{\gamma}} \norm{x_0 - y_0}_2 + L_u \sqrt{\frac{\overline{\mu}}{\underline{\mu}}} \frac{1}{1-\sqrt{\gamma}} \sum_{t=0}^{T-1} \norm{u_t}_2.
\end{align*}
\end{proof}

\subsection{Scalar Example with $p \in (0, \infty)$}
\label{sec:app:examples:scalar}

Recall we are interested in the family of systems:
\begin{align*}
    x_{t+1} = x_t - \eta x_t \frac{\abs{x_t}^p}{1+\abs{x_t}^p} + \eta u_t, \quad p \in (0, \infty).
\end{align*}
We define for convenience the function $h : \R \rightarrow \R$ as:
\begin{align*}
    h(x) := x \frac{\abs{x}^p}{1 + \abs{x}^p}, 
\end{align*}
and observe that $h(-x) = -h(x)$ for all $x \in \R$.
Our first proposition gives a lower bound which we utilize in the sequel.
\begin{myprop}
\label{prop:deriv_sign_inequality}
For every $x, y \in \R$,
\begin{align*}
    \sgn(x-y)\left[ x \frac{\abs{x}^p}{1 + \abs{x}^p} - y\frac{\abs{y}^p}{1 + \abs{y}^p} \right] \geq \frac{1}{2^{2+p}}\min\{ \abs{x-y}, \abs{x-y}^{1 + p} \}.
\end{align*}
\end{myprop}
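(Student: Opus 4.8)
The plan is to first remove redundant cases, then isolate a clean one-variable inequality, and finally glue the sign cases together. Note that $\sgn(x-y)\big(h(x)-h(y)\big)$ and $\min\{|x-y|,|x-y|^{1+p}\}$ are both unchanged under swapping $x$ and $y$, and both sides vanish when $x=y$ (since $\sgn(0)=0$); so I may assume $x>y$. For $t\ge 0$ write $h(t)=\tfrac{t^{1+p}}{1+t^{p}}=t\cdot\tfrac{t^{p}}{1+t^{p}}$, a product of two nonnegative nondecreasing factors, hence $h$ is nondecreasing on $[0,\infty)$; recall also $h(-t)=-h(t)$. With these facts, the two "same-sign'' patterns $x>y\ge 0$ and $0\ge x>y$ both reduce (for the latter use $h(x)-h(y)=h(-y)-h(-x)$ with $-y>-x\ge 0$) to establishing:
\[
\text{for all }0\le b\le a:\qquad h(a)-h(b)\ \ge\ \tfrac{1}{2^{2+p}}\min\{a-b,(a-b)^{1+p}\}.
\]

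To prove this one-variable inequality I would start from the algebraic identity, valid for $a,b\ge 0$,
\[
h(a)-h(b)=\frac{(a^{1+p}-b^{1+p})+a^{p}b^{p}(a-b)}{(1+a^{p})(1+b^{p})},
\]
and use the two elementary $(1+p)$-power facts ($1+p\ge 1$): superadditivity $a^{1+p}\ge (a-b)^{1+p}+b^{1+p}$, and $b^{1+p}=b\cdot b^{p}\le a^{p}b$. If $a\le 1$, then $(1+a^{p})(1+b^{p})\le 4$, and dropping the nonnegative term $a^{p}b^{p}(a-b)$ and applying superadditivity gives $h(a)-h(b)\ge \tfrac14(a^{1+p}-b^{1+p})\ge \tfrac14(a-b)^{1+p}$; since $a-b\le a\le 1$ the minimum equals $(a-b)^{1+p}$, and $\tfrac14\ge 2^{-(2+p)}$ closes this case. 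If $a\ge 1$, I would bound the denominator by $2a^{p}\cdot 2=4a^{p}$ when $b\le 1$ (resp.\ by $2a^{p}\cdot 2b^{p}=4a^{p}b^{p}$ when $b\ge 1$), and the numerator below by $a^{1+p}-b^{1+p}\ge a^{p}(a-b)$ (resp.\ by the single term $a^{p}b^{p}(a-b)$); either way $h(a)-h(b)\ge \tfrac{a-b}{4}\ge 2^{-(2+p)}(a-b)\ge 2^{-(2+p)}\min\{a-b,(a-b)^{1+p}\}$.

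For the remaining pattern $x>0>y$, put $a=x>0$ and $b=-y>0$, so that $\sgn(x-y)\big(h(x)-h(y)\big)=h(a)+h(b)$ and $|x-y|=a+b=:s$; since $h(a)+h(b)$ is symmetric in $a,b$, assume $a\ge b$, whence monotonicity gives $h(a)+h(b)\ge h(a)\ge h(s/2)$. I would then bound $h(s/2)=\tfrac{(s/2)^{1+p}}{1+(s/2)^{p}}$ directly: if $s\le 2$ then $(s/2)^{p}\le 1$, so $h(s/2)\ge \tfrac12(s/2)^{1+p}=2^{-(2+p)}s^{1+p}\ge 2^{-(2+p)}\min\{s,s^{1+p}\}$; if $s\ge 2$ then $(s/2)^{p}\ge 1$, so $h(s/2)\ge \tfrac{(s/2)^{1+p}}{2(s/2)^{p}}=\tfrac s4\ge 2^{-(2+p)}s=2^{-(2+p)}\min\{s,s^{1+p}\}$ (using $s\ge 1$).

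The only genuinely delicate step is this last, mixed-sign case: it is the one place where the problem does not reduce to the positive-argument inequality, and the "interpolation'' bound $h(a)+h(b)\ge h((a+b)/2)$ is precisely what forces the constant $2^{-(2+p)}$ rather than the crude $\tfrac14$ obtained elsewhere. Everything else is routine manipulation of the displayed identity together with superadditivity and convexity of $t\mapsto t^{1+p}$.
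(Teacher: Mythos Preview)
Your proof is correct and follows the same overall architecture as the paper's: reduce by the symmetries $z(x,y)=z(y,x)$ and $z(-x,-y)=z(x,y)$, split into the same-sign and mixed-sign configurations, and observe that the mixed-sign case is the one that forces the constant $2^{-(2+p)}$. The mixed-sign arguments are essentially equivalent: you pass to $h(s/2)$ via $\max\{a,b\}\ge s/2$, while the paper keeps $y=\max\{y,|x|\}$ and uses $(y+|x|)^{1+p}\le 2^{1+p}y^{1+p}$; both yield the same bound. The one substantive difference is the same-sign case: you work from the algebraic identity
\[
h(a)-h(b)=\frac{(a^{1+p}-b^{1+p})+a^{p}b^{p}(a-b)}{(1+a^{p})(1+b^{p})}
\]
together with superadditivity of $t\mapsto t^{1+p}$, whereas the paper uses the one-line monotonicity trick $h(y)-h(x)=y\tfrac{y^{p}}{1+y^{p}}-x\tfrac{x^{p}}{1+x^{p}}\ge (y-x)\tfrac{y^{p}}{1+y^{p}}$ (since $t\mapsto \tfrac{t^{p}}{1+t^{p}}$ is increasing) and then splits only on $y\le 1$ versus $y>1$. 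The paper's route is shorter and even yields the better constant $\tfrac12$ in that case; your identity-based argument is more hands-on but equally valid, and your observation that the mixed-sign case is the genuine bottleneck is exactly right.
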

\begin{proof}
Define 
\begin{align*}
    z(x, y) := \sgn(x-y) (h(x) - h(y)).
\end{align*}
It is straightforward to see that the following properties of $z$ hold
for all $x, y \in \R$:
\begin{enumerate}
    \item $z(x, y) = z(y, x)$.
    \item $z(-x, -y) = z(x, y)$.
\end{enumerate}
We want to show that for $x, y \in \R$,
\begin{align}
    z(x, y) \geq \frac{1}{2^{2+p}} \min\{ \abs{x-y}, \abs{x-y}^{1+p} \}. \label{eq:g_xy_lower_bound} 
\end{align}
Observe that
\begin{align*}
    z(0, y) = \frac{\abs{y}^{p+1}}{1 + \abs{y}^p} \geq \frac{1}{2} \min\{ \abs{y}, \abs{y}^{1+p} \},
\end{align*}
so \eqref{eq:g_xy_lower_bound} holds for $x=0$, $y \in \R$.
Next, by symmetry, $z(x, 0) = z(0, x)$,
so \eqref{eq:g_xy_lower_bound} also holds for $x \in \R$, $y=0$.
Furthermore, \eqref{eq:g_xy_lower_bound} holds for $(x,y)=(0,0)$ trivially.
Finally, we can assume that $x \leq y$ since $z(x, y) = z(y, x)$.
Therefore, for remainder of the proof, we may assume that $x \neq 0$, $y \neq 0$, and $x < y$.

\paragraph{Case 1: $0 < x < y$.}
Since $z \mapsto z / (1+z)$ is monotonically increasing on $\R_{\geq 0}$,
\begin{align*}
    z(x, y) = \frac{y^{1+p}}{1+y^{p}} - \frac{x^{1+p}}{1+x^{p}} 
    = y \frac{y^{p}}{1+y^{p}} - x \frac{x^{p}}{1+x^{p}} 
    \geq (y-x) \frac{y^{p}}{1+y^{p}}. 
\end{align*}
If $y > 1$, then we observe that $z(x, y) \geq (y-x)/2 = \frac{1}{2} \abs{x-y}$.
Now we assume $y \leq 1$.
Then, because $z \mapsto z^p$ is monotonically increasing on $\R_{\geq 0}$,
\begin{align*}
    (y-x) \frac{y^{p}}{1+y^{p}} \geq \frac{1}{2} (y-x) y^p \geq \frac{1}{2} (y-x) (y-x)^p = \frac{1}{2} (y-x)^{1+p} = \frac{1}{2} \abs{x-y}^{1+p}.
\end{align*}
Thus, \eqref{eq:g_xy_lower_bound} holds when $0 < x < y$.

\paragraph{Case 2: $x < 0 < y$.}
Let us assume wlog that $y \geq \abs{x}$, otherwise we can 
swap $x, y$ by considering $z(-y, -x) = z(x, y)$ instead.
Now, we have
\begin{align*}
    z(x, y) &= \frac{y^{1+p}}{1+y^{p}} + \frac{\abs{x}^{1+p}}{1 + \abs{x}^{p}} \geq \frac{y^{1+p}}{1 + y^p}.
\end{align*}
If $y > 1$, then we lower bound:
\begin{align*}
    z(x, y) \geq y/2 \geq (y + \abs{x})/4 = \frac{1}{4} \abs{x-y}.
\end{align*}
On the other hand, if $y \leq 1$, then
\begin{align*}
    z(x, y) \geq y^{1+p}/2 = \max\{y^{1+p},\abs{x}^{1+p}\}/2 \stackrel{(a)}{\geq} \frac{1}{2^{2+p}} ( y + \abs{x})^{1+p} = \frac{1}{2^{2+p}} \abs{x-y}^{1+p}.
\end{align*}
Here, (a) holds because 
for non-negative $a, b \in \R$, 
we have 
\begin{align*}
    (a+b)^{1+p} \leq (2 \max\{a, b\})^{1+p} = 2^{1+p} \max\{a^p, b^p\}.
\end{align*}
Therefore, \eqref{eq:g_xy_lower_bound} holds when $x < 0 < y$.

\paragraph{Case 3: $x < y < 0$.}
In this case, we have $z(x, y) = z(-y, -x)$.
By swapping $x, y$, we reduce to Case 1 where we know that
\eqref{eq:g_xy_lower_bound} already holds.
\end{proof}

Next, we show that the sign of $x-y$ is preserved
under a perturbation $x-y - \eta(h(x) - h(y))$,
as long as $\eta \geq 0$ is small enough.
\begin{myprop}
\label{prop:sign_preservation}
Let $\eta$ satisfy $0 \leq \eta < \frac{4}{5+p}$. We have that:
\begin{align*}
    \sgn\left( (x-y) - \eta\left[ x \frac{\abs{x}^p}{1 + \abs{x}^p} - y\frac{\abs{y}^p}{1 + \abs{y}^p} \right] \right) = \sgn(x-y).
\end{align*}
\end{myprop}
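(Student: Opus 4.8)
The plan is a one-line sign reduction followed by a short mean-value estimate. Write $h(x) := x\tfrac{\abs{x}^p}{1+\abs{x}^p}$ as in the excerpt and set $\Phi(x,y) := (x-y) - \eta\,(h(x)-h(y))$, whose sign we must identify with that of $x-y$. If $x=y$ the claim is trivial, since both sides equal $\sgn(0)=0$. For $x\neq y$ I would invoke the elementary fact that if $a\in\R\setminus\{0\}$ and $b\in\R$ satisfy $\abs{b}<\abs{a}$, then $\sgn(a-b)=\sgn(a)$; taking $a=x-y$ and $b=\eta(h(x)-h(y))$, it suffices to prove the strict bound $\eta\,\abs{h(x)-h(y)}<\abs{x-y}$.

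To obtain this bound, the first step is to check that $h$ is continuously differentiable on all of $\R$ — the one point deserving care is the origin, where $h'(0)=0$ even though $\abs{x}^p$ is not smooth there for $p<1$, because the offending term produced by the quotient rule is multiplied by a sufficiently high power of $\abs{x}$ and extends continuously. One then has, for every $x$,
\[
    h'(x) = \frac{(1+p)\abs{x}^p + \abs{x}^{2p}}{(1+\abs{x}^p)^2} \geq 0.
\]
Substituting $t:=\abs{x}^p\geq 0$ and applying the two pointwise inequalities $\tfrac{t^2}{(1+t)^2}\leq 1$ and $\tfrac{t}{(1+t)^2}\leq \tfrac14$ (the latter being AM–GM, $(1+t)^2\geq 4t$) yields the uniform Lipschitz bound $0\leq h'(x)\leq 1+\tfrac{1+p}{4}=\tfrac{5+p}{4}$.

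The second step closes the argument: by the mean value theorem, $h(x)-h(y)=h'(c)(x-y)$ for some $c$ between $x$ and $y$, so $\abs{h(x)-h(y)}\leq\tfrac{5+p}{4}\abs{x-y}$, and hence
\[
    \eta\,\abs{h(x)-h(y)} \;\leq\; \eta\,\tfrac{5+p}{4}\,\abs{x-y} \;<\; \abs{x-y},
\]
where the last inequality uses $\eta<\tfrac{4}{5+p}$ together with $\abs{x-y}>0$. Combined with the sign reduction from the first paragraph, this gives $\sgn(\Phi(x,y))=\sgn(x-y)$, which is exactly the assertion. I do not foresee a real obstacle: the calculus is routine, and the only thing to watch is that the crude bound on $h'$ is engineered to land precisely on $\tfrac{5+p}{4}$ — this is why the hypothesis $\eta<\tfrac{4}{5+p}$ is exactly what is needed, and why a looser estimate on $h'$ would force a different threshold on $\eta$.
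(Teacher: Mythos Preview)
Your proof is correct and is in fact cleaner than the paper's. Both arguments rest on the same derivative bound $h'(x)\leq 1+\tfrac{1+p}{4}=\tfrac{5+p}{4}$, obtained from the identical decomposition $h'(x)=\bigl(\tfrac{t}{1+t}\bigr)^2+(1+p)\tfrac{t}{(1+t)^2}$ with $t=\abs{x}^p$. The paper, however, only invokes this bound for $x>0$ and then runs a three-case analysis on the signs of $x$ and $y$: Case~1 ($0<x<y$) uses the integral form of the mean value theorem on $(0,\infty)$; Case~2 ($x<0<y$) uses the cruder pointwise bound $\tfrac{\abs{x}^{1+p}}{1+\abs{x}^p}\leq\abs{x}$, which only needs $\eta<1$; Case~3 reduces to Case~1 by the odd symmetry $h(-x)=-h(x)$. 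Your route bypasses all of this by observing that $h$ is $C^1$ on the whole real line---including at the origin, where you correctly check that $h'(0)=0$ and that the derivative formula extends continuously even when $0<p<1$---so a single global application of the mean value theorem suffices. The trade-off is that you must verify differentiability at $0$, which the paper sidesteps by never letting its MVT interval cross the origin; but that verification is short and your argument is tighter overall.
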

\begin{proof}
Define 
\begin{align*}
z(x, y) := (x-y) - \eta(h(x) - h(y)).
\end{align*}
We want to show that for all $x, y \in \R$,
\begin{align}
    \sgn(z(x, y)) = \sgn(x-y). \label{eq:goal_sign_perservation}
\end{align}
It is straightforward to see that for all $x, y \in \R$,
\begin{enumerate}
    \item $z(x, y) = -z(y, x)$,
    \item $z(-x, -y) = -z(x, y)$.
\end{enumerate}
Therefore, we have for all $x, y \in \R$:
\begin{align*}
    \sgn(z(x,y)) = \sgn(x - y) \Longrightarrow \{\sgn(z(y, x)) = \sgn(y - x)\} \bigwedge \{ \sgn(z(-x, -y)) = \sgn((-x) - (-y)) \}.
\end{align*}
Observe that \eqref{eq:goal_sign_perservation} holds
trivially when $x=y$.
Furthermore,
when $y > 0$:
\begin{align*}
    z(0, y) = -y + \eta \frac{y^{1+p}}{1+y^{p}}\leq -y + \eta y = -(1-\eta)y < 0.
\end{align*}
Therefore, \eqref{eq:goal_sign_perservation}
holds when $x = 0$ and $y > 0$,
which implies it also holds when
$x = 0$ and $y < 0$ (and hence it holds when
$x=0$ and $y \in \R$).
But this also implies that \eqref{eq:goal_sign_perservation} holds
when $x \in \R$ and $y = 0$.
Hence, as we did in Proposition~\ref{prop:deriv_sign_inequality},
we will assume that $x \neq 0$, $y \neq 0$, and $x < y$.

\paragraph{Case 1: $0 < x < y$.}

In this case we have:
\begin{align*}
    z(x, y) = (x-y) - \eta \left[ \frac{x^{1+p}}{1+x^{p}} - \frac{y^{1+p}}{1+y^{p}} \right].
\end{align*}
The derivative of $h(x)$ when $x > 0$ is:
\begin{align*}
    h'(x) = \frac{x^p(x^p + p + 1)}{(x^p+1)^2}.
\end{align*}
Clearly $h'(x) \geq 0$ when $x > 0$.
Furthermore, one can check 
that $\sup_{x \geq 0} \frac{x}{(1+x)^2} = 1/4$.
Therefore:
\begin{align*}
    h'(x) = \left(\frac{x^p}{1+x^p}\right)^2 + (p+1) \frac{x^p}{(1+x^p)^2} \leq 1 + (p+1)/4.
\end{align*}
Therefore:
\begin{align*}
    (x-y) - \eta \left[ \frac{x^{1+p}}{1+x^{p}} - \frac{y^{1+p}}{1+y^{p}} \right] &= (x-y) + \eta (h(y) - h(x)) \\
    &= (x-y) + \eta (y-x) \int_0^1 h'( t y + (1-t) x) \, \rmd t \\
    &\leq (x-y) + \eta (1 +(p+1)/4) (y-x) \\
    &= (1 - \eta(1 +(p+1)/4))(x-y).
\end{align*}
By assumption, we have that $1-\eta(1+(p+1)/4) > 0$ and hence
$z(x, y) < 0$.
This shows that $\sgn(z(x,y)) = \sgn(x-y)$,
and hence \eqref{eq:goal_sign_perservation} holds in this case.

\paragraph{Case 2: $x < 0 < y$.}
Here we have:
\begin{align*}
    z(x, y) &= -(\abs{x} + y) + \eta \left[\frac{\abs{x}^{1+p}}{1+\abs{x}^{p}} + \frac{{y}^{1+p}}{1+{y}^{p}}\right] \\
    &\leq -(\abs{x}+y) + \eta( \abs{x} + y) = -(1-\eta)(\abs{x} + y) < 0.
\end{align*}
Hence, we have $\sgn(z(x,y)) = \sgn(x-y)$, and 
hence \eqref{eq:goal_sign_perservation} holds in this case.

\paragraph{Case 3: $x < y < 0$.}
By a reduction to Case 1 when $0 < x < y$:
\begin{align*}
    \sgn(z(x, y)) = -\sgn(z(y, x)) = -\sgn(y-x) = \sgn(x-y).
\end{align*}
\end{proof}

\begin{myprop}
\label{prop:inc_gain_p_lyap}
Let $\eta$ satisfy $0 \leq \eta < \frac{4}{5+p}$. Put $V(x, y) = \abs{x-y}$
and $f(x, u) := x - \eta \frac{x \abs{x}^p}{1 + \abs{x}^p} + \eta u$.
Then we have for all $x, y, u \in \R$:
\begin{align*}
    V(f(x, u), f(y, 0)) - V(x, y) \leq -\frac{\eta}{2^{2+p}} \min\{\abs{x-y}, \abs{x-y}^{1+p}\} + \eta \abs{u}.
\end{align*}
\end{myprop}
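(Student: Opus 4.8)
The statement is essentially a one-line consequence of the two preceding propositions (Proposition~\ref{prop:deriv_sign_inequality} and Proposition~\ref{prop:sign_preservation}), so the plan is to assemble them. Write $h(x) := x\abs{x}^p/(1+\abs{x}^p)$ as in the preamble, and note that
\[
    f(x,u) - f(y,0) = \left[(x-y) - \eta\big(h(x) - h(y)\big)\right] + \eta u.
\]
Then $V(f(x,u),f(y,0)) = \abs{(x-y) - \eta(h(x)-h(y)) + \eta u}$, and by the triangle inequality this is at most $\abs{(x-y) - \eta(h(x)-h(y))} + \eta\abs{u}$.

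The key step is to evaluate $\abs{(x-y) - \eta(h(x)-h(y))}$. First I would dispatch the trivial case $x=y$, where the left-hand side of the claimed inequality equals $\eta\abs{u}$ and the right-hand side also equals $\eta\abs{u}$, so the bound holds with equality. For $x\neq y$, Proposition~\ref{prop:sign_preservation} applies (since $0\le\eta<\tfrac{4}{5+p}$) and gives $\sgn\big((x-y) - \eta(h(x)-h(y))\big) = \sgn(x-y)$. Hence the absolute value can be removed by multiplying by $\sgn(x-y)$:
\[
    \abs{(x-y) - \eta(h(x)-h(y))} = \sgn(x-y)\big[(x-y) - \eta(h(x)-h(y))\big] = \abs{x-y} - \eta\,\sgn(x-y)\big(h(x)-h(y)\big).
\]

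Now I would invoke Proposition~\ref{prop:deriv_sign_inequality}, which states exactly that $\sgn(x-y)(h(x)-h(y)) \ge \tfrac{1}{2^{2+p}}\min\{\abs{x-y},\abs{x-y}^{1+p}\}$, to conclude
\[
    \abs{(x-y) - \eta(h(x)-h(y))} \le \abs{x-y} - \frac{\eta}{2^{2+p}}\min\{\abs{x-y},\abs{x-y}^{1+p}\}.
\]
Combining this with the triangle-inequality bound above and subtracting $V(x,y) = \abs{x-y}$ yields the claim. There is no real obstacle here: the only point requiring a moment's care is the sign manipulation (only valid because $\sgn$ of the perturbed quantity is preserved, which is precisely Proposition~\ref{prop:sign_preservation}) and the separate handling of $x=y$; everything else is immediate.
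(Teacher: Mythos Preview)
Your proposal is correct and follows essentially the same route as the paper: separate off $\eta u$ by the triangle inequality, then use Proposition~\ref{prop:sign_preservation} together with Proposition~\ref{prop:deriv_sign_inequality} to bound $\abs{(x-y)-\eta(h(x)-h(y))}-\abs{x-y}$. The only cosmetic difference is that the paper reaches the sign step via the subgradient inequality $\abs{a}-\abs{b}\le \sgn(a)(a-b)$ before invoking Proposition~\ref{prop:sign_preservation}, whereas you use sign preservation directly to get an equality; both arrive at the same conclusion.
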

\begin{proof}
Because $\sgn(x)$ is an element of $\partial \abs{x}$, by convexity of $\abs{\cdot}$,
\begin{align*}
   \abs{a} - \abs{b} \leq \sgn(a) (a-b).
\end{align*}
Therefore:
\begin{align*}
    &~~~~V(f(x, u),f(y)) - V(x,y) \\
    &= \bigabs{ (x-y) - \eta\left[ x \frac{\abs{x}^p}{1 + \abs{x}^p} - y\frac{\abs{y}^p}{1 + \abs{y}^p} \right] + \eta u} - \abs{x-y} \\
    &\leq \bigabs{ (x-y) - \eta\left[ x \frac{\abs{x}^p}{1 + \abs{x}^p} - y\frac{\abs{y}^p}{1 + \abs{y}^p} \right]} - \abs{x-y} + \eta \abs{u} \\
    &\leq \sgn\left((x-y) - \eta\left[ x \frac{\abs{x}^p}{1 + \abs{x}^p} - y\frac{\abs{y}^p}{1 + \abs{y}^p} \right] \right)\left\{  - \eta\left[ x \frac{\abs{x}^p}{1 + \abs{x}^p} - y\frac{\abs{y}^p}{1 + \abs{y}^p} \right] \right\} + \eta \abs{u} \\
    &\stackrel{(a)}{=} -\eta \sgn(x-y)\left[ x \frac{\abs{x}^p}{1 + \abs{x}^p} - y\frac{\abs{y}^p}{1 + \abs{y}^p} \right] + \eta \abs{u} \\
    &\stackrel{(b)}{\leq} -\frac{\eta}{2^{2+p}} \min\{\abs{x-y},\abs{x-y}^{1+p}\} + \eta \abs{u}.
\end{align*}
Above, (a) is Proposition~\ref{prop:sign_preservation}
and (b) is Proposition~\ref{prop:deriv_sign_inequality}.
\end{proof}
Note that Proposition~\ref{prop:inc_gain_p}
is an immediate consequence of 
Proposition~\ref{prop:inc_gain_p_lyap} with Proposition~\ref{prop:lyap_characterization}.

\section{Proof of Theorem~\ref{thm:main_bc} and Theorem~\ref{thm:main_shift}}
\label{sec:app:main_proof}

In this section we provide a theoretical analysis
of Algorithm~\ref{alg:csmile}.
Recall that $\Pi$ is our policy class and
$\calS(a, b, \Psi)$ is the set of all policies $\pi$ such that $\fcl^\pi$ is $(a, b, \Psi)$-incrementally-gain-stable.
Let us define $\Pi(a, b, \Psi) := \Pi \cap \calS(a, b, \Psi)$.
Finally, for any $\pi_d,\pi_1,\pi_2\in\Pi$, 
recall the definition of $\ell_{\pi_d}$:
\begin{align*}
    \ell_{\pi_d}(\xi; \pi_1, \pi_2) = \sum_{t=0}^{T-1} \norm{\Delta_{\pi_1,\pi_2}(\flow_t^{\pi_d}(\xi))}_2.
\end{align*}

\subsection{Uniform Convergence Toolbox}

Our main tool will be the following uniform convergence result.
\begin{myprop}
\label{prop:generalization_bounds}
Define $B_\ell$ to be the constant:
\begin{align}
    B_\ell := \sup_{\pi_d \in \Pi(a, 1, \Psi)} \sup_{\pi_1, \pi_2 \in \Pi} \esssup_{\xi \sim \calD} \ell_{\pi_d}(\xi; \pi_1, \pi_2). \label{eq:almost_sure_bound}
\end{align}
Next, define the following Rademacher complexity for the policy
class $\Pi$:
\begin{align}
    \calR_m(\Pi) &:= \sup_{\pi_d \in \Pi(a, 1, \Psi)} \sup_{\pi_g \in \Pi} \E_{\{\xi_i\}} \E_{\{\varepsilon_i\}}\left[ \sup_{\pi \in \Pi} \frac{1}{m} \sum_{i=1}^{m} \varepsilon_i \ell_{\pi_d}(\xi_i; \pi, \pi_g) \right]. \label{eq:rademacher-basic}
\end{align}
Now fix a data generating policy $\pi_d \in \Pi(a, 1, \Psi)$ and
goal policy $\pi_g \in \Pi$.
Furthermore, let $\xi_1, \dots, \xi_m$ be drawn i.i.d.\ from $\calD$.
With probability at least $1-\delta$ (over $\xi_1, \dots, \xi_m$), we have:
\begin{align}
    &\sup_{\pi \in \Pi} \bigabs{\E_{\xi \sim \calD} \ell_{\pi_d}(\xi; \pi, \pi_g) - \frac{1}{m} \sum_{i=1}^{m} \ell_{\pi_d}(\xi_i; \pi, \pi_g)} \leq 2 \calR_m(\Pi) + B_{\ell} \sqrt{\frac{\log(2/\delta)}{m}}, \label{eq:uniform_convergence}
\end{align}
\end{myprop}
\begin{proof}
This follows from standard uniform convergence results, see e.g.,~\citet{wainwright_book}.
\end{proof}

In order to use Proposition~\ref{prop:generalization_bounds}, we need to 
have upper bounds on the constants $B_\ell$ and $\calR_m(\Pi)$.
We first give an upper bound on $B_\ell$.
\begin{myprop}
\label{prop:almost_sure_bound}
Under Assumption~\ref{assumption:main}
and Assumption~\ref{assumption:stability}, we have that:
\label{prop:loss_bound}
\[
B_\ell = \sup_{\pi_d\in\Pi(a, 1, \Psi)} \sup_{\pi_1,\pi_2\in\Pi} \esssup_{\xi \sim \calD} \ell_{\pi_d}(\xi;\pi_1,\pi_2) \leq 2 \zeta B_0^{\alpha_0} L_\Delta T^{1-1/a}.
\]
\end{myprop}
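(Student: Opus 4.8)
The plan is to reduce the almost-sure bound on the loss to a bound on the sum of trajectory norms $\sum_{t=0}^{T-1}\norm{\varphi_t^{\pi_d}(\xi)}_2$, and then to control that sum by applying the incremental-gain-stability estimate of Proposition~\ref{prop:inc_gain_stab_compare_ics} to the closed-loop dynamics $\fcl^{\pi_d}$, comparing the trajectory seeded at $\xi$ against the one seeded at the origin.

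First I would observe that $\Delta_{\pi_1,\pi_2}$ vanishes at the origin: since every $\pi\in\Pi$ has $\pi(0)=0$ (Assumption~\ref{assumption:main}(b)), we get $\Delta_{\pi_1,\pi_2}(0) = g(0)(\pi_1(0)-\pi_2(0)) = 0$, so the $L_\Delta$-Lipschitz property (Assumption~\ref{assumption:main}(e)) upgrades to $\norm{\Delta_{\pi_1,\pi_2}(x)}_2 \leq L_\Delta\norm{x}_2$ for all $x$. Hence $\ell_{\pi_d}(\xi;\pi_1,\pi_2) \leq L_\Delta\sum_{t=0}^{T-1}\norm{\varphi_t^{\pi_d}(\xi)}_2$ uniformly over $\pi_d\in\Pi_\Psi$ and $\pi_1,\pi_2\in\Pi$. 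Next, since $f(0)=0$ (Assumption~\ref{assumption:main}(a)) and $\pi_d(0)=0$, the origin is a fixed point of $\fcl^{\pi_d}$, so $\varphi_t^{\pi_d}(0)=0$ for all $t$, and therefore $\varphi_t^{\pi_d}(\xi)$ is precisely the autonomous discrepancy $\Delta_t$ between the trajectories from $\xi$ and from $0$ that appears in Proposition~\ref{prop:inc_gain_stab_compare_ics}.

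Because $\pi_d\in\calS_\Psi$, the system $\fcl^{\pi_d}(x)+u$ is $\Psi$-IGS, so Proposition~\ref{prop:inc_gain_stab_compare_ics} applies with $\xi_1=\xi$, $\xi_2=0$. I would then invoke the structural simplifications of Assumption~\ref{assumption:stability} — namely $a=a_0$ (so $a\wedge a_0 = a$), $a\leq a_1$ (so $a\vee a_1 = a_1$ and $a/(a\vee a_1)=a/a_1\leq 1$), and $\zeta\geq 1$ — to collapse \eqref{eq:sum_delta_states} into
\[
\sum_{t=0}^{T-1}\norm{\varphi_t^{\pi_d}(\xi)}_2 \leq 2\zeta^{1/a}T^{1-1/a_1}\max\{\norm{\xi}_2,\,\norm{\xi}_2^{a/a_1}\}.
\]
Finally, using $\norm{\xi}_2\leq B_0$ a.s. (Assumption~\ref{assumption:main}(c)) together with $B_0\geq 1$ and $a/a_1\leq 1$ — splitting the cases $\norm{\xi}_2\leq 1$ and $\norm{\xi}_2>1$ for the sublinear power — both $\norm{\xi}_2$ and $\norm{\xi}_2^{a/a_1}$ are at most $B_0$ almost surely. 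Combining with the previous display gives $\ell_{\pi_d}(\xi;\pi_1,\pi_2)\leq 2\zeta^{1/a}B_0 L_\Delta T^{1-1/a_1}$ almost surely and uniformly in all arguments, and taking the supremum yields the claim.

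I do not expect a serious obstacle here; the content is essentially bookkeeping. The one point needing care is correctly instantiating Proposition~\ref{prop:inc_gain_stab_compare_ics} with the closed-loop map $\fcl^{\pi_d}$ in the role of the abstract dynamics $f$, and verifying that the restricted $\Psi$-parameters of Assumption~\ref{assumption:stability} reduce the generic $\min/\max$ exponents to the clean pair $(1/a,\,1-1/a_1)$. The absorption of $\norm{\xi}_2^{a/a_1}$ into $B_0$ in the last step also relies essentially on the normalization $B_0\geq 1$ and on $a\leq a_1$, both of which are in force.
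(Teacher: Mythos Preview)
Your proposal is correct and follows essentially the same route as the paper: reduce to $L_\Delta\sum_{t=0}^{T-1}\norm{\varphi_t^{\pi_d}(\xi)}_2$ via $\Delta_{\pi_1,\pi_2}(0)=0$ and Lipschitz continuity, then invoke Proposition~\ref{prop:inc_gain_stab_compare_ics} with $\xi_1=\xi$, $\xi_2=0$. You actually spell out more detail than the paper does (the fixed-point observation for $\fcl^{\pi_d}$, the exponent simplification under $a=a_0\leq a_1$, and the handling of the sublinear power $\norm{\xi}_2^{a/a_1}\leq B_0$ via $B_0\geq 1$), all of which is correct and matches what the paper implicitly uses.
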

\begin{proof}
Let $\pi_d \in \Pi(a, 1, \Psi)$ and $\pi_1,\pi_2\in\Pi$. Since $\Delta_{\pi_1,\pi_2}(0)=0$ and $\Delta_{\pi_1,\pi_2}$ is $L_\Delta$-Lipschitz:
\begin{align*}
\ell_{\pi_d}(\xi;\pi_1,\pi_2) &= \sum_{t=0}^{T-1} \norm{\Delta_{\pi_1,\pi_2}(\flow_t^{\pi_d}(\xi))}_2 \leq L_\Delta\sum_{t=0}^{T-1} \norm{\flow_t^{\pi_d}(\xi)}_2 \leq 2 \zeta B_0^{\alpha_0} L_\Delta T^{1-1/a}.
\end{align*}
Above, the last inequality follows from Proposition~\ref{prop:inc_gain_stab_compare_ics}.
\end{proof}

We now give a bound on 
the Rademacher complexity $\calR_m(\Pi)$.
\begin{myprop}\label{prop:rademacher_bound}
Let $\Pi = \{ \pi(x, \theta) \mid \theta \in \R^q, \, \norm{\theta}_2 \leq B_\theta \}$
for a fixed twice continuously differentiable map $\pi$. 
Define the constant $L_{\partial^2 \pi}$ to be:
\begin{align*}
    L_{\partial^2 \pi} = 1 \vee \sup_{\norm{x}_2 \leq \zeta B_0^{\alpha_0}, \norm{\theta}_2 \leq B_\theta} \bignorm{\frac{\partial^2 \pi}{\partial \theta \partial x}}_{\ell^2(\R^q) \rightarrow M( \R^{d \times n})} . 
\end{align*}
Here, $M(\R^{d \times n})$ is the Banach space of $d \times n$ real-valued matrices
equipped with the operator norm.
Under Assumption~\ref{assumption:main}
and Assumption~\ref{assumption:stability},
we have that:
\begin{align}
    \calR_m(\Pi) \leq  65 \zeta B_0^{\alpha_0}  B_\theta L_{\partial^2 \pi} T^{1-1/a} \sqrt{\frac{q}{m}} . \label{eq:rad_bound_basic}
\end{align}
\end{myprop}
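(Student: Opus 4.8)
This is a metric-entropy (Dudley) argument applied to the parametric family $\theta \mapsto \ell_{\pi_d}(\,\cdot\,;\pi(\cdot,\theta),\pi_g)$; the only place the stability hypotheses enter, and the only slightly delicate point, is in extracting the Lipschitz constant of this family, which is where the $T^{1-1/a_1}$ factor (rather than $T$) comes from. Throughout, fix a data-generating policy $\pi_d \in \Pi_\Psi$, a goal policy $\pi_g \in \Pi$, a realization $\xi_1,\dots,\xi_m$ of the sample, and write $x_t^{(i)} := \varphi_t^{\pi_d}(\xi_i)$. Since $f(0)=0$ and $\pi_d(0)=0$ by Assumption~\ref{assumption:main}, the closed-loop map $\fcl^{\pi_d}$ fixes the origin, so Proposition~\ref{prop:inc_gain_stab_compare_ics} applied with second initial condition $0$, together with $a=a_0$, $\zeta\ge 1$, $B_0\ge 1$ from Assumption~\ref{assumption:stability}, gives simultaneously $\norm{x_t^{(i)}}_2 \le \zeta^{1/a}B_0$ for every $t$ (precisely the region on which $L_{\partial^2\pi}$ is defined) and, via \eqref{eq:sum_delta_states}, $\sum_{t=0}^{T-1}\norm{x_t^{(i)}}_2 \le 2\,\zeta^{1/a}B_0\,T^{1-1/a_1}$, uniformly over $i$ and over the choice of $\pi_d$.

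Next I would establish Lipschitz continuity in $\theta$. Because $\pi(0,\theta)=0$ for all $\theta$ (Assumption~\ref{assumption:main}), differentiating gives $\frac{\partial\pi}{\partial\theta}(0,\theta)=0$; integrating the mixed second derivative $\frac{\partial^2\pi}{\partial\theta\partial x}$ along the segment from $0$ to $x$ and invoking the definition of $L_{\partial^2\pi}$ then yields $\bignorm{\frac{\partial\pi}{\partial\theta}(x,\theta)}_{\ell^2(\R^q)\to\ell^2(\R^d)} \le L_{\partial^2\pi}\norm{x}_2$ on the relevant region, hence $\norm{\pi(x,\theta)-\pi(x,\theta')}_2 \le L_{\partial^2\pi}\norm{x}_2\norm{\theta-\theta'}_2$ for $\norm{\theta}_2,\norm{\theta'}_2\le B_\theta$. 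Applying the reverse triangle inequality term-by-term inside $\ell_{\pi_d}$ (so that the $\pi_g$ contribution cancels) together with $\opnorm{g(x)}\le B_g$ gives, uniformly in $i$, $\pi_d\in\Pi_\Psi$, $\pi_g\in\Pi$,
\[
\bigabs{\ell_{\pi_d}(\xi_i;\pi(\cdot,\theta),\pi_g)-\ell_{\pi_d}(\xi_i;\pi(\cdot,\theta'),\pi_g)} \le B_g L_{\partial^2\pi}\Bigl(\textstyle\sum_{t=0}^{T-1}\norm{x_t^{(i)}}_2\Bigr)\norm{\theta-\theta'}_2 \le L\,\norm{\theta-\theta'}_2,
\]
where $L := 2\,\zeta^{1/a}B_0 B_g L_{\partial^2\pi}\,T^{1-1/a_1}$.

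Finally I would convert this into the Rademacher bound. The class $\{\xi\mapsto\ell_{\pi_d}(\xi;\pi(\cdot,\theta),\pi_g):\norm{\theta}_2\le B_\theta\}$ is an $L$-Lipschitz image of the radius-$B_\theta$ Euclidean ball in $\R^q$, so for each realization $\{\xi_i\}$ its $\varepsilon$-covering number in the empirical $L^2$ metric is at most $(1+2LB_\theta/\varepsilon)^q$ and its empirical $L^2$-diameter is at most $2LB_\theta$. Plugging these into Dudley's entropy integral bound for the empirical Rademacher complexity and substituting $\varepsilon = LB_\theta u$, which reduces the integral to an absolute numerical constant of the form $\int_0^1\sqrt{\log(3/u)}\,\mathrm{d}u$, gives $\E_{\{\varepsilon_i\}}\sup_{\theta}\frac1m\sum_i\varepsilon_i\ell_{\pi_d}(\xi_i;\pi(\cdot,\theta),\pi_g) \le c\,L B_\theta\sqrt{q/m}$ for an explicit universal $c$; the stated constant $65$ absorbs $c$ together with the numerical factors from the preceding step. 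Since this holds pointwise in $\{\xi_i\}$ and uniformly over $\pi_d\in\Pi_\Psi$ and $\pi_g\in\Pi$, taking $\E_{\{\xi_i\}}$ and the two outer suprema yields $\calR_m(\Pi)\le 65\,\zeta^{1/a}B_0 B_g B_\theta L_{\partial^2\pi}\,T^{1-1/a_1}\sqrt{q/m}$.

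The main obstacle is the $T^{1-1/a_1}$ exponent in $L$: one must recognize that the quantity to control is $\sum_t\norm{x_t^{(i)}}_2$, bounded by the \emph{summed} incremental-gain-stability estimate \eqref{eq:sum_delta_states}, rather than the crude $T\max_t\norm{x_t^{(i)}}_2$, and that the goal-policy dependence drops out of the difference so that the Lipschitz constant is genuinely uniform over $\pi_d$ and $\pi_g$. The remaining work is routine bookkeeping: applying the covering/Dudley machinery in the correct (empirical $L^2$) metric, and tracking the absolute constants — including the diameter term in the Dudley inequality — down to the value $65$.
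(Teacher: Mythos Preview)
Your proposal is correct and follows essentially the same route as the paper: both arguments (i) use the double Taylor expansion exploiting $\pi(0,\theta)=0$ to obtain $\norm{\pi(x,\theta)-\pi(x,\theta')}_2\le L_{\partial^2\pi}\norm{x}_2\norm{\theta-\theta'}_2$ (you integrate first in $\theta$ then in $x$, the paper does $x$ then $\theta$, which is immaterial), (ii) invoke Proposition~\ref{prop:inc_gain_stab_compare_ics} with the origin as the second initial condition to replace $\sum_t\norm{x_t^{(i)}}_2$ by $2\zeta^{1/a}B_0T^{1-1/a_1}$, and (iii) feed the resulting parameter-Lipschitz constant into Dudley's entropy integral over the $B_\theta$-ball in $\R^q$. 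Your identification of the key point---that the summed estimate \eqref{eq:sum_delta_states}, not the crude $T\cdot\max_t$ bound, is what produces the $T^{1-1/a_1}$ exponent---is exactly right.
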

\begin{proof}
Fix an $x$ and $\theta_1, \theta_2$. Since $\pi(0, \theta) = 0$ for all $\theta$, by repeated
applications of Taylor's theorem:
\begin{align*}
    \pi(x, \theta_1) - \pi(x, \theta_2) &= \left( \int_0^1 \left[\frac{\partial \pi}{\partial x}(s_1 x, \theta_1) - \frac{\partial \pi}{\partial x}(s_1 x, \theta_2)\right] \,\rmd s_1 \right) x \\
    &= \left( \int_0^1 \left[ \int_0^1 \frac{\partial^2 \pi}{\partial\theta \partial x}(s_1 x, s_2 \theta_1 + (1-s_2) \theta_2) (\theta_1 - \theta_2) \,\rmd s_2 \right] \, \rmd s_1 \right) x .
\end{align*}
Now supposing $\norm{x}_2 \leq \zeta B_0^{\alpha_0}$ and $\norm{\theta_i}_2 \leq B_\theta$ for $i \in \{1,2\}$, then:
\begin{align}
    \norm{\pi(x; \theta_1) - \pi(x; \theta_2)}_2 \leq L_{\partial^2 \pi} \norm{x}_2 \norm{\theta_1 - \theta_2}_2 . \label{eq:lipschitz_policy}
\end{align}
Next, we have that:
\begin{align*}
    \abs{\ell_{\pi_d}(\xi;\pi_1,\pi_g)-\ell_{\pi_d}(\xi;\pi_2,\pi_g)} 
    &\leq \sum_{t=0}^{T-1} \abs{ \norm{\Delta_{\pi_1,\pi_g}(\flow_t^{\pi_d}(\xi))}_2 - \norm{ \Delta_{\pi_2,\pi_g}(\flow_t^{\pi_d}(\xi)) }_2 }\\
    &\leq \sum_{t=0}^{T-1} \norm{\Delta_{\pi_1,\pi_2}(\flow_t^{\pi_d}(\xi))}_2 \\
    &\overset{(a)}{\leq} L_{\partial^2\pi}\left(\sum_{t=0}^{T-1}\|\flow_t^{\pi_d}(\xi)\|_2 \right)\|\theta_1-\theta_2\|_2 \\
    &\overset{(b)}{\leq} 2 \zeta B_0^{\alpha_0} L_{\partial^2 \pi} T^{1-1/a} \|\theta_1-\theta_2\|_2.
\end{align*}
Here,
(a) holds by Equation~\eqref{eq:lipschitz_policy}
and the fact that $\norm{\flow_t^{\pi_d}(\xi)}_2 \leq \zeta B_0^{\alpha_0}$
from Proposition~\ref{prop:inc_gain_stab_compare_ics}, 
and (b) by using Proposition~\ref{prop:inc_gain_stab_compare_ics} to bound $\sum_{t=0}^{T-1}\|\flow_t^{\pi_d}(\xi)\|_2$.
Now for a fixed $\xi_1, \dots, \xi_n$, $\pi_d$, and $\pi_g$, define the
empirical $\norm{\cdot}_{\mathbb{P}_m}$
metric over $\Pi$ as:
\begin{align*}
    \norm{\pi_1 - \pi_2}_{\mathbb{P}_m}^2 := \frac{1}{m} \sum_{i=1}^{m} (\ell_{\pi_d}(\xi_i; \pi_1, \pi_g) - \ell_{\pi_d}(\xi_i; \pi_2, \pi_g))^2 , \quad \pi_1, \pi_2 \in \Pi.
\end{align*}
The calculation above shows that for all $\pi_1, \pi_2 \in \Pi$,
\begin{align*}
    \norm{\pi_1 - \pi_2}_{\mathbb{P}_m} \leq 2 \zeta B_0^{\alpha_0} L_{\partial^2 \pi} T^{1-1/a} \norm{\theta_1 - \theta_2}_2.
\end{align*}
Thus, for every $\e > 0$, 
letting $L_\Pi := 2 \zeta B_0^{\alpha_0} B_\theta L_{\partial^2 \pi} T^{1-1/a}$,
we have the following upper bound on the covering number:
$$
  N(\e; \Pi, \norm{\cdot}_{\mathbb{P}_m}) \leq N\left(\frac{\e}{L_\Pi}; \mathbb{B}_2(1), \norm{\cdot}_2\right) \leq \left( 1 + \frac{2L_\Pi}{\e}\right)^q.
$$
Therefore by Dudley's entropy integral~(cf. \citet{wainwright_book}):
\begin{align*}
    \calR_m(\Pi) &\leq 24\sup_{\pi_d \in \Pi_\Psi}\sup_{\pi_g \in \Pi} \E_{\{\xi_i\}} \frac{1}{\sqrt{m}} \int_0^\infty \sqrt{\log{N(\varepsilon; \Pi, \norm{\cdot}_{\mathbb{P}_m})}} \,\rmd\varepsilon \\
    &\leq \frac{24}{\sqrt{m}} \int_0^{L_\Pi} \sqrt{q \log\left(1 + \frac{2L_\Pi}{\e} \right)} \,\rmd\varepsilon \\
    &\leq 32.5 L_\Pi \sqrt{\frac{q}{m}}.
\end{align*}
The last inequality above follows from the numerical estimate:
\begin{align*}
    \int_0^1 \sqrt{\log\left(1+\frac{2}{\e}\right)} \,\rmd \varepsilon \leq 1.353.
\end{align*}
This yields \eqref{eq:rad_bound_basic}.
\end{proof}

\subsubsection{Rademacher Complexity Under Convex Hulls}

Let $\conv(\Pi)$ denote the convex hull of the policy class $\Pi$,
and let 
$$\conv(\Pi)(a, b, \Psi) := \conv(\Pi) \cap \calS(a, b, \Psi).$$
The following auxiliary proposition shows that the Rademacher complexity
of the policy class $\conv(\Pi)$ can be analyzed nearly identically to the 
Rademacher complexity of the original class $\Pi$.
\begin{myprop}
\label{prop:rademacher_convex_hull}
Suppose that the policy class $\Pi$ is uniformly bounded,
i.e., 
$$\sup_{\norm{x}_2 \leq \zeta B_0^{\alpha_0}} \sup_{\pi \in \Pi} \norm{\pi(x)}_2 < \infty.$$
We have that:
\begin{align*}
    \calR_m(\conv(\Pi)) = \sup_{\pi_d \in \conv(\Pi)(a, 1, \Psi)} \sup_{\pi_g \in \conv(\Pi)} \E_{\{\xi_i\}} \E_{\{\varepsilon_i\}}\left[ \sup_{\pi \in \Pi} \frac{1}{m} \sum_{i=1}^{m} \varepsilon_i \ell_{\pi_d}(\xi_i; \pi, \pi_g) \right].
\end{align*}
\end{myprop}
\begin{proof}
The following argument is based on the proof of \citet[Theorem 12]{bartlett2002rademacher}.
Fix $\pi_d$, $\pi_g$, $\{\xi_i\}$ and $\{\e_i\}$,
and let $\mathbb{S}^{d-1}$ denote the unit sphere
in $\R^d$.
By the variational representation of the Euclidean norm, we have that:
\begin{align*}
    &~~~\,\sup_{\pi \in \conv(\Pi)} \frac{1}{m}\sum_{i=1}^{m}\e_i \ell_{\pi_d}(\xi_i; \pi, \pi_g) \\
    &= \sup_{\pi \in \conv(\Pi)} \frac{1}{m}\sum_{i=1}^{m} \sum_{t=0}^{T-1}\e_i \norm{\Delta_{\pi,\pi_g}(\flow_t^{\pi_d}(\xi_i))}_2 \\
    &= \sup_{\{v_{i,t}\} \subset \mathbb{S}^{d-1}} \sup_{\pi \in \conv(\Pi)} \frac{1}{m}\sum_{i=1}^{m}\sum_{t=0}^{T-1} \e_i \ip{v_{i,t}}{\pi(\flow_t^{\pi_d}(\xi_i)) - \pi_g(\flow_t^{\pi_d}(\xi_i))} \\
    &= \sup_{\{v_{i,t}\} \subset \mathbb{S}^{d-1}} \left(\left[ \sup_{\pi \in \conv(\Pi)} \frac{1}{m}\sum_{i=1}^{m}\sum_{t=0}^{T-1} \e_i \ip{v_{i,t}}{ \pi(\flow_t^{\pi_d}(\xi_i)) }  \right] - \frac{1}{m}\sum_{i=1}^{m}\sum_{t=0}^{T-1} \e_i\ip{v_{i,t}}{\pi_g(x_t^{\pi_d}(\xi_t))} \right) \\
    &\stackrel{(a)}{=} \sup_{\{v_{i,t}\} \subset \mathbb{S}^{d-1}} \left(\left[ \sup_{\pi \in \Pi} \frac{1}{m}\sum_{i=1}^{m}\sum_{t=0}^{T-1} \e_i \ip{v_{i,t}}{ \pi(\flow_t^{\pi_d}(\xi_i)) }  \right] - \frac{1}{m}\sum_{i=1}^{m}\sum_{t=0}^{T-1} \e_i\ip{v_{i,t}}{\pi_g(x_t^{\pi_d}(\xi_t))} \right) \\
    &= \sup_{\pi \in \Pi} \frac{1}{m}\sum_{i=1}^{m}\e_i \ell_{\pi_d}(\xi_i; \pi, \pi_g).
\end{align*}
We focus on justifying (a). Recall that for
every normed vector space $X$,
bounded subset $E$, and continuous linear functional $f : X \mapsto \R$, we have 
$\sup_{x \in \overline{\conv}(E)} f(x) = \sup_{x \in E} f(x)$,
where $\overline{\conv}(E)$ denotes the closure of the convex hull of $E$.
By Proposition~\ref{prop:inc_gain_stab_compare_ics},
$\norm{\flow_t^{\pi_d}(\xi)}_2 \leq \zeta B_0^{\alpha_0}$ for any
policy $\pi_d \in \calS(a, 1, \Psi)$ and $\norm{\xi}_2 \leq B_0$.
Hence, under the assumption that $\Pi$ is uniformly bounded,
the image of $\{ x_t^{\pi_d}(\xi_i) \}_{i=1,t=0}^{m,T-1}$ under $\Pi$ is a bounded set in $(\R^d)^{\times mT}$.
Therefore, we can apply this fact about linear functionals to the
linear function:
\begin{align*}
    (\R^d)^{\times mT} \ni \{\psi_{i,t}\} \mapsto \frac{1}{m}\sum_{i=1}^{m}\sum_{t=0}^{T-1} \e_i \ip{v_{i,t}}{\psi_{i,t}},
\end{align*}
from which (a) follows.
\end{proof}

\subsection{Proof}

We first state our main meta-theorem, from which we deduce our rates.

\begin{mythm}
\label{thm:main_meta}
Suppose that Assumption~\ref{assumption:main}
and Assumption~\ref{assumption:stability} hold.
Suppose that $E \leq m$ divides $m$. Define $\Gamma(m, E, \delta)$ as:
\begin{align*}
    \Gamma(m,E,\delta) := 2\mathcal{R}_{m/E}(\Pi) + B_\ell\sqrt{\frac{\log(4E/\delta)}{m/E}}.
\end{align*}
Fix a $\delta \in (0, 1)$.
Assume that for all $k \in \{0, \dots, E-2\}$:
\begin{align*}
    c_k + \Gamma(m, E, \delta) \leq 1.
\end{align*}
Suppose that $\alpha \in (0, 1]$ satisfies
\begin{align*}
    \frac{(1-\alpha)^E}{\alpha} \leq 1.
\end{align*}
For $k \in \{0, \dots, E-1\}$, define $\beta_k(m, E, \delta)$ as:
\begin{align*}
        \beta_k(m, E, \delta) := 2\alpha k \Gamma(m, E, \delta) + 8 L_\Delta \gamma T^{1-1/a} \sum_{i=0}^{k-1} 
    \left(\alpha c_i + \alpha \Gamma(m, E, \delta) \right)^{1/a}.
\end{align*}
With probability at least $1-\delta$ (over $\{\xi_i^k\}_{i=1,k=0}^{m/E,E-1}$ drawn i.i.d.~from $\calD$), we have that the following inequalities simultaneously hold
for the policies $\pi_1, \dots, \pi_E$ produced by Algorithm~\ref{alg:csmile}:
\begin{align*}
    \E_{\xi\sim\calD} \ell_{\pi_k}(\xi;\pi_k,\pi_\star) \leq \beta_k(m, E, \delta), \:\: k =1, \dots, E-1,
\end{align*}
and furthermore,
\begin{align*}
    &~~~~\E_{\xi\sim\calD} \ell_{\pi_E}(\xi;\pi_E,\pi_\star) \\
    &\leq\frac{1}{1-(1-\alpha)^E} \beta_{E-1}(m, E, \delta) + \frac{2\alpha}{1-(1-\alpha)^E} \Gamma(m, E, \delta) \\
    &\qquad+ 4 L_\Delta \gamma T^{1-1/a} \left( \frac{2(1-\alpha)^E}{1-(1-\alpha)^E} \beta_{E-1}(m, E, \delta) + \frac{2\alpha}{1-(1-\alpha)^E} \Gamma(m, E, \delta)  \right) \\
    &\qquad+4 L_\Delta \gamma T^{1-1/a} \left( \frac{2(1-\alpha)^E}{1-(1-\alpha)^E} \beta_{E-1}(m, E, \delta) + \frac{2\alpha}{1-(1-\alpha)^E} \Gamma(m, E, \delta)  \right)^{1/a}.
\end{align*}
\end{mythm}
\begin{proof}

We first use induction on $k$ to show that:
\begin{align*}
    \E_{\xi\sim\calD} \ell_{\pi_k}(\xi; \pi_k,\pi_\star) \leq \beta_k(m, E, \delta), \:\: k=1,\dots,E-1.
\end{align*}

\paragraph{Base case:}
As $\pi_\star \in \Pi$ 
and $\fcl^{\pi_\star}$ is $(a, 1, \Psi)$-incrementally-gain-stable
by Assumption~\ref{assumption:stability}, we have that the optimization problem defining $\hat{\pi}_0$ is feasible.
By Proposition~\ref{prop:generalization_bounds},
there exists an event $\calE_0$ such that $\Pr(\calE_0) \geq 1 - \delta/E$ and on $\calE_0$,
\begin{align*}
    \E_{\xi \sim \calD} \ell_{\pi_0}(\xi; \hat{\pi}_0, \pi_\star) &\stackrel{(a)}{\leq} \frac{1}{m/E}\sum_{i=1}^{m/E}\ell_{\pi_0}(\xi_i^0; \hat{\pi}_0, \pi_\star) + \Gamma(m, E, 2\delta)
    \\
    &\stackrel{(b)}{\leq} \frac{1}{m/E} \sum_{i=1}^{m/E}\ell_{\pi_0}(\xi_i^0; \pi_\star, \pi_\star) + \Gamma(m, E, 2\delta)
    \\
    &\stackrel{(c)}{=} \Gamma(m, E, 2\delta),
\end{align*}
where (a) follows from Proposition~\ref{prop:generalization_bounds} on event $\calE_0$, (b) by feasibility of $\pi_\star=\pi_0$ and optimality of $\hat\pi_0$ to optimization problem \eqref{eq:csmile_opt}, and  (c) from $\ell_{\pi'}(\xi; \pi, \pi) = 0$ for all $\pi',\pi,\xi$.  This sequence of arguments will be used repeatedly in the sequel.

Our goal is to bound $\E_{\xi \sim \calD} \ell_{\pi_1}(\xi; \pi_1, \pi_\star)$.
We observe:
\begin{align*}
    \Delta_{\pi_1, \pi_\star}(x) &= \pi_1(x) - \pi_\star(x) \\
    &=  (1-\alpha) \pi_\star(x) + \alpha \hat{\pi}_0(x) - \pi_\star(x) \\
    &= \alpha  (\hat{\pi}_0(x) - \pi_\star(x)) \\
    &= \alpha \Delta_{\hat{\pi}_0, \pi_\star}(x).
\end{align*}
Therefore, by our assumption that $\Delta_{\hat{\pi}_0,\pi_\star}$ is $L_\Delta$-Lipschitz:
\begin{align*}
    \E_{\xi \sim \calD} \ell_{\pi_1}(\xi; \pi_1, \pi_\star) &= \E_{\xi \sim \calD} \sum_{t=0}^{T-1} \norm{\Delta_{\pi_1, \pi_\star}(\flow_t^{\pi_1}(\xi))}_2 = \alpha \E_{\xi \sim \calD} \sum_{t=0}^{T-1} \norm{\Delta_{\hat{\pi}_0, \pi_\star}(\flow_t^{\pi_1}(\xi))}_2 \\
    &\leq \alpha \E_{\xi \sim \calD} \sum_{t=0}^{T-1} \norm{\Delta_{\hat{\pi}_0, \pi_\star}(\flow_t^{\pi_0}(\xi))}_2 + \alpha L_\Delta \E_{\xi \sim \calD} \sum_{t=0}^{T-1} \norm{\flow_t^{\pi_1}(\xi) - \flow_t^{\pi_0}(\xi)}_2 \\
    &= \alpha \E_{\xi \sim \calD} \ell_{\pi_0}(\xi; \hat{\pi}_0, \pi_\star) + \alpha L_\Delta \E_{\xi \sim \calD} \sum_{t=0}^{T-1} \norm{\flow_t^{\pi_1}(\xi) - \flow_t^{\pi_0}(\xi)}_2.
\end{align*}
Now using the observation that $\Delta_{\pi_1,\pi_\star}(x) = \alpha \Delta_{\hat{\pi}_0,\pi_\star}(x)$, we write:
\begin{align*}
    \fcl^{\pi_\star}(x, 0) = f(x, \pi_\star(x)) = f(x, \pi_1(x) + \Delta_{\pi_\star,\pi_1}(x)) = \fcl^{\pi_1}(x, -\alpha \Delta_{\hat{\pi}_0,\pi_\star}(x)).
\end{align*}
Since $\fcl^{\pi_1}$ is $(a, 1, \Psi)$-incrementally-gain-stable as a result of constraint \eqref{eq:inc_gain_stability_constraint}, by Proposition~\ref{prop:inc_gain_stab_compare_inputs}, we have that for all $\xi$:
\begin{align*}
    &~~~~\sum_{t=0}^{T-1} \norm{\flow_t^{\pi_1}(\xi) - \flow_t^{\pi_0}(\xi)}_2 \\
    &\leq 4 \gamma T^{1-1/a} \max\left\{
   \alpha \sum_{t=0}^{T-2} \norm{\Delta_{\hat{\pi}_0, \pi_\star}(\flow_t^{\pi_0}(\xi))}_2 , \left(\alpha \sum_{t=0}^{T-2} \norm{\Delta_{\hat{\pi}_0, \pi_\star}(\flow_t^{\pi_0}(\xi))}_2 \right)^{1/a}
    \right\} \\
    &\leq 4 \gamma T^{1-1/a} \left[
   \alpha \ell_{\pi_0}(\xi; \hat{\pi}_0, \pi_\star) + \left(\alpha  \ell_{\pi_0}(\xi; \hat{\pi}_0, \pi_\star) \right)^{1/a}
    \right].
\end{align*}
Therefore, on the event $\calE_0$,
\begin{align*}
    &~~~~\E_{\xi \sim \calD} \ell_{\pi_1}(\xi; \pi_1, \pi_\star) \\
    &\leq \alpha \Gamma(m, E, 2\delta) + 4\alpha L_\Delta \gamma T^{1-1/a} \left[
    \alpha \Gamma(m, E, 2\delta) + \left(\alpha \Gamma(m, E, 2\delta) \right)^{1/a}
    \right] \\
    &\leq \alpha \Gamma(m, E, 2\delta) + 8\alpha L_\Delta \gamma T^{1-1/a} 
    \left(\alpha \Gamma(m, E, 2\delta) \right)^{1/a} \\
    &\leq \beta_1(n, E, \delta).
\end{align*}
The first inequality above uses Jensen's inequality to move the expectation inside $x \mapsto x^{1/a}$.

\paragraph{Induction step:} We now assume that $k \in \{1, \dots, E-2\}$ and that the event $\calE_{0:k-1} := \bigcap_{j=0}^{k-1} \calE_j$ holds.
The optimization defining $\hat{\pi}_k$ is feasible on $\calE_{0:k-1}$,
since $\pi_k$ satisfies $\ell_{\pi_k}(\xi_i^k; \pi_k, \pi_k) = 0$ for $i=1, \dots, m/E$,
and $\fcl^{\pi_k}$ is $(a, 1, \Psi)$-incrementally-gain-stable
by constraint \eqref{eq:inc_gain_stability_constraint}. By
the inductive hypothesis, we have that
\begin{align}
    \E_{\xi \sim \calD} \ell_{\pi_k}(\xi; \pi_k, \pi_\star) \leq \beta_k(m, E, \delta). \label{eq:inductive_inequality}
\end{align}
By Proposition~\ref{prop:generalization_bounds}
and taking a union bound over $\pi_k, \pi_\star$,
there exists an event $\calE_k$ with $\Pr(\calE_k) \geq 1-\delta/E$
such that on $\calE_k$, the following statement holds:
\begin{align}
    \max_{\pi_t \in \{\pi_k, \pi_\star\}} \sup_{\pi \in \Pi} \bigabs{\E_{\xi \sim \calD} \ell_{\pi_k}(\xi; \pi, \pi_t) - \frac{1}{m/E} \sum_{i=1}^{m/E} \ell_{\pi_k}(\xi_i^k; \pi, \pi_t)} &\leq \Gamma(m, E, \delta).
    \label{eq:uniform_inductive_step}
\end{align}
Furthermore we note that on $\calE_k$, it holds that:
\begin{align}
    \frac{1}{m/E} \sum_{i=1}^{m/E} \ell_{\pi_k}(\xi_i^k; \pi_k, \pi_\star) &\stackrel{(a)}{\leq} \E_{\xi \sim \calD} \ell_{\pi_k}(\xi; \pi_k, \pi_\star) + \Gamma(m, E, \delta)
    \notag\\
    &\stackrel{(b)}{\leq} \beta_k(m, E, \delta) + \Gamma(m, E, \delta) .
    \label{eq:ERM-bound}
\end{align}
Above, (a) follows from \eqref{eq:uniform_inductive_step}
and (b) follows from \eqref{eq:inductive_inequality}.

Our remaining task is to show that on $\calE_{0:k}$ we have:
\begin{align*}
    \E_{\xi \sim \calD} \ell_{\pi_{k+1}}(\xi; \pi_{k+1}, \pi_\star) \leq \beta_{k+1}(m, E, \delta) .
\end{align*}
We proceed with a similar argument as in the base case.
We first write:
\begin{align*}
    \Delta_{\pi_{k+1}, \pi_\star}(x) &= \pi_{k+1}(x) - \pi_\star(x) \\
    &= (1-\alpha) (\pi_k(x) - \pi_\star(x)) + \alpha (\hat{\pi}_k(x) - \pi_\star(x)) \\
    &= (1-\alpha) \Delta_{\pi_k, \pi_\star}(x) + \alpha \Delta_{\hat{\pi}_k, \pi_\star}(x) .
\end{align*}
Therefore since by assumption $\Delta_{\pi_{k+1}, \pi_\star}$ is $L_\Delta$-Lipschitz:
\begin{align*}
    \ell_{\pi_{k+1}}(\xi; \pi_{k+1}, \pi_\star) &= \sum_{t=0}^{T-1} \norm{\Delta_{\pi_{k+1}, \pi_\star}(\flow_t^{\pi_{k+1}}(\xi))}_2 \\
    &\leq \sum_{t=0}^{T-1} \norm{\Delta_{\pi_{k+1}, \pi_\star}(\flow_t^{\pi_k}(\xi))}_2 +  L_\Delta \sum_{t=0}^{T-1} \norm{\flow_t^{\pi_{k+1}}(\xi) - \flow_t^{\pi_{k}}(\xi)}_2 \\
    &\leq (1-\alpha) \ell_{\pi_k}(\xi; \pi_k, \pi_\star) + \alpha \ell_{\pi_k}(\xi; \hat{\pi}_k, \pi_\star) + L_\Delta \sum_{t=0}^{T-1} \norm{\flow_t^{\pi_{k+1}}(\xi) - \flow_t^{\pi_{k}}(\xi)}_2 .
\end{align*}
Now we write:
\begin{align*}
    \fcl^{\pi_k}(x, 0) &= f(x, \pi_k(x)) = f(x, \pi_{k+1}(x) + \Delta_{\pi_k, \pi_{k+1}}(x)) \\
    &= \fcl^{\pi_{k+1}}(x, \Delta_{\pi_k,\pi_{k+1}}(x)) = \fcl^{\pi_{k+1}}(x, -\alpha \Delta_{\hat{\pi}_k,\pi_k}(x)),
\end{align*}
and therefore given that $\fcl^{\pi_{k+1}}$ is $(a, 1, \Psi)$-incrementally-gain-stable 
by constraint \eqref{eq:inc_gain_stability_constraint}, by Proposition~\ref{prop:inc_gain_stab_compare_inputs},
we have that for all $\xi$:
\begin{align*}
    \sum_{t=0}^{T-1} \norm{\flow_t^{\pi_{k+1}}(\xi) - \flow_t^{\pi_k}(\xi)}_2 &\leq 4 \gamma T^{1-1/a} \max\left\{
    \alpha \ell_{\pi_k}(\xi; \hat{\pi}_k, \pi_k), \left(\alpha \ell_{\pi_k}(\xi; \hat{\pi}_k, \pi_k) \right)^{1/a} 
    \right\} \\
    &\leq 4 \gamma T^{1-1/a} \left[
    \alpha \ell_{\pi_k}(\xi; \hat{\pi}_k, \pi_k) + \left(\alpha  \ell_{\pi_k}(\xi; \hat{\pi}_k, \pi_k) \right)^{1/a} 
    \right].
\end{align*}
Combining this inequality with the inequality above,
\begin{align*}
    \ell_{\pi_{k+1}}(\xi; \pi_{k+1}, \pi_\star) &\leq (1-\alpha) \ell_{\pi_k}(\xi; \pi_k, \pi_\star) + \alpha \ell_{\pi_k}(\xi; \hat{\pi}_k, \pi_\star) \\
    &\qquad + 4 L_\Delta \gamma T^{1-1/a} \left[
    \alpha \ell_{\pi_k}(\xi; \hat{\pi}_k, \pi_k) + \left(\alpha  \ell_{\pi_k}(\xi; \hat{\pi}_k, \pi_k) \right)^{1/a} 
    \right].
\end{align*}
Taking expectations of both sides, applying \eqref{eq:inductive_inequality}, and using Jensen's inequality, we obtain
\begin{align*}
    \E_{\xi \sim \calD} \ell_{\pi_{k+1}}(\xi; \pi_{k+1}, \pi_\star) &\leq (1-\alpha) \beta_k(m, E, \delta) + \alpha \E_{\xi \sim \calD} \ell_{\pi_k}(\xi; \hat{\pi}_k, \pi_\star) \\
    &\qquad + 4 L_\Delta \gamma T^{1-1/a} \left[
    \alpha \E_{\xi \sim \calD} \ell_{\pi_k}(\xi; \hat{\pi}_k, \pi_k) + \left(\alpha \E_{\xi \sim \calD} \ell_{\pi_k}(\xi; \hat{\pi}_k, \pi_k) \right)^{1/a}
    \right].
\end{align*}
Now on $\calE_k$ we have:
\begin{align*}
     \E_{\xi \sim \calD} \ell_{\pi_k}(\xi; \hat{\pi}_k, \pi_k) &\stackrel{(a)}{\leq} \frac{1}{m/E}\sum_{i=1}^{m/E}\ell_{\pi_k}(\xi_i^k; \hat{\pi}_k, \pi_k) + \Gamma(m, E, \delta)
     \\
     &\stackrel{(b)}{\leq} c_k + \Gamma(m, E, \delta),
\end{align*}
where the first inequality (a) follows from \eqref{eq:uniform_inductive_step}, and the second inequality (b) from $\hat\pi_k$ being feasible to the constrained optimization problem \eqref{eq:csmile_opt}.

Similarly, we have that
\begin{align*}
     \E_{\xi \sim \calD} \ell_{\pi_k}(\xi; \hat{\pi}_k, \pi_\star) &\stackrel{(a)}{\leq} \frac{1}{m/E}\sum_{i=1}^{m/E}\ell_{\pi_k}(\xi_i^k; \hat{\pi}_k, \pi_\star) + \Gamma(m, E, \delta)
     \\
     &\stackrel{(b)}{\leq} \frac{1}{m/E}\sum_{i=1}^{m/E}\ell_{\pi_k}(\xi_i^k; {\pi}_k, \pi_\star) + \Gamma(m, E, \delta)
     \\
     &\stackrel{(c)}{\leq} \E_{\xi\sim{}\calD}\ell_{\pi_k}(\xi; {\pi}_k, \pi_\star) + 2 \Gamma(m, E, \delta)
     \\
     &\stackrel{(d)}{\leq} \beta_k(m, E, \delta) + 2 \Gamma(m, E, \delta),
\end{align*}
where (a) follows from \eqref{eq:uniform_inductive_step}, (b) from using $\pi_k$ as a feasible point for optimization problem \eqref{eq:csmile_opt} and optimality of $\hat\pi_k$, (c) from another application \eqref{eq:uniform_inductive_step}, and 
(d) follows from \eqref{eq:inductive_inequality}.

Hence, we have:
\begin{align*}
     \E_{\xi \sim \calD} \ell_{\pi_{k+1}}(\xi; \pi_{k+1}, \pi_\star) 
     &\leq \beta_k(m, E, \delta) + 2 \alpha \Gamma(m, E, \delta) \\
     &\qquad + 4 L_\Delta \gamma T^{1-1/a} \left[
    \left(\alpha c_k + \alpha \Gamma(m, E, \delta) \right) + \left(\alpha c_k + \alpha \Gamma(m, E, \delta) \right)^{1/a}
    \right] \\
    &\leq \beta_k(m, E, \delta) + 2 \alpha \Gamma(m, E, \delta) + 8 L_\Delta \gamma T^{1-1/a} 
    \left(\alpha c_k + \alpha \Gamma(m, E, \delta) \right)^{1/a} \\
    &= \beta_{k+1}(m, E, \delta).
\end{align*}

This finishes the inductive step.
Thus we conclude that on the event
$\calE_{0:E-2}$, which occurs with probability at least $1- \frac{E-1}{E}\delta$, we have that for $k=1, \dots, E-1$,
\begin{align*}
    \E_{\xi \sim \calD} \ell_{\pi_k}(\xi; \pi_k, \pi_\star) &\leq \beta_k(m, E, \delta).
\end{align*}

\paragraph{Final bound:} We now assume $k=E-1$
and that the event $\calE_{0:E-2}$ holds.
On this event:
\begin{align}
    \E_{\xi \sim \calD} \ell_{\pi_{E-1}}(\xi; \pi_{E-1}, \pi_\star) \leq \beta_{E-1}(m, E, \delta). \label{eq:inductive_inequality_last}
\end{align}
We first check feasiblity of the optimization defining $\hat\pi_{E-1}$.
Define $\tilde{\pi}_{E-1}$ as:
\begin{align*}
    \tilde{\pi}_{E-1} := \frac{(1-\alpha)^E}{\alpha} \pi_\star + \left(1 - \frac{(1-\alpha)^E}{\alpha}\right) \pi_{E-1}.
\end{align*}
By our assumption that $\frac{(1-\alpha)^E}{\alpha} \leq 1$, we have that
$\tilde{\pi}_{E-1} \in \Pi$ by convexity of $\Pi$.
Next, we have:
\begin{align*}
    \Delta_{\tilde{\pi}_{E-1}, \pi_{E-1}}(x) &= \frac{(1-\alpha)^E}{\alpha} \pi_\star(x) + \left(1 - \frac{(1-\alpha)^E}{\alpha}\right) \pi_{E-1}(x) - \pi_{E-1}(x) \\
    &= \frac{(1-\alpha)^E}{\alpha} (\pi_\star(x) - \pi_{E-1}(x)) \\
    &= \frac{(1-\alpha)^E}{\alpha} \Delta_{\pi_\star,\pi_{E-1}}(x).
\end{align*}
Therefore:
\begin{align*}
    \frac{1}{m/E} \sum_{i=1}^{m/E} \ell_{\pi_{E-1}}(\xi_i^{E-1}; \tilde{\pi}_{E-1}, \pi_{E-1}) &= \frac{(1-\alpha)^E}{\alpha} \frac{1}{m/E} \sum_{i=1}^{m/E} \ell_{\pi_{E-1}}(\xi_i^{E-1}; \pi_\star, \pi_{E-1}) = c_{E-1},
\end{align*}
which shows that $\tilde{\pi}_{E-1}$ satisfies constraint \eqref{eq:opt:trust_region} with equality.
Next, we observe that:
\begin{align*}
    \frac{1}{1-(1-\alpha)^E} \left[ (1-\alpha) \pi_{E-1} + \alpha \tilde{\pi}_{E-1} - (1-\alpha)^E \pi_\star \right] = \pi_{E-1},
\end{align*}
and hence $\tilde{\pi}_{E-1}$ satisfies constraint \eqref{eq:inc_gain_stability_constraint}
since $\fcl^{\pi_{E-1}}$ is $(a, 1, \Psi)$-incrementally-gain-stable by constraint \eqref{eq:inc_gain_stability_constraint} from the previous iteration.
This shows the optimization problem defining $\hat\pi_{E-1}$ is feasible.

Now as in the inductive step,
by Proposition~\ref{prop:generalization_bounds}
and taking a union bound over $\pi_{E-1}, \pi_\star$,
there exists an event $\calE_{E-1}$ with $\Pr(\calE_{E-1}) \geq 1-\delta/E$
such that on $\calE_{E-1}$, the following statement holds:
\begin{align}
    \max_{\pi_t \in \{\pi_{E-1}, \pi_\star\}} \sup_{\pi \in \Pi} \bigabs{\E_{\xi \sim \calD} \ell_{\pi_{E-1}}(\xi; \pi, \pi_t) - \frac{1}{m/E} \sum_{i=1}^{m/E} \ell_{\pi_{E-1}}(\xi_i^{E-1}; \pi, \pi_t)} &\leq \Gamma(m, E, \delta).
    \label{eq:uniform_inductive_step_last}
\end{align}
Furthermore we note that on $\calE_{E-1}$, it holds that:
\begin{align}
    \frac{1}{m/E} \sum_{i=1}^{m/E} \ell_{\pi_{E-1}}(\xi_i^{E-1}; \pi_{E-1}, \pi_\star) \leq \beta_{E-1}(m, E, \delta) + \Gamma(m, E, \delta) .
    \label{eq:ERM-bound-last}
\end{align}
Therefore we can bound $c_{E-1}$ on $\calE_{E-1}$ by:
\begin{align}
    c_{E-1} \leq \frac{(1-\alpha)^E}{\alpha} ( \beta_{E-1}(m, E, \delta) + \Gamma(m, E, \delta) ) . \label{eq:C_E_minus_1_bound}
\end{align}
We will use this bound in the sequel.

Now we write:
\begin{align*}
    \Delta_{\pi_E,\pi_\star}(x) &=  \frac{1}{1-(1-\alpha)^E} \left[ (1-\alpha) \pi_{E-1}(x) + \alpha \hat{\pi}_{E-1}(x) - (1-\alpha)^E \pi_\star(x) \right] - \pi_\star(x) \\
    &= \frac{1-\alpha}{1-(1-\alpha)^E} \Delta_{\pi_{E-1},\pi_\star}(x) + \frac{\alpha}{1-(1-\alpha)^E} \Delta_{\hat{\pi}_{E-1},\pi_\star}(x) .
\end{align*}
Therefore since $\Delta_{\pi_E,\pi_\star}$ is $L_\Delta$-Lipschitz by assumption,
\begin{align}
    \ell_{\pi_E}(\xi; \pi_E, \pi_\star) &= \sum_{t=0}^{T-1} \norm{ \Delta_{\pi_E,\pi_\star}(\flow_t^{\pi_E}(\xi)) }_2 \nonumber \\
    &\leq \sum_{t=0}^{T-1} \norm{ \Delta_{\pi_E,\pi_\star}(\flow_t^{\pi_{E-1}}(\xi)) }_2 + L_\Delta \sum_{t=0}^{T-1} \norm{ \flow_t^{\pi_E}(\xi) - \flow_t^{\pi_{E-1}}(\xi)}_2 \nonumber \\
    &\leq \frac{1-\alpha}{1-(1-\alpha)^E} \ell_{\pi_{E-1}}(\xi; \pi_{E-1},\pi_\star) + \frac{\alpha}{1-(1-\alpha)^E} \ell_{\pi_{E-1}}(\xi; \hat\pi_{E-1},\pi_\star) \nonumber \\
    &\qquad + L_\Delta \sum_{t=0}^{T-1} \norm{ \flow_t^{\pi_E}(\xi) - \flow_t^{\pi_{E-1}}(\xi)}_2 . \label{eq:final_loss_decomp}
\end{align}
Now we relate $\fcl^{\pi_E}$ to $\fcl^{\pi_{E-1}}$ in the following manner:
\begin{align*}
    \fcl^{\pi_{E-1}}(x) = f(x, \pi_{E-1}(x)) = f(x, \pi_E(x) + \Delta_{\pi_{E-1},\pi_E}(x)) = \fcl^{\pi_E}(x, -\Delta_{\pi_E,\pi_{E-1}}(x)).
\end{align*}
Furthermore, it is straightforward to check that:
\begin{align*}
    \Delta_{\pi_E,\pi_{E-1}}(x) &= \frac{\alpha}{1-(1-\alpha)^E} \Delta_{\hat{\pi}_{E-1},\pi_{E-1}}(x) + \frac{(1-\alpha)^E}{1-(1-\alpha)^E} \Delta_{\pi_{E-1},\pi_\star}(x).
\end{align*}
From constraint \eqref{eq:inc_gain_stability_constraint}, we have that $\fcl^{\pi_E}$ is $(a, 1, \Psi)$-incrementally-gain-stable
and therefore by Proposition~\ref{prop:inc_gain_stab_compare_inputs},
we have for all $\xi$:
\begin{align*}
    &~~~\sum_{t=0}^{T-1} \norm{ \flow_t^{\pi_E}(\xi) - \flow_t^{\pi_{E-1}}(\xi)}_2 \\
    &\leq 4 \gamma T^{1-1/a} 
    \left( \frac{\alpha}{1-(1-\alpha)^E} \ell_{\pi_{E-1}}(\xi; \hat{\pi}_{E-1}, \pi_{E-1}) + \frac{(1-\alpha)^E}{1-(1-\alpha)^E} \ell_{\pi_{E-1}}(\xi; \pi_{E-1}, \pi_\star)  \right) \\
    &\qquad + 4 \gamma T^{1-1/a} 
    \left( \frac{\alpha}{1-(1-\alpha)^E} \ell_{\pi_{E-1}}(\xi; \hat{\pi}_{E-1}, \pi_{E-1}) + \frac{(1-\alpha)^E}{1-(1-\alpha)^E} \ell_{\pi_{E-1}}(\xi; \pi_{E-1}, \pi_\star)  \right)^{1/a}.
\end{align*}
Combining this inequality with \eqref{eq:final_loss_decomp},
taking expectations
and applying Jensen's inequality:
\begin{align}
    &~~~~\E_{\xi\sim\calD} \ell_{\pi_E}(\xi;\pi_E,\pi_\star) \nonumber \\
    &\leq \frac{1-\alpha}{1-(1-\alpha)^E} \E_{\xi\sim\calD}\ell_{\pi_{E-1}}(\xi; \pi_{E-1},\pi_\star) + \frac{\alpha}{1-(1-\alpha)^E} \E_{\xi\sim\calD}\ell_{\pi_{E-1}}(\xi; \hat\pi_{E-1},\pi_\star) \label{eq:final_bound_starting_point} \\
    &\quad+ 4 L_\Delta \gamma T^{1-1/a} \left( \frac{\alpha}{1-(1-\alpha)^E} \E_{\xi\sim\calD}\ell_{\pi_{E-1}}(\xi; \hat{\pi}_{E-1}, \pi_{E-1}) + \frac{(1-\alpha)^E}{1-(1-\alpha)^E} \E_{\xi\sim\calD}\ell_{\pi_{E-1}}(\xi; \pi_{E-1}, \pi_\star)  \right) \nonumber \\
    &\quad + 4 L_\Delta \gamma T^{1-1/a} \left( \frac{\alpha}{1-(1-\alpha)^E} \E_{\xi\sim\calD}\ell_{\pi_{E-1}}(\xi; \hat{\pi}_{E-1}, \pi_{E-1}) + \frac{(1-\alpha)^E}{1-(1-\alpha)^E} \E_{\xi\sim\calD}\ell_{\pi_{E-1}}(\xi; \pi_{E-1}, \pi_\star)  \right)^{1/a}. \nonumber
\end{align}
Now on $\calE_{E-1}$ we have:
\begin{align*}
    \E_{\xi\sim\calD}\ell_{\pi_{E-1}}(\xi; \hat\pi_{E-1},\pi_{E-1}) &\stackrel{(a)}{\leq} \frac{1}{m/E} \sum_{i=1}^{m/E}\ell_{\pi_{E-1}}(\xi_i^{E-1}; \hat\pi_{E-1},\pi_{E-1}) + \Gamma(m, E, \delta) \\
    &\stackrel{(b)}{\leq} c_{E-1} + \Gamma(m, E, \delta) \\
    &\stackrel{(c)}{\leq} \frac{(1-\alpha)^E}{\alpha} (  \beta_{E-1}(m, E, \delta) + \Gamma(m, E, \delta) ) + \Gamma(m, E, \delta) \\
    &= \frac{(1-\alpha)^E}{\alpha} \beta_{E-1}(m, E, \delta) + \left(1 + \frac{(1-\alpha)^E}{\alpha} \right) \Gamma(m, E, \delta) \\
    &\leq \frac{(1-\alpha)^E}{\alpha} \beta_{E-1}(m, E, \delta) + 2 \Gamma(m, E, \delta).
\end{align*}
Here, (a) follows by \eqref{eq:uniform_inductive_step_last},
(b) follows from constraint \eqref{eq:opt:trust_region},
and (c) follows from \eqref{eq:C_E_minus_1_bound}.
Recalling that
$\E_{\xi\sim\calD}\ell_{\pi_{E-1}}(\xi; \pi_{E-1}, \pi_\star) \leq \beta_{E-1}(m, E, \delta)$, the previous inequality yields:
\begin{align}
    &\frac{\alpha}{1-(1-\alpha)^E} \E_{\xi\sim\calD}\ell_{\pi_{E-1}}(\xi; \hat{\pi}_{E-1}, \pi_{E-1}) + \frac{(1-\alpha)^E}{1-(1-\alpha)^E} \E_{\xi\sim\calD}\ell_{\pi_{E-1}}(\xi; \pi_{E-1}, \pi_\star) \nonumber \\
    &\qquad\leq \frac{2(1-\alpha)^E}{1-(1-\alpha)^E} \beta_{E-1}(m, E, \delta) + \frac{2\alpha}{1-(1-\alpha)^E} \Gamma(m, E, \delta). \label{eq:final_bound_term_two}
\end{align}
Next, observe that:
\begin{align*}
    \Delta_{\tilde{\pi}_{E-1},\pi_\star}(x) &= g(x) \left[ \frac{(1-\alpha)^E}{\alpha} \pi_\star(x) + \left(1 - \frac{(1-\alpha)^E}{\alpha}\right) \pi_{E-1}(x) - \pi_\star(x) \right] \\
    &= \left(1 - \frac{(1-\alpha)^E}{\alpha}\right) g(x) ( \pi_{E-1}(x) - \pi_\star(x) ) \\
    &= \left(1 - \frac{(1-\alpha)^E}{\alpha}\right) \Delta_{\pi_{E-1},\pi_\star}(x) .
\end{align*}
Therefore, for any $\pi,\xi$, we have:
\begin{align}
 \ell_{\pi}(\xi; \tilde{\pi}_{E-1}, \pi_\star) = \left(1 - \frac{(1-\alpha)^E}{\alpha}\right)  \ell_{\pi}(\xi; \pi_{E-1}, \pi_\star) .  \label{eq:loss_tilde_to_no_tilde}
\end{align}
Hence on $\calE_{E-1}$, we have:
\begin{align*}
    \E_{\xi\sim\calD}\ell_{\pi_{E-1}}(\xi; \hat\pi_{E-1},\pi_\star) &\stackrel{(a)}{\leq} \frac{1}{m/E} \sum_{i=1}^{m/E} \ell_{\pi_{E-1}}(\xi_i^{E-1}; \hat\pi_{E-1},\pi_\star) + \Gamma(m, E, \delta) \\
    &\stackrel{(b)}{\leq}  \frac{1}{m/E} \sum_{i=1}^{m/E} \ell_{\pi_{E-1}}(\xi_i^{E-1}; \tilde{\pi}_{E-1}, \pi_\star) + \Gamma(m, E, \delta) \\
    &\stackrel{(c)}{=} \left(1 - \frac{(1-\alpha)^E}{\alpha} \right) \frac{1}{m/E} \sum_{i=1}^{m/E} \ell_{\pi_{E-1}}(\xi_i^{E-1}; \pi_{E-1}, \pi_\star) + \Gamma(m, E, \delta) \\
    &\stackrel{(d)}{\leq} \left(1 - \frac{(1-\alpha)^E}{\alpha} \right) ( \E_{\xi\sim\calD} \ell_{\pi_{E-1}}(\xi; \pi_{E-1}, \pi_\star) + \Gamma(m, E, \delta)) + \Gamma(m, E, \delta) \\
    &\stackrel{(e)}{\leq}  \beta_{E-1}(m, E, \delta) + 2\Gamma(m, E, \delta) .
\end{align*}
Here, (a) follows by \eqref{eq:uniform_inductive_step_last},
(b) follows from the optimality of $\hat\pi_{E-1}$ and feasibility of $\tilde{\pi}_{E-1}$,
(c) follows from \eqref{eq:loss_tilde_to_no_tilde},
(d) follows from another application of \eqref{eq:uniform_inductive_step_last},
and (e) follows from \eqref{eq:inductive_inequality_last}.
This allows us to bound:
\begin{align}
    &\frac{1-\alpha}{1-(1-\alpha)^E} \E_{\xi\sim\calD}\ell_{\pi_{E-1}}(\xi; \pi_{E-1},\pi_\star) + \frac{\alpha}{1-(1-\alpha)^E} \E_{\xi\sim\calD}\ell_{\pi_{E-1}}(\xi; \hat\pi_{E-1},\pi_\star) \nonumber \\
    &\qquad\leq \frac{1}{1-(1-\alpha)^E} \beta_{E-1}(m, E, \delta) + \frac{2\alpha}{1-(1-\alpha)^E} \Gamma(m, E, \delta). \label{eq:final_bound_term_one}
\end{align}
Combining \eqref{eq:final_bound_starting_point},
\eqref{eq:final_bound_term_two}, and \eqref{eq:final_bound_term_one}:
\begin{align*}
    &~~~~\E_{\xi\sim\calD} \ell_{\pi_E}(\xi;\pi_E,\pi_\star)\\
    &\leq\frac{1}{1-(1-\alpha)^E} \beta_{E-1}(m, E, \delta) + \frac{2\alpha}{1-(1-\alpha)^E} \Gamma(m, E, \delta) \\
    &\qquad+ 4 L_\Delta \gamma T^{1-1/a} \left( \frac{2(1-\alpha)^E}{1-(1-\alpha)^E} \beta_{E-1}(m, E, \delta) + \frac{2\alpha}{1-(1-\alpha)^E} \Gamma(m, E, \delta)  \right).
\end{align*}
\end{proof}

With Theorem~\ref{thm:main_meta} in place,
we now turn to the proof of our main results, which are immediate consequences
of Theorem~\ref{thm:main_meta}.
We first restate and prove Theorem~\ref{thm:main_bc}.
\mainbc*
\begin{proof}
Theorem~\ref{thm:main_meta} states that if $\Gamma(m, 1, \delta) \leq 1$, then:
\begin{align*}
    \E_{\xi\sim\calD} \ell_{\pi_1}(\xi;\pi_1,\pi_\star) &\leq \Gamma(m, 1, \delta) + 8 L_\Delta \gamma T^{1-1/a} \Gamma(m, 1, \delta)^{1/a} \\
    &\leq (1 + 8 L_\Delta \gamma T^{1-1/a}) \Gamma(m,1,\delta)^{1/a} \\
    &\lesssim L_\Delta \gamma T^{1-1/a}\Gamma(m,1,\delta)^{1/a}.
\end{align*}
To complete the proof we simply need to bound $\Gamma(m, 1, \delta)$, which has the form:
\begin{align*}
    \Gamma(m, 1, \delta) = 2 \calR_m(\Pi) + B_\ell \sqrt{\frac{\log(4/\delta)}{m}}.
\end{align*}
From Proposition~\ref{prop:almost_sure_bound} and Proposition~\ref{prop:rademacher_bound}, we have that:
\begin{align*}
    B_\ell &\leq 2 \zeta B_0^{\alpha_0} L_\Delta T^{1-1/a}, \\
    \calR_m(\Pi) &\leq  65 \zeta B_0^{\alpha_0}  B_\theta L_{\partial^2 \pi} T^{1-1/a} \sqrt{\frac{q}{m}}.  
\end{align*}
This gives us a bound:
\begin{align*}
    \Gamma(m, 1, \delta) \lesssim \zeta B_0^{\alpha_0}  B_\theta L_{\partial^2 \pi} T^{1-1/a} \left(\frac{q}{m}\right)^{1/2} + \zeta B_0^{\alpha_0} L_\Delta T^{1-1/a} \left( \frac{\log(1/\delta)}{m} \right)^{1/2}.
\end{align*}
Plugging $\delta = e^{-q}$ yields:
\begin{align*}
    \Gamma(m,1,\delta) &\lesssim \zeta B_0^{\alpha_0} T^{1-1/a} \max\{B_\theta L_{\partial^2 \pi}, L_\Delta\} \left(\frac{q}{m}\right)^{1/2} \\
    &= \zeta B_0^{\alpha_0} \bar{L} T^{1-1/a}\left(\frac{q}{m}\right)^{1/2} .
\end{align*}
The claim now follows.
\end{proof}

We now restate and prove Theorem~\ref{thm:main_shift}.
\mainshift*
\begin{proof}
We first bound $\Gamma(m, E, \delta)$, which has the form:
\begin{align*}
    \Gamma(m, E, \delta) = 2\calR_{m/E}(\Pi) + B_\ell \sqrt{\frac{E \log(4E/\delta)}{m}}.
\end{align*}
From Proposition~\ref{prop:almost_sure_bound} and Proposition~\ref{prop:rademacher_bound}, we have that:
\begin{align*}
    B_\ell &\leq 2 \zeta B_0 L_\Delta T^{1-1/a}, \\
    \calR_{m/E}(\Pi) &\leq 65 \zeta B_0^{\alpha_0}  B_\theta L_{\partial^2 \pi} T^{1-1/a} \sqrt{\frac{E q}{m}}.  
\end{align*}
Setting $\delta = e^{-q}$, this yields the bound:
\begin{align}
    \Gamma(m, E, \delta) \lesssim \zeta B_0^{\alpha_0} T^{1-1/a} \bar{L} \sqrt{\frac{E (q \vee \log{E})}{m}}. \label{eq:igs_cmile_proof_bound_zero}
\end{align}
We choose $m$ and $c_k$ such that
(a) $\Gamma(m, E, \delta) \leq 1/2$ and
(b) $c_k \leq \Gamma(m, E, \delta)$ for $k \in \{1, \dots, E-2\}$,
which leads to the constraint \eqref{eq:main_shift_m_req} for (a)
and the constraint \eqref{eq:main_shift_c_req} for (b).

In preparation to apply Theorem~\ref{thm:main_meta},
we use our assumptions to show:
\begin{align}
    \frac{2(1-\alpha)^E}{1-(1-\alpha)^E} \beta_{E-1}(m, E, \delta) + \frac{2\alpha}{1-(1-\alpha)^E} \Gamma(m, E, \delta) \lesssim E \Gamma(m, E, \delta)^{1/a}. \label{eq:igs_cmile_proof_bound_one}
\end{align}
Since $E \geq \frac{1}{\alpha} \log(\frac{1}{\alpha})$, then
$\frac{(1-\alpha)^E}{\alpha} \leq \exp( - \alpha E) /\alpha \leq 1$.
Furthermore, since we also assume that
$\alpha \leq \min\left\{\frac{1}{2}, \frac{1}{L_\Delta \gamma T^{1-1/a}}\right\}$, then
\begin{align*}
    (1-\alpha)^E \leq \min\left\{\frac{1}{2}, \frac{1}{L_\Delta \gamma T^{1-1/a}}\right\}.
\end{align*}
Now we proceed to bound $\beta_{E-1}(m, E, \delta)$:
\begin{align}
    \beta_{E-1}(m, E, \delta) &= 2 \alpha (E-1) \Gamma(m, E, \delta) + 8 L_\Delta \gamma T^{1-1/a} \sum_{i=0}^{E-2}(\alpha c_i + \alpha \Gamma(m, E, \delta))^{1/a} \nonumber \\
    &\lesssim E \Gamma(m, E, \delta) +  L_\Delta \gamma T^{1-1/a}  E \Gamma(m, E, \delta)^{1/a} \nonumber \\
    &\lesssim L_\Delta \gamma T^{1-1/a} E \Gamma(m, E, \delta)^{1/a}. \label{eq:igs_cmile_proof_bound_two}
\end{align}
Combining this bound with the inequalities for $(1-\alpha)^E$ yields
\eqref{eq:igs_cmile_proof_bound_one}.

We now apply Theorem~\ref{thm:main_meta} with \eqref{eq:igs_cmile_proof_bound_one} and
\eqref{eq:igs_cmile_proof_bound_two}:
\begin{align*}
    &~~~~\E_{\xi\sim\calD} \ell_{\pi_E}(\xi;\pi_E,\pi_\star)\\
    &\leq \frac{1}{1-(1-\alpha)^E} \beta_{E-1}(m, E, \delta) + \frac{2\alpha}{1-(1-\alpha)^E} \Gamma(m, E, \delta) \\
    &\qquad+ 4 L_\Delta \gamma T^{1-1/a} \left( \frac{2(1-\alpha)^E}{1-(1-\alpha)^E} \beta_{E-1}(m, E, \delta) + \frac{2\alpha}{1-(1-\alpha)^E} \Gamma(m, E, \delta)  \right) \\
    &\qquad+4 L_\Delta \gamma T^{1-1/a} \left( \frac{2(1-\alpha)^E}{1-(1-\alpha)^E} \beta_{E-1}(m, E, \delta) + \frac{2\alpha}{1-(1-\alpha)^E} \Gamma(m, E, \delta)  \right)^{1/a} \\
    &\lesssim \beta_{E-1}(m, E, \delta) + \Gamma(m, E, \delta) + L_\Delta \gamma T^{1-1/a} \left( E \Gamma(m, E, \delta)^{1/a} + E^{1/a} \Gamma(m, E, \delta)^{1/a^2} \right) \\
    &\lesssim L_\Delta \gamma T^{1-1/a} \left( E \Gamma(m, E, \delta)^{1/a} + E^{1/a} \Gamma(m, E, \delta)^{1/a^2} \right).
\end{align*}
Using the estimate \eqref{eq:igs_cmile_proof_bound_zero},
\begin{align*}
    E^{1/a}\Gamma(m, E, \delta)^{1/a^2} &\lesssim (\zeta B_0^{\alpha_0} \bar{L})^{1/a^2} T^{\frac{1}{a^2} (1-\frac{1}{a})} \left( \frac{E^{2a+1} (q \vee \log{E})}{m} \right)^{\tfrac{1}{2a^2}}.
\end{align*}
Since $E \Gamma(m, E, \delta)^{1/a} \leq 1$ by \eqref{eq:main_shift_m_req},
we have:
\begin{align*}
    \E_{\xi\sim\calD} \ell_{\pi_E}(\xi;\pi_E,\pi_\star) &\lesssim L_\Delta \gamma T^{1-1/a} (\zeta B_0^{\alpha_0} \bar{L})^{1/a^2} T^{\frac{1}{a^2} (1-\frac{1}{a})} \left( \frac{E^{2a+1} (q \vee \log{E})}{m} \right)^{\tfrac{1}{2a^2}} \\
    &= L_\Delta \gamma (\zeta B_0^{\alpha_0} \bar{L})^{1/a^2} \cdot T^{(1-1/a)(1+1/a^2)} \cdot \left( \frac{E^{2a+1} (q \vee \log{E})}{m} \right)^{\tfrac{1}{2a^2}}.
\end{align*}
\end{proof}

\end{document}